\renewcommand{\div}{\operatorname{div}}
\newcommand{\diag}{\operatorname{diag}}
\newcommand{\Diff}{\operatorname{Diff}}
\newcommand{\SDiff}{\operatorname{SDiff}}
\newcommand{\Id}{\operatorname{Id}}
\newcommand{\erf}{\operatorname{erf}}
\DeclareMathOperator{\berf}{\mathbf{erf}}
\newtheorem{Assumption}{Assumption}
\newtheorem{Remark}{Remark}
\newtheorem{Proposition}{Proposition}
\newtheorem*{Proposition*}{Proposition}
\newtheorem{Theorem}{Theorem}
\newtheorem{Lemma}{Lemma}
\newtheorem*{Lemma*}{Lemma}
\NewDocumentCommand{\dint}{t\limits e{_^}}{%
  \mathop{
    \displaystyle\int
    \IfBooleanT{#1}{\limits}
    \IfValueT{#2}{_{#2}}
    \IfValueT{#3}{^{#3}}
  }%
}
 \title{Turning Normalizing Flows into Monge Maps with Geodesic Gaussian Preserving Flows} 
\author{\name Guillaume Morel \email guillaume.morel@imt-atlantique.fr \\  
\addr IMT Atlantique, LaTIM, U1101, Brest, France\\
\AND \name Lucas Drumetz \email lucas.drumetz@imt-atlantique.fr \\
\addr IMT Atlantique, Lab-STICC, UMR CNRS 6285, Brest, France \\
\AND \name Simon Benaichouche \email simon.benaichouche@imt-atlantique.fr \\
\addr IMT Atlantique, Lab-STICC, UMR CNRS 6285, Brest, France \\
\AND \name Nicolas Courty \email nicolas.courty@irisa.fr \\
\addr Université Bretagne sud, IRISA, UMR CNRS 6074, Vannes, France. \\
\AND \name François Rousseau \email francois.rousseau@imt-atlantique.fr \\ 
\addr IMT Atlantique, LaTIM, U1101, Brest, France}
\begin{document}

\maketitle

\begin{abstract}
  Normalizing Flows (NF) are powerful likelihood-based generative models that are able to trade off between expressivity and tractability to model complex densities. A now well established research avenue leverages optimal transport (OT) and looks for Monge maps, i.e. models with minimal effort between the source and target distributions. This paper introduces a method based on Brenier's polar factorization theorem to transform any trained NF into a more OT-efficient version without changing the final density. We do so by learning a rearrangement of the source (Gaussian) distribution that minimizes the OT cost between the source and the final density. The Gaussian preserving transformation is implemented with the construction of high dimensional divergence free functions and the path leading to the estimated Monge map is further constrained to lie on a geodesic in the space of volume-preserving diffeomorphisms thanks to Euler's equations. The proposed method leads to smooth flows with reduced OT costs for several existing models without affecting the model performance. The code is available here \url{https://github.com/morel-g/GPFlow}.
\end{abstract}

\section{Introduction}

Modeling high dimensional data is a central question in data science as they are ubiquitous in applications. Various tasks such as probabilistic inference, density estimation or sampling of new data require accurate probabilistic models that need to be defined efficiently. There exists a large variety of generative models in the literature. Among other approaches, diffusion / score based models \citep{Ho20, Sohl15, Song19}, variational autoencoders (VAES) \citep{Kingma13, Rezende14} and generative adversarial networks (GAN) \citep{Goodfellow14} are frequent choices, each with their strengths and weaknesses.

\textbf{Normalizing flows.} A fourth popular class of generative models is Normalizing flows (NF). NF models transform a known probability distribution (Gaussian in most cases) into a complex one allowing for efficient sampling and density estimation. To do so they use a diffeomorphism $\mathbf{f}: \mathbb{R}^d \rightarrow \mathbb{R}^d$ which maps a target probability distribution $\mu$ to the known source distribution $\nu = \mathbf{f}_{\#}\mu$  \citep{Dinh14, Rezende15}. In practice the flow satisfies the change of variables formula:
\begin{equation}
  \label{eq:change-variable}
  \log p_{\mu}(\mathbf{x}) = \log p_{\nu}(\mathbf{f}(\mathbf{x})) + \log |\det \nabla \mathbf{f}(\mathbf{x})|.
\end{equation}
There are many possible parameterizations of $\mathbf{f}$, usually relying on automatic differentiation to train their parameters via first order optimization algorithms. For density estimation applications, training is done by maximizing the likelihood of the observed data. The data are generally high dimensional and accessing $p_{\mu}(\mathbf{x})$ for a given $\mathbf{x}$ requires computing determinant of the Jacobian matrix of $\mathbf{f}$. This operation has a complexity of $O(d^3)$ in general and thus a requirement for the flow architecture is to have a tractable determinant of the Jacobian while remaining expressive enough \citep{Dinh14, Kingma18, Rezende15, Papamakarios19}.

\textbf{Optimal transport.} A diffeomorphism transforming any well-behaved distribution into another always exists in theory~\citep{Papamakarios19}. However, there can be many ways to transform one probability measure $\mu$ into another probability measure $\nu$, and therefore the function $\mathbf{f}$ is generally not unique. This has led to many proposed architectures in the literature \citep{Kingma18, Grathwohl18, Huang18, DeCao19, Papamakarios17}. The question of choosing the "best" transformation among all existing ones is therefore crucial, independently of how accurately $\mu$ models the target distribution. One way to make the architecture unique (under appropriate conditions on the two distributions) is to use optimal transport \citep{Hamfeldt19, Peyre18, Santambrogio15, Villani08}, that is to choose the one giving the Wasserstein distance between $\mu$ and $\nu$, with a squared $\mathcal{L}_{2}$ ground cost:
\begin{equation}
  \label{eq:ot-property}
  W_2^{2}(\mu,\nu) = \min_{\mathbf{f}} \int_{\mathbb{R}^d} |\mathbf{f}(\mathbf{x}) - \mathbf{x}|^2d\mu(\mathbf{x}), \quad \nu = \mathbf{f}_{\#}\mu,
\end{equation}
where $\mathbf{f}_\# \mu$ denotes the push forward operator of $\mu$ through $\mathbf{f}$. An optimal model in the sense of \eqref{eq:ot-property} minimizes the total mass displacement which can be a desirable property even if it is often a difficult task. In particular Brenier's theorem \citep{Brenier91} states that the optimal function $\mathbf{f}$ is the gradient of a scalar convex function, which is widely used when solving \eqref{eq:ot-property}.

One key property of OT mappings is that they should better preserve the structure of the distribution compared to non OT transformations. This makes them particularly appealing for machine learning applications, and may also help with generalization performance \citep{Karkar20}.

\textbf{Optimal transport and NF models.} Including OT in NF models has recently received much attention with various approaches to obtain a map $\mathbf{g}$ which satisfies the property \eqref{eq:ot-property}. Among all these methods, many use either directly Brenier's theorem \citep{Brenier91} or the dynamic OT formulation with the Benamou-Brenier approach \citep{Benamou2000}. One important remark is that most of the approaches considered need dedicated architectures in order to satisfy the OT property. For example the transformation $\mathbf{f}$ is often written as neural network modeling the gradient of a (possibly convex) scalar function \citep{Amos16, Finlay20, Huang20, Onken20, Zhang18}. This sometimes requires some particular training process \citep{Finlay20, Huang20, Onken20} and/or the addition of some penalization terms in the loss function \citep{Onken20, Finlay20-2, Yang19}. When considering the Benamou-Brenier formulation, the normalizing flow is interpreted as the discretization of a continuous ordinary differential equation \citep{Chen18} and the optimal transport problem is then solved dynamically \citep{Finlay20-2, Onken20, Zhang18}.

The methods mentioned above require to constrain the architecture and/or the training procedure which can be problematic in some cases and has already motivated other OT related approaches \citep{Uscidda23}. In this paper we present a method which achieve the optimal map of a given NF without constraining the architecture or the training procedure.

\begin{wrapfigure}[14]{R}{0.5\textwidth}
  \includegraphics[scale=0.2]{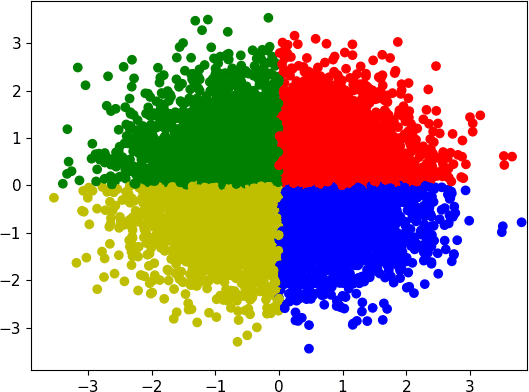}
  \hspace{0.2cm}
  \includegraphics[scale=0.2]{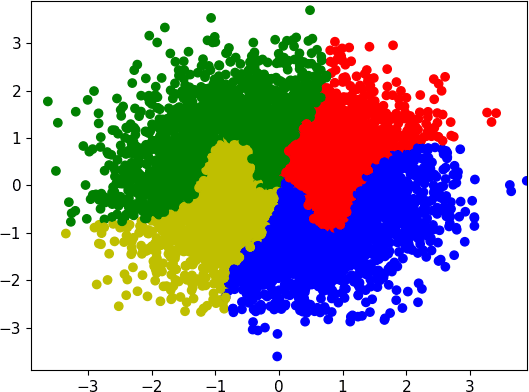}
  \caption{A GP transformation applied on particles sampled from a two-dimensional Gaussian distribution. The mean and standard deviation stay the same; only the positions of the particles change.}
  \label{fig:gp-flow}
\end{wrapfigure}

\subsection{Main contributions}

\textbf{Polar factorization.} An overlooked implication of Brenier's theorem is the so-called polar factorization theorem, that states that the optimal transport map $\nabla{\psi}$ solving~\eqref{eq:ot-property} can be factorized into the composition of two functions $\nabla{\psi} =\mathbf{s} \circ \mathbf{f}$, the function $\mathbf{f}$ being some arbitrary smooth map from $\mu$ to $\nu$ and  $\mathbf{s}$ an associated measure preserving function of $\nu$ \citep{Brenier91}. The idea we exploit is the possibility, from a given flow $\mathbf{f}$ (and its corresponding inverse $\mathbf{g} = \mathbf{f}^{-1}$),
to rearrange the distribution $\nu$ using $\mathbf{s}$ to obtain a new map reducing the OT cost without changing the distribution given by the push-forward $\mu = \mathbf{g}_{\#}\nu =  (\mathbf{g}\circ \mathbf{s}^{-1})_{\#}\nu $. The OT-improved map can then be obtained with the composition $\mathbf{g} \circ \mathbf{s}^{-1}$. Interestingly this property is not as popular as the previous one and to our knowledge is not used when dealing with OT and NF models. Yet normalizing flows can take advantage of this formulation mostly because the distribution $\nu$ is known and simple (here and in the following $\nu$ is a standard normal) which makes it possible to construct architectures preserving $\nu$.

\begin{figure*}[h!]
  \centering
  \includegraphics[width=.95\linewidth]{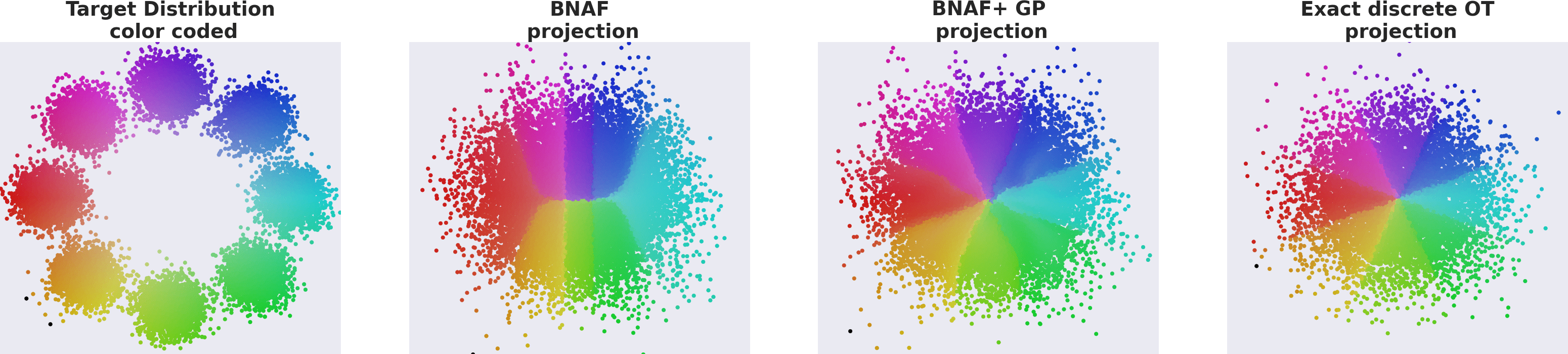}
  \caption{Eight Gaussians test case with colored distributions. A GP flow is trained on a pre-trained BNAF model \citep{DeCao19} to reduce the OT cost.}
  \label{8-gaussian-colored}
\end{figure*}

\textbf{Gaussian preserving flows.} Our work differs from the state of the art as we do not propose a new normalizing flow model. Instead, we propose to use Brenier's theorem to compute the Monge map for any pre-existing architecture. Indeed there exists a wide variety of architecture available in the literature \citep{Kingma18, Grathwohl18, Huang18, DeCao19, Papamakarios17} each with their pros and cons which sometimes depend specifically on the test case considered. Our idea is to use Brenier's polar factorization theorem to rearrange the points in the known distribution to obtain the optimal map associated with a given flow. We consider the most common case where the known distribution is a standard normal and call such rearranging maps Gaussian Preserving (GP) flows, see Figure \ref{fig:gp-flow}. An important point is that by construction our GP map will only change the OT cost of the model. The target density and therefore the training loss given by the model will stay the same. This allows us to take any pre-trained model and compute the associated Monge map, thus improving the model in terms of OT displacement from the source to the target distribution, without changing the modeled density see Figure \ref{8-gaussian-colored}.

\textbf{Construction of divergence free functions in high dimensions.} We show that divergence free functions can be used to model GP flows. We therefore derive an effective construction of divergence free functions in high dimensions and apply it in the latent space of some popular VAE models.

\textbf{Euler's equations.} Since several GP flow models can solve the same OT problem, we also look for a way to find the "best" GP flow.  This is somehow similar to the approaches from \citet{Finlay20-2, Onken20} where the trajectories of a continuous normalizing flow are penalized to be straight lines. This is not strictly needed to find the Monge map but can be interpreted as some geodesic over all the flows which solve the associated OT problem. In this work, we show that the geodesics associated with the OT problem are actually given by solutions to the Euler equations, following a celebrated result by Arnold~\citep{Arnold66}. The penalization of Euler's equations in high dimensions and its practical implementation is therefore also considered, which is to the best of our knowledge an original contribution.

\textbf{Data structure preservation with optimal transport.} Finally we show one potential interest of GP flows by studying the preservation of the data structure experimentally. More specifically we focus on the preservation of disentanglement on the dSprites \citep{dsprites17}, MNIST \citep{Lecun98} and Chairs \citep{Aubry14} datasets in some variational auto-encoder (VAE) latent space. On this particular example we show that OT allows to improve the preservation of the structure of the latent data points which is otherwise destroyed when applying the NF model.

\section{Polar factorization theorem}

The main idea is to use the Brenier's polar factorization theorem to construct the Monge map with a rearrangement of the known probability distribution $\nu$.

\textbf{Notation and measure preserving definition.} In the following we will use the shortened notation $(\mathcal{X}, \nu)$ for a probability space $(\mathcal{X}, \mathcal{B}, \nu)$ where $\mathcal{X}$ is a set, $\mathcal{B}$ a $\sigma$-algebra of its subset and $\nu$ a probability distribution. We recall the definition of measure preserving function use in \citet{Brenier91}: a measure preserving mapping from a probability space $(\mathcal{X}, \nu)$ into itself is a mapping $\mathbf{s}: \mathcal{X} \rightarrow \mathcal{X}$ such that for every $\nu$ measurable subset $A$ of $\mathcal{X}$, $\mathbf{s}^{-1}(A)$ is $\nu$-measurable and $\nu( \mathbf{s}^{-1}(A))= \nu(A)$.

\begin{Theorem}[Brenier's polar factorization \citep{Brenier91}]
  \label{theo:polar}
  Let $(\mathcal{X}, \nu)$ be a probability space, $\mathcal{X}$ bounded. Then for each $\mathbf{g} \in L^p(\mathcal{X}, \nu, \mathbb{R}^d)$ satisfying the non degeneracy condition
  \begin{equation*}
    \nu(\mathbf{g}^{-1}(E)) = 0 \text{ for each Lebesgue negligible subset E of } \mathbb{R}^d,
  \end{equation*}
  there exists a unique convex function $\psi: \mathcal{X} \rightarrow \mathbb{R}$ and a unique measure preserving function $\mathbf{s}: \mathcal{X} \rightarrow \mathcal{X}$ such that
  \begin{equation*}
    \mathbf{g}(\mathbf{s}(\mathbf{x})) = \nabla \psi(\mathbf{x}),
  \end{equation*}
  and $\mathbf{s}(\mathbf{x})$ minimizes the cost $\int_{\mathcal{X}}|\mathbf{g}(\mathbf{s}(\mathbf{x})) - \mathbf{x}|^2d\nu(\mathbf{x})$.
\end{Theorem}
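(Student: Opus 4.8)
The plan is to deduce the statement from the classical Brenier theorem asserting existence and uniqueness of the quadratic optimal transport map as the gradient of a convex function whenever one of the two marginals is absolutely continuous, which I take as known. The bridge is the push-forward measure $\mu := \mathbf{g}_{\#}\nu$ on $\mathbb{R}^d$: the non-degeneracy hypothesis on $\mathbf{g}$ says precisely that $\mu$ charges no Lebesgue-negligible set, i.e.\ $\mu \ll \mathcal{L}^d$. The minimization over measure-preserving $\mathbf{s}$ will then be recognized as the quadratic Monge problem between $\nu$ and $\mu$, whose minimizer is the sought map $\nabla\psi$, and the measure-preserving factor $\mathbf{s}$ will be recovered by pulling $\nabla\psi$ back through $\mathbf{g}$.

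\textbf{Main steps.} First, set $\mu := \mathbf{g}_{\#}\nu$; it is absolutely continuous with respect to $\mathcal{L}^d$ by non-degeneracy and has finite second moment thanks to the boundedness of $\mathcal{X}$ and the integrability of $\mathbf{g}$. Second, since $\mathbf{s}$ preserves $\nu$ the quantity $\int_{\mathcal X}|\mathbf{g}(\mathbf{s}(\mathbf{x}))|^2\,d\nu$ does not depend on $\mathbf{s}$, so that
\begin{equation*}
  \int_{\mathcal X}|\mathbf{g}(\mathbf{s}(\mathbf{x})) - \mathbf{x}|^2\,d\nu(\mathbf{x}) \;=\; \mathrm{const} \;-\; 2\int_{\mathcal X}\langle \mathbf{g}(\mathbf{s}(\mathbf{x})),\, \mathbf{x}\rangle\,d\nu(\mathbf{x}),
\end{equation*}
and because $(\mathbf{g}\circ\mathbf{s},\,\mathrm{Id})_{\#}\nu$ is a coupling of $\mu$ and $\nu$, the correlation term is bounded above by the Kantorovich value $\max_{\gamma\in\Pi(\mu,\nu)}\int\langle y, \mathbf{x}\rangle\,d\gamma$. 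Third, invoke Brenier's theorem for the pair $(\mu,\nu)$ (legitimate since $\mu\ll\mathcal{L}^d$): there is a convex $\phi$ on $\mathbb{R}^d$, unique up to a constant, with $(\nabla\phi)_{\#}\mu=\nu$, realizing that maximum through the coupling $\gamma^{\star}=(\mathrm{Id},\nabla\phi)_{\#}\mu$; set $\psi:=\phi^{*}$, so that $\nabla\psi$ inverts $\nabla\phi$ and $(\nabla\psi)_{\#}\nu=\mu$ (equivalently, when $\nu\ll\mathcal{L}^d$ as for the standard Gaussian, apply Brenier directly to $(\nu,\mu)$ to get this $\psi$). Fourth, build a $\nu$-preserving $\mathbf{s}$ with $\mathbf{g}\circ\mathbf{s}=\nabla\psi$ $\nu$-almost everywhere, so that $(\mathbf{g}\circ\mathbf{s},\mathrm{Id})_{\#}\nu=\gamma^{\star}$ and the bound of the second step is attained: when $\mathbf{g}$ is $\nu$-essentially injective — which is the case here, $\mathbf{g}=\mathbf{f}^{-1}$ being a diffeomorphism — one simply takes $\mathbf{s}:=\mathbf{g}^{-1}\circ\nabla\psi$ and checks $\mathbf{s}_{\#}\nu=(\mathbf{g}^{-1})_{\#}(\nabla\psi)_{\#}\nu=(\mathbf{g}^{-1})_{\#}\mu=\nu$; in general one disintegrates $\nu=\int \nu_y\,d\mu(y)$ along $\mathbf{g}$, with $\nu_y$ carried by $\mathbf{g}^{-1}(y)$, and selects $\mathbf{s}$ measurably inside the fibers so that $\mathbf{s}_{\#}\nu=\nu$. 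This $\mathbf{s}$ minimizes the cost; uniqueness of $\psi$ (equivalently of $\nabla\psi$ $\nu$-almost everywhere) is inherited from Brenier's uniqueness, and uniqueness of $\mathbf{s}$ follows from the forced identity $\mathbf{s}=\mathbf{g}^{-1}\circ\nabla\psi$ in the injective case.

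\textbf{Main obstacle.} The heart of the argument, and its most delicate part, is the fourth step: upgrading the optimal map $\nabla\psi$ into an honest measure-preserving factor $\mathbf{s}$ with $\mathbf{g}\circ\mathbf{s}=\nabla\psi$, and securing its uniqueness. If $\mathbf{g}$ is not essentially injective the fibers $\mathbf{g}^{-1}(y)$ carry nontrivial conditional mass, $\mathbf{s}$ is pinned down only up to a $\nu_y$-preserving rearrangement within each fiber, and uniqueness genuinely needs the injectivity of $\mathbf{g}$ — harmless here since $\mathbf{g}$ is a diffeomorphism, but essential to the general statement; existence then rests on a careful measurable-selection/disintegration argument on the standard atomless space $(\mathcal{X},\nu)$ rather than on a closed formula. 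A secondary nuisance is ensuring $\nabla\psi$ is defined $\nu$-almost everywhere, that is, that $\nu$ does not sit on the Lebesgue-negligible non-differentiability set of the convex function $\psi$; this is automatic as soon as $\nu\ll\mathcal{L}^d$, which holds for the standard Gaussian used throughout the paper.
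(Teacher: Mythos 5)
The paper does not prove Theorem~\ref{theo:polar}; it is quoted verbatim as a known result from \citet{Brenier91}, so there is no internal proof to compare against. I will therefore assess your argument on its own terms.

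Your steps 1--3 (pushing $\nu$ forward through $\mathbf{g}$ to get $\mu\ll\mathcal{L}^d$, expanding the square and using measure preservation to reduce the minimization to a Kantorovich correlation bound, and invoking the classical Brenier theorem between $\nu$ and $\mu$) are the standard modern route, and they are correct once one grants, as the paper implicitly does, that $\nu\ll\mathcal{L}^d$. The injective case of step 4 is also fine: $\mathbf{s}:=\mathbf{g}^{-1}\circ\nabla\psi$ is the forced choice, it is $\nu$-preserving by the push-forward computation you give, the Kantorovich bound is attained, and uniqueness of both $\psi$ and $\mathbf{s}$ is inherited. Since in the paper $\mathbf{g}=\mathbf{f}^{-1}$ is a diffeomorphism, this covers everything the paper actually uses.

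The genuine gap is the ``in general one disintegrates'' claim in step 4, which does not work. If $\mathbf{g}$ is not essentially injective, a \emph{function} $\mathbf{s}$ with $\mathbf{g}\circ\mathbf{s}=\nabla\psi$ must select one point in each fiber $\mathbf{g}^{-1}(\nabla\psi(\mathbf{x}))$, so $\mathbf{s}_{\#}\nu$ is carried by a measurable section of $\mathbf{g}$. Such a section typically has $\nu$-measure strictly less than one, so $\mathbf{s}_{\#}\nu\ne\nu$ — no amount of clever measurable selection fixes this. Concretely, take $\mathcal{X}=[0,1]$, $\nu$ Lebesgue, $\mathbf{g}(x)=|x-\tfrac12|$; then $\mu$ is uniform on $[0,\tfrac12]$, $\nabla\psi(x)=x/2$, every candidate satisfies $\mathbf{s}(x)\in\{\tfrac{1-x}{2},\tfrac{1+x}{2}\}$, and the image of any such $\mathbf{s}$ has Lebesgue measure $\tfrac12$, so $\mathbf{s}_{\#}\nu\ne\nu$. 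What actually holds in the non-injective case is Brenier's original form $\mathbf{g}=\nabla\psi\circ\mathbf{s}$, where $\mathbf{s}:=(\nabla\psi)^{-1}\circ\mathbf{g}$ is always measure-preserving (no disintegration needed) but is not invertible when $\mathbf{g}$ is not. The paper's rewriting $\mathbf{g}\circ\mathbf{s}=\nabla\psi$ is equivalent to Brenier's only when that $\mathbf{s}$ is invertible, i.e.\ exactly when $\mathbf{g}$ is essentially injective. So you should drop the disintegration claim and instead note that the theorem, in the exact form the paper states it, already presupposes essential injectivity of $\mathbf{g}$, which is harmless in context but not ``general.''
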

Note that in his work Brenier mention the maximization of $\int \mathbf{x} \cdot \mathbf{g} \circ \mathbf{s} (\mathbf{x})d\nu(\mathbf{x})$ which is equivalent to the minimization with repect to $\mathbf{s}$ of the quadratic cost thanks to the measure preserving property of $\mathbf{s}$. Our goal is to leverage the polar factorization theorem in order to solve the OT problem between $\nu$ and $\mu :=
  \mathbf{g}_{\#}\nu$ where $\mathbf{g}$ is given and $\nu = \mathcal{N}(\mathbf{0}, \Id)$, by looking for the rearrangement $\mathbf{s}$ via an optimization problem. To do so we need to construct a class of measure preserving maps.

\begin{Remark}
  Since in practice we consider $\nu$ to be a standard normal, the domain $\mathcal{X}$ is not bounded and therefore does not strictly satisfy the hypothesis of Theorem \ref{theo:polar}. We do not investigate this point further and simply quote a remark from Brenier's work \citep{Brenier91}: "we believe that the result is still true when $\mathcal{X}$ is unbounded, provided that $p>1$ and $\int_{\mathcal{X}}\|\mathbf{x}\|^q\beta(\mathbf{x}) d \mathbf{x} < +\infty$, where $1/q + 1/p = 1$". The function $\beta(\mathbf{x})=e^{-\|\mathbf{x}\|^2/2}$ is the probability density of $\nu$, and the inequality is therefore satisfied.
\end{Remark}

\section{Gaussian preserving flows}

In order to apply Brenier's polar factorization theorem, it is therefore needed to construct a class of measure preserving maps. Since we consider the case where $\nu$ is a standard normal, we call such maps Gaussian preserving (GP). All proofs of the propositions and lemmas are given in Appendix \ref{ap:gp-flows}.

Consider two probability measures $\alpha$ and $\beta$ with density $h_{\alpha}$ and $h_{\beta}$ respectively. A map $\mathbf{s}$ is measure preserving between $\alpha$ and $\beta$ if it satisfies the change of variable equality (same as \eqref{eq:change-variable} without the log) $h_{\alpha}(\mathbf{x}) = h_{\beta}(\mathbf{s}(\mathbf{x})) |\det (\nabla \mathbf{s}(\mathbf{x}))|.$ In our case, we want $\mathbf{s}$ to be Gaussian preserving therefore  $h_{\alpha}=h_{\beta}=e^{-\|\mathbf{x}\|^2/2}$ and one gets
\begin{equation}
  \label{eq:gaussian-preserving}
  |\det \nabla \mathbf{s}(\mathbf{x})| = e^{(\|\mathbf{s}(\mathbf{x})\|^2-\|\mathbf{x}\|^2)/2}.
\end{equation}
It turns out that Lebesgue preserving functions (i.e. satisfying $|\det \nabla \bm{\phi}| =  1$) can be used to construct maps satisfying \eqref{eq:gaussian-preserving}. In the following we will denote $\berf : \mathbb{R}^d \rightarrow \mathbb{R}^d$ the distribution function of a one dimensional Gaussian (that is $\erf(x) =  \frac{2}{\sqrt{\pi}}\int_0^x e^{-t^2}dt$) applied component wise.
\begin{Proposition}
  \label{prop:gp-flow}
  Let $\Omega = (-1,1)^d$. The map $\mathbf{s}$ is a smooth Gaussian preserving function (i.e. satisfying \eqref{eq:gaussian-preserving}) if and only if there exists $\bm{\phi}: \Omega \rightarrow \Omega$ such that $|\det \nabla \bm{\phi}| =  1$ and
  \begin{equation*}
    \mathbf{s}(\mathbf{x}) = \sqrt{2} \berf^{-1} \circ \bm{\phi} \circ \berf(\frac{\mathbf{x}}{\sqrt{2}}), \quad \mathbf{x} \in  \mathbb{R}^d.
  \end{equation*}
\end{Proposition}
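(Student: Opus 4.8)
The plan is to recognize that $\bm{\phi}\mapsto\mathbf{s}$ given by the formula is just a change of coordinates that turns the Gaussian measure on $\mathbb{R}^d$ into the Lebesgue (uniform) measure on $\Omega=(-1,1)^d$. The key observation is that the one-dimensional error function, suitably rescaled, is exactly the transport map pushing a standard Gaussian to the uniform distribution on $(-1,1)$: if $X\sim\mathcal{N}(0,1)$, then $\berf(X/\sqrt 2)$ is uniform on $(-1,1)$, since $\mathbb{P}(\berf(X/\sqrt2)\le t)=\mathbb{P}(X\le\sqrt2\,\berf^{-1}(t))=\tfrac12(1+\erf(\berf^{-1}(t)))=\tfrac{1+t}{2}$. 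Applied component-wise, the map $T(\mathbf{x}):=\berf(\mathbf{x}/\sqrt 2)$ sends $\nu=\mathcal{N}(\mathbf 0,\Id)$ to the uniform law $\lambda$ on $\Omega$, and $T^{-1}=\sqrt2\,\berf^{-1}$ sends $\lambda$ back to $\nu$. With this in hand, $\mathbf{s}=T^{-1}\circ\bm{\phi}\circ T$, so $\mathbf{s}$ preserves $\nu$ if and only if $\bm{\phi}$ preserves $\lambda$, i.e. preserves Lebesgue measure on $\Omega$.

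Concretely I would carry this out at the level of Jacobians rather than measures, since the statement is phrased via the pointwise change-of-variables identity \eqref{eq:gaussian-preserving}. First I would record the elementary identity for the scalar map $\tau(x)=\berf(x/\sqrt2)$, namely $\tau'(x)=\sqrt{2/\pi}\,e^{-x^2/2}\cdot\frac{1}{\sqrt 2}\cdot\frac{2}{\sqrt\pi}\cdots$ — more cleanly, $\frac{d}{dx}\erf(x/\sqrt2)=\sqrt{\tfrac{2}{\pi}}e^{-x^2/2}$, hence $\det\nabla T(\mathbf{x})=\prod_{i}\tau'(x_i)=(2/\pi)^{d/2}e^{-\|\mathbf{x}\|^2/2}$ and, by the inverse function theorem, $\det\nabla T^{-1}(\mathbf{y})=(\pi/2)^{d/2}e^{\|T^{-1}(\mathbf{y})\|^2/2}$. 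Then I would differentiate $\mathbf{s}=T^{-1}\circ\bm{\phi}\circ T$ by the chain rule, giving
\begin{equation*}
  \det\nabla\mathbf{s}(\mathbf{x}) = \det\nabla T^{-1}(\bm{\phi}(T(\mathbf{x})))\,\cdot\,\det\nabla\bm{\phi}(T(\mathbf{x}))\,\cdot\,\det\nabla T(\mathbf{x}),
\end{equation*}
and substitute the two explicit determinants. The $(2/\pi)^{d/2}$ and $(\pi/2)^{d/2}$ prefactors cancel, the exponential factors combine to $e^{(\|\mathbf{s}(\mathbf{x})\|^2-\|\mathbf{x}\|^2)/2}$ (using $T^{-1}(\bm{\phi}(T(\mathbf{x})))=\mathbf{s}(\mathbf{x})$ inside the exponent), and one is left with $|\det\nabla\mathbf{s}(\mathbf{x})| = e^{(\|\mathbf{s}(\mathbf{x})\|^2-\|\mathbf{x}\|^2)/2}\,|\det\nabla\bm{\phi}(T(\mathbf{x}))|$. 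This makes the equivalence transparent: $\mathbf{s}$ satisfies \eqref{eq:gaussian-preserving} if and only if $|\det\nabla\bm{\phi}|\equiv 1$ on $\Omega$. For the ``only if'' direction I would simply invert the construction: given a Gaussian-preserving $\mathbf{s}$, set $\bm{\phi}:=T\circ\mathbf{s}\circ T^{-1}:\Omega\to\Omega$ and run the same Jacobian computation backwards to get $|\det\nabla\bm{\phi}|\equiv1$.

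The computation itself is routine; the points that need a little care are domain and regularity bookkeeping. I should check that $T$ is a smooth diffeomorphism from $\mathbb{R}^d$ onto $\Omega$ (it is: $\erf$ is a smooth increasing bijection $\mathbb{R}\to(-1,1)$ with nonvanishing derivative, so $\berf^{-1}$ is smooth on $\Omega$), that the composition $T^{-1}\circ\bm{\phi}\circ T$ is well-defined precisely because $\bm{\phi}$ maps $\Omega$ into $\Omega$, and that smoothness of $\mathbf{s}$ on $\mathbb{R}^d$ corresponds exactly to smoothness of $\bm{\phi}$ on $\Omega$ under conjugation by the diffeomorphism $T$. The only genuine subtlety — and the one I would flag as the main obstacle — is the behavior near the boundary $\partial\Omega$: $\berf^{-1}$ blows up at $\pm1$, so one must argue that if $\bm{\phi}:\Omega\to\Omega$ and we want $\mathbf{s}$ defined and smooth on all of $\mathbb{R}^d$, the boundary is never actually reached (the image $T(\mathbb{R}^d)$ is the open cube, and $\bm{\phi}$ maps it to itself), so no boundary blow-up occurs; conversely a smooth $\mathbf{s}$ on $\mathbb{R}^d$ automatically yields a $\bm{\phi}$ that stays in the open cube. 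Everything else is the chain rule and the one-dimensional Gaussian CDF identity.
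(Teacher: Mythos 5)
Your proposal is correct and follows essentially the same route as the paper's proof: both define the conjugating diffeomorphism $T(\mathbf{x})=\berf(\mathbf{x}/\sqrt{2})$ (the paper writes the inverse relation $\bm{\phi}=\berf\circ\frac{\mathbf{s}}{\sqrt 2}\circ\sqrt 2\,\berf^{-1}$), compute $\det\nabla\mathbf{s}$ by the chain rule using the explicit scalar derivatives of $\erf$ and $\erf^{-1}$, and observe that the exponential prefactors reduce the condition to $|\det\nabla\bm{\phi}|\equiv1$. Your probabilistic gloss (Gaussian pushed to uniform on $(-1,1)^d$) and your explicit attention to the boundary behaviour of $\berf^{-1}$ are welcome additions, but the core Jacobian calculation is the same as in the paper.
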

From now on we will focus on the construction of volume and orientation preserving maps (i.e. satisfying $\det \nabla \bm{\phi} =  1$) since functions satisfying $\det \nabla \bm{\phi} =  -1$ can be constructed from them see Appendix \ref{sn:orientation-reversing}. Moreover, one has the following result regarding the regularity of GP flows.

\begin{Lemma}
  \label{lem:smooth-gp}
  Assume the Monge map and the NF architecture are $C^1$ diffeomorphisms. Then the corresponding GP flow $\mathbf{s}$ is $C^1$, the associated function $\bm{\phi}$ is also $C^1$ and either satisfies $\det \nabla \bm{\phi}(\mathbf{x}) = 1$ everywhere or $\det \nabla \bm{\phi}(\mathbf{x}) = -1$ everywhere.
\end{Lemma}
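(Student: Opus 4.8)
The plan is to express $\mathbf{s}$ through the two maps that are assumed regular, carry that regularity over to $\bm{\phi}$ via the explicit composition formula of Proposition~\ref{prop:gp-flow}, and then pin down the sign of $\det \nabla \bm{\phi}$ by a connectedness argument.

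First I would use Brenier's polar factorization (Theorem~\ref{theo:polar}): the GP flow obeys $\mathbf{g} \circ \mathbf{s} = \nabla \psi$, where $\nabla \psi$ is the Monge map pushing $\nu$ onto $\mu = \mathbf{g}_{\#}\nu$ and $\mathbf{g} = \mathbf{f}^{-1}$ is the inverse of the NF map. Hence $\mathbf{s} = \mathbf{g}^{-1} \circ \nabla \psi = \mathbf{f} \circ \nabla \psi$. Under the hypothesis that both $\nabla \psi$ and $\mathbf{f}$ are $C^1$ diffeomorphisms of $\mathbb{R}^d$, the composition $\mathbf{s}$ is a $C^1$ diffeomorphism of $\mathbb{R}^d$; in particular $\det \nabla \mathbf{s}$ is continuous and nowhere zero on $\mathbb{R}^d$.

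Next, solving the identity of Proposition~\ref{prop:gp-flow} for $\bm{\phi}$ gives, for $\mathbf{y} \in \Omega = (-1,1)^d$, $\bm{\phi}(\mathbf{y}) = \berf\!\bigl(\tfrac{1}{\sqrt{2}}\,\mathbf{s}(\sqrt{2}\,\berf^{-1}(\mathbf{y}))\bigr)$. The component-wise map $\berf$ is real-analytic with derivative $\berf'(t) = \tfrac{2}{\sqrt{\pi}}e^{-t^2} > 0$, so $\berf^{-1}$ is $C^\infty$ on $(-1,1)$; composing it with the linear rescalings and with the $C^1$ map $\mathbf{s}$ shows that $\bm{\phi} \in C^1(\Omega)$. (Read the other way, this also re-derives the representation of Proposition~\ref{prop:gp-flow} in the $C^1$ category, so the merely-$C^1$ regularity of $\mathbf{s}$ suffices.) Differentiating by the chain rule, $\nabla \berf$ and $\nabla \berf^{-1}$ are diagonal, and with $\mathbf{x} = \sqrt{2}\,\berf^{-1}(\mathbf{y})$ their determinants combine to give $\det \nabla \bm{\phi}(\mathbf{y}) = \det \nabla \mathbf{s}(\mathbf{x})\,\exp\!\bigl((\|\mathbf{x}\|^2 - \|\mathbf{s}(\mathbf{x})\|^2)/2\bigr)$; invoking the Gaussian-preserving identity \eqref{eq:gaussian-preserving} for $\mathbf{s}$ yields $|\det \nabla \bm{\phi}| \equiv 1$ on $\Omega$ (exactly the content of Proposition~\ref{prop:gp-flow}). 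Finally, since $\Omega$ is connected and $\det \nabla \bm{\phi}$ is continuous and never vanishes, it has constant sign on $\Omega$; together with $|\det \nabla \bm{\phi}| \equiv 1$ this forces $\det \nabla \bm{\phi} \equiv 1$ or $\det \nabla \bm{\phi} \equiv -1$, which is the claim. Equivalently, the displayed formula shows the sign of $\det \nabla \bm{\phi}$ equals that of $\det \nabla \mathbf{s}$, which is constant because $\mathbf{s}$ is a $C^1$ diffeomorphism of the connected space $\mathbb{R}^d$.

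The only delicate step is the propagation of regularity: one must check that $\bm{\phi}$ is genuinely $C^1$ on the whole open cube $\Omega$, which rests on $\berf'$ being strictly positive (so $\berf^{-1}$ is smooth on the open interval $(-1,1)$), and that Proposition~\ref{prop:gp-flow}, stated for smooth $\mathbf{s}$, applies verbatim when $\mathbf{s}$ is merely $C^1$. Once $\bm{\phi} \in C^1(\Omega)$ and $|\det \nabla \bm{\phi}| \equiv 1$ are in hand, the sign dichotomy is an immediate topological consequence of the connectedness of $\Omega$.
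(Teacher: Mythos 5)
Your proof is correct and takes essentially the same approach as the paper's: write $\mathbf{s}$ as a composition of the Monge map and the NF map (both assumed $C^1$ diffeomorphisms), transfer the $C^1$ regularity to $\bm{\phi}$ through the $\berf$-conjugation formula of Proposition~\ref{prop:gp-flow}, and conclude the sign dichotomy by combining $|\det\nabla\bm{\phi}|\equiv 1$ with the constancy of sign of a continuous, nonvanishing Jacobian determinant on a connected domain. You are somewhat more explicit than the paper about the connectedness argument and about the fact that Proposition~\ref{prop:gp-flow} applies verbatim in the merely-$C^1$ setting, but the substance of the two arguments is identical.
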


\subsection{Volume-orientation preserving maps}
\label{snsn:volume-preserving}

First we introduce the space $\SDiff(\Omega)$ we will working with from now on. Let $\Diff(\Omega)$ be the set of all diffeomorphisms in $\Omega$ then
$\SDiff(\Omega) :=
  \big\{\bm{\psi} \in \Diff(\Omega) \ | \ \det(\nabla \bm{\psi})(\mathbf{x}) = 1, \ \forall \mathbf{x} \in  \Omega\big\},$
where $\Omega=(-1, 1)^d$. That is we need a transformation which satisfies two properties: 1) the function must be volume and orientation preserving, 2) the solution must stay in the domain $(-1,1)^d$. Consider the following ODE:
\begin{equation}
  \label{eq:ode}
  \begin{cases}
    \frac{d}{dt}\mathbf{X}(t, \mathbf{x}) =  \mathbf{v}(t, \mathbf{X}(t, \mathbf{x})), \quad \mathbf{x} \in  \Omega , \quad 0 \leq t \leq T, \\
    \mathbf{X}(0, \mathbf{x}) = \mathbf{x}.
  \end{cases}
\end{equation}
We impose two conditions on the velocity $\mathbf{v}$:
\begin{gather}
  \label{eq:v-prop-div}
  \nabla \cdot \mathbf{v} = 0, \quad  \text{ in } \Omega,\\
  \label{eq:v-prop-boundary}
  \mathbf{v} \cdot \mathbf{n} = 0, \quad \text{ on } \partial \Omega,
\end{gather}
where $\mathbf{n}$ is the outward normal at the boundary of $\Omega$. We define $\bm{\phi}$ to be the solution at the final time $\bm{\phi}(\mathbf{x}) := \mathbf{X}(T, \mathbf{x}).$ Property \eqref{eq:v-prop-div} implies that $\det \nabla \bm{\phi} = 1$ (this can be checked with the formula $\frac{d}{dt}\det \nabla \mathbf{X} = \div (\mathbf{v})\det \nabla \mathbf{X}$), and property \eqref{eq:v-prop-boundary} ensures that $\bm{\phi}$ has values in $\Omega.$ Any function in $\SDiff(\Omega)$ can be written as a solution to \eqref{eq:ode} for $d \geq 3$ \citep{Shnirelman93}, for $d=2$ some pathological cases can be constructed \citep{Shnirelman94}.

\textbf{Divergence free vector fields.} First we focus on the vector fields satisfying \eqref{eq:v-prop-div} for arbitrary large dimensions. Property \eqref{eq:v-prop-boundary} can then be incorporated with very little additional work.

\begin{Proposition}
  \label{prop:div-free-v}
  Consider an arbitrary vector field $\mathbf{v}: \mathbb{R}^d \rightarrow \mathbb{R}^d$. Then $\nabla \cdot \mathbf{v} = 0$ if and only if there exists smooth scalar functions $\psi_j^i: \mathbb{R}^d \rightarrow \mathbb{R}$, with $\psi_j^i = - \psi_i^j$ such that
  \begin{equation}
    \label{eq:div-free}
    v_i(\mathbf{x}) = \sum_{j=1}^d \partial_{x_j}\psi_j^i(\mathbf{x}), \quad i=1,...,d,
  \end{equation}
  where $\mathbf{v} = (v_1,..., v_d).$
\end{Proposition}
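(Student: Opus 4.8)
The plan is to prove the two implications separately. The "if" direction is the easy computation: given the representation $v_i = \sum_{j=1}^d \partial_{x_j}\psi_j^i$ with $\psi_j^i = -\psi_i^j$, I would compute
\[
\nabla\cdot\mathbf{v} = \sum_{i=1}^d \partial_{x_i} v_i = \sum_{i=1}^d\sum_{j=1}^d \partial_{x_i}\partial_{x_j}\psi_j^i .
\]
Since $\psi_j^i$ is smooth, mixed partials commute, so $\partial_{x_i}\partial_{x_j}\psi_j^i = \partial_{x_j}\partial_{x_i}\psi_j^i = -\partial_{x_j}\partial_{x_i}\psi_i^j$. Swapping the summation labels $i\leftrightarrow j$ in the double sum shows the sum equals its own negative, hence vanishes. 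In particular the diagonal terms $\psi_i^i$ are forced to be zero by antisymmetry and contribute nothing.

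For the "only if" direction, I would give an explicit construction of the potentials $\psi_j^i$ from a divergence-free $\mathbf{v}$. The natural approach is to mimic the classical $d=3$ vector-potential argument in general dimension: define, for $i<j$,
\[
\psi_j^i(\mathbf{x}) := -\int_0^{x_i} v_j(t, x_2,\dots,x_d)\, dt \quad\text{(or an analogous one-variable antiderivative)},
\]
set $\psi_i^j = -\psi_j^i$ and $\psi_i^i = 0$, and then check that $\sum_j \partial_{x_j}\psi_j^i$ reproduces $v_i$, using the hypothesis $\nabla\cdot\mathbf{v}=0$ exactly once to absorb the "error terms" that appear when differentiating the integral in the $x_i$ direction. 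A cleaner route, which I would probably prefer to write, is the differential-forms argument: identify $\mathbf{v}$ with the $(d-1)$-form $\omega_{\mathbf{v}} = \sum_i (-1)^{i-1} v_i\, dx_1\wedge\cdots\widehat{dx_i}\cdots\wedge dx_d$; then $\nabla\cdot\mathbf{v}=0$ is equivalent to $d\omega_{\mathbf{v}}=0$, and by the Poincaré lemma on $\mathbb{R}^d$ (star-shaped) there is a $(d-2)$-form $\eta$ with $d\eta=\omega_{\mathbf{v}}$; writing $\eta=\sum_{i<j}\eta_{ij}\,dx_1\wedge\cdots\widehat{dx_i}\cdots\widehat{dx_j}\cdots\wedge dx_d$ and expanding $d\eta$ yields exactly a representation of the form \eqref{eq:div-free} with antisymmetric coefficients (up to signs one absorbs into the definition of $\psi_j^i$).

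The main obstacle is bookkeeping rather than conceptual: one must get the signs and the $\widehat{\cdot}$-omitted indices right so that $d\eta$ lands on the nose in the stated form $v_i = \sum_j \partial_{x_j}\psi_j^i$ with $\psi_j^i = -\psi_i^j$, and one must make sure the construction is global on $\mathbb{R}^d$ (which is immediate since $\mathbb{R}^d$ is contractible, so Poincaré applies without a smallness or topology caveat). If the paper wants to stay elementary and avoid forms, the integral construction above works but requires carefully tracking which integration constants / lower-dimensional terms cancel; I would present the forms argument as the clean proof and perhaps remark that an explicit antiderivative-based formula exists. Smoothness of the $\psi_j^i$ follows automatically from smoothness of $\mathbf{v}$ in either approach.
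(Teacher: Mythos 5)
Your proposal is correct, and the \emph{if} direction matches the paper's proof in spirit (the paper splits the double sum into $j<i$ and $j>i$ halves and cancels using $\psi^i_j=-\psi^j_i$; your version cancels the whole double sum against its $i\leftrightarrow j$ relabeling after commuting mixed partials, which is cleaner and equivalent). For the \emph{only if} direction you take a genuinely different route. The paper gives an elementary inductive construction (credited to a math.stackexchange post by Stephen Montgomery-Smith): it shows by induction on $k$ that if $\sum_{i\le k}\partial_i v_i=0$ then one can build antisymmetric potentials $\psi^i_j$ with $1\le i,j\le k$, and the inductive step introduces two auxiliary one-variable antiderivatives $f_1,f_2$ of $v_k$ and $\partial_k v_k$ to correct $v_1$ and then to define $\psi^1_k$, using $\nabla\cdot\mathbf{v}=0$ once. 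Your preferred argument instead identifies $\mathbf{v}$ with the $(d-1)$-form $\omega_{\mathbf v}=\sum_i(-1)^{i-1}v_i\,dx_1\wedge\cdots\widehat{dx_i}\cdots\wedge dx_d$, notes $\nabla\cdot\mathbf v=0\iff d\omega_{\mathbf v}=0$, and invokes the Poincar\'e lemma on $\mathbb R^d$ to produce a $(d-2)$-form $\eta$ with $d\eta=\omega_{\mathbf v}$; the components of $\eta$, after sign bookkeeping, are exactly the antisymmetric $\psi^i_j$. This is the conceptually cleanest route and makes the ``smooth and global on $\mathbb R^d$'' assertion automatic from contractibility, whereas the paper's induction has the advantage of being self-contained and explicitly constructive without forms machinery — indeed the standard proof of the Poincar\'e lemma is essentially the same nested antiderivative scheme, so the two arguments are equivalent modulo packaging. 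One caution: the explicit one-line integral formula you sketch as the ``elementary'' alternative, $\psi^i_j(\mathbf x)=-\int_0^{x_i}v_j(t,x_2,\dots,x_d)\,dt$ for $i<j$, does not work as written (the substitution of $t$ into the first slot only makes sense when $i=1$, and without correction terms it will not reproduce $v_i$); if you want the elementary route you really do need the two-stage correction of the paper's induction rather than a single antiderivative per pair.
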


To impose the boundary conditions \eqref{eq:v-prop-boundary} one can simply multiply each $\psi_j^i$ by $(x_i^2-1)(x_j^2-1)$.

\begin{Lemma}
  \label{lem:div-free-bc-v}
  Consider the coefficients
  \begin{equation}
    \label{eq:psi-boundary}
    \psi_j^i(\mathbf{x}) = h_i(x_i)h_j(x_j)\widetilde{\psi_j^i}(\mathbf{x})
  \end{equation}
  where $h_i(x_i) = h_i^1(x_i-1)h_i^2(x_i+1)$, $h_i^1$, $h_i^2$ are functions satisfying $h_i^1(0)=h_i^2(0)=0$ and $\widetilde{\psi}_j^i(\mathbf{x}):\mathbb{R}^d \rightarrow \mathbb{R}$ are bounded functions satisfying $\widetilde{\psi}_j^i = -\widetilde{\psi}_i^j$. 
  Then the function $\mathbf{v}$ defined in Proposition \ref{prop:div-free-v} satisfies $\nabla \cdot \mathbf{v} = 0$ and $\mathbf{v} \cdot \mathbf{n} = 0$ on $\partial \Omega.$
\end{Lemma}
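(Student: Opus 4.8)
The plan is to establish the two assertions separately. The divergence-free property follows almost immediately from Proposition~\ref{prop:div-free-v} once I check that the new coefficients are still antisymmetric; the no-flux condition on $\partial\Omega$ comes from a short direct computation exploiting that each cutoff $h_k$ vanishes at $x_k=\pm 1$.

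First, for $\nabla\cdot\mathbf{v}=0$: I would observe that the scalar prefactor $h_i(x_i)h_j(x_j)$ in \eqref{eq:psi-boundary} is symmetric under the exchange $i\leftrightarrow j$, so that $\psi_j^i = h_i(x_i)h_j(x_j)\widetilde{\psi_j^i} = -h_j(x_j)h_i(x_i)\widetilde{\psi_i^j} = -\psi_i^j$, using the assumed antisymmetry $\widetilde{\psi}_j^i=-\widetilde{\psi}_i^j$. Since the $\psi_j^i$ are smooth (granted that $h_i^1$, $h_i^2$ and $\widetilde{\psi_j^i}$ are), Proposition~\ref{prop:div-free-v} applies verbatim and yields $\nabla\cdot\mathbf{v}=0$ in $\Omega$; equivalently one can write $\nabla\cdot\mathbf{v}=\sum_{i,j}\partial_{x_i}\partial_{x_j}\psi_j^i$ and note this is the contraction of the symmetric object $\partial_{x_i}\partial_{x_j}$ with the antisymmetric family $\psi_j^i$, hence zero.

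Second, for $\mathbf{v}\cdot\mathbf{n}=0$ on $\partial\Omega$: I would decompose the boundary of $\Omega=(-1,1)^d$ into its $2d$ faces $\{x_k=\pm 1\}$, on which the outward normal is $\pm e_k$ and hence $\mathbf{v}\cdot\mathbf{n}=\pm v_k$; it therefore suffices to show that $v_k$ vanishes whenever $x_k=\pm1$. Writing $v_k=\sum_{j=1}^d\partial_{x_j}\psi_j^k$, the term $j=k$ drops out because $\psi_k^k=0$ by antisymmetry, and for $j\neq k$ the factor $h_k(x_k)$ does not depend on $x_j$, so $\partial_{x_j}\psi_j^k=h_k(x_k)\,\partial_{x_j}\!\big(h_j(x_j)\widetilde{\psi_j^k}\big)$. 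Thus $v_k=h_k(x_k)\sum_{j\neq k}\partial_{x_j}\!\big(h_j(x_j)\widetilde{\psi_j^k}\big)$, and since $h_k(x_k)=h_k^1(x_k-1)h_k^2(x_k+1)$ with $h_k^1(0)=h_k^2(0)=0$, we get $h_k(\pm1)=0$, so $v_k=0$ on $\{x_k=\pm1\}$, which finishes the proof.

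The computation is elementary and I do not expect a real obstacle; the only points requiring care are that the prefactor must be symmetric in $(i,j)$ so that the antisymmetry hypothesis of Proposition~\ref{prop:div-free-v} is preserved, and that when computing $v_k$ no derivative may fall on $h_k$ — which is exactly why it matters that $h_k$ depends on the single variable $x_k$. I would also remark in passing that boundedness of the $\widetilde{\psi_j^i}$ together with smoothness of the $h_i$ on the compact $[-1,1]$ makes $\mathbf{v}$ bounded with bounded derivatives, so that the flow map $\bm{\phi}$ of \eqref{eq:ode} is globally well defined and, by \eqref{eq:v-prop-boundary}, stays in $\Omega$.
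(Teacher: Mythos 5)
Your proof is correct and takes essentially the same route as the paper: both pull the factor $h_i(x_i)$ (vanishing at $x_i=\pm1$) out of $v_i=\sum_{j\neq i}\partial_{x_j}\psi_j^i$ after observing that $\psi_i^i=0$ kills the $j=i$ term, and both invoke Proposition~\ref{prop:div-free-v} for the divergence-free part. You are merely more explicit than the paper in verifying the antisymmetry $\psi_j^i=-\psi_i^j$ of the modified coefficients and in decomposing $\partial\Omega$ into its faces.
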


The function $\mathbf{h}=(h_{1},...,h_{d})$ can typically be parametrized as neural networks with no bias but in our experiments we have simply chosen to consider the case $h_i(x_{i}) = (x_i^2-1)$. One drawback of Lemma \ref{lem:div-free-bc-v} is that it does not guarantee a universal approximation of divergence free functions near the boundaries. Note that when we are away from the boundaries however Proposition 2 gives this universal approximation result. In our experiments we observed that we can at least significantly reduce the OT cost with the construction \eqref{eq:psi-boundary} and we therefore let the investigations related to the boundary conditions for future work.

The incompressible property \eqref{eq:v-prop-div} and the boundary conditions
\eqref{eq:v-prop-boundary} can be exactly implemented in the network in any dimension. Note however that in order to get all the incompressible vector fields  \eqref{eq:div-free}, we need to construct at least $d(d - 1)/2$ arbitrary scalar functions. See Appendix \ref{ap:incompressible-construction} for the practical construction of these divergence free functions in high dimensions.

\section{Euler's geodesics}
\label{sn:euler}

GP flows give a way to compute the Monge map for any trained NF architecture. Many transformations can achieve this goal and the question of finding the best flow among all volume preserving transformations needs to be considered. In the following we will consider a regularization term to smooth the trajectories by minimizing the energy $\int \mathbf{v}^2$. While it could be tempting to try to penalize directly $\int \mathbf{v}^2$, the energy will unfortunately have an opposite objective from the minimization of the OT cost. Indeed the global minimum for the energy is $\mathbf{v}^2=0$ (that is the particles do not move) which is obviously not the velocity field which minimizes the OT cost. In fact the global minimum of a loss composed with the two terms OT + $\int \mathbf{v}^2$ may not minimizes the OT cost. On the contrary, as we will detail below, Euler's equations minimize the energy on $\SDiff$ over all other solutions with the same initial and final states. By finding the correct final state and solving Euler equations it is therefore possible to obtain a solution which both minimizes the OT cost and the energy.

\subsection{Arnold's theorem}
In 1966, Arnold \citep{Arnold66} showed that the flow described by Euler's equations coincides with the geodesic flow on the manifold of volume preserving diffeomorphisms.  This theoretical result therefore gives the reason why regularizing our flows with Euler's equations is a desirable property. Mainly that Euler's equations take the path with the lowest energy to reach the final configuration. Consider the Euler equations:
\begin{equation}
  \label{eq:euler}
  \begin{cases}
    \partial_t \mathbf{v} + (\mathbf{v} \cdot \nabla)  \mathbf{v} = - \nabla p, \quad & t \in [0,T], \ \mathbf{x} \in  \Omega,         \\
    \nabla \cdot \mathbf{v} = 0, \quad                                                & t \in [0,T], \ \mathbf{x} \in  \Omega,         \\
    \mathbf{v} \cdot \mathbf{n}=0, \quad                                              & t \in [0,T], \ \mathbf{x} \in \partial \Omega, \\
    \mathbf{v}(0,\cdot) = \mathbf{v}_0,
  \end{cases}
\end{equation}
where $\mathbf{v} := \mathbf{v}(t, \mathbf{x})$ is the velocity field, $p := p(t, \mathbf{x})$  the pressure and $\mathbf{n}:=\mathbf{n}(\mathbf{x})$ the outward normal at the boundary of $\Omega$. Here the pressure $p$ ensures that $\partial_t \mathbf{v} + (\mathbf{v} \cdot \nabla)  \mathbf{v}$ can be written as the gradient of some scalar function which is uniquely defined (up to a constant) thanks to the additional divergence free and boundary conditions on $ \mathbf{v}$. Additionally the pressure field can be interpreted as the Lagrange multiplier of the divergence free constraint for the associated variational formulation of Euler’s equations. In particular, it may not be needed to compute $p$ when solving numerically \eqref{eq:euler}. We introduce $\mathcal{E}$ the energy of a smooth function $\mathbf{X}(t, \cdot)$:
\begin{equation}
  \label{eq:energy}
  \mathcal{E}(\mathbf{X}) = \int_0^T \int_{\Omega} \frac{1}{2} |\partial_t \mathbf{X}(t, \mathbf{x})|^2 d \mathbf{x} dt,
\end{equation}
Now assume $\bm{\phi} \in \SDiff(\Omega)$. Arnold's problem's consists in finding the path $\mathbf{X}(t, \cdot)_{t \in [0,T]}$ in $\SDiff(\Omega)$ joining the identity to $\bm{\phi}$ which minimizes $\mathcal{E}$:
\begin{equation}
  \label{eq:arnold-pb}
  \min_{\mathbf{X}(t, \cdot) \in \SDiff(\Omega)}\mathcal{E}(\mathbf{X}), \quad \mathbf{X}(0, \cdot) = \Id, \quad  \mathbf{X}(T, \cdot) = \bm{\phi}(\cdot).
\end{equation}
In other words \eqref{eq:arnold-pb} is the geodesic in $\SDiff(\Omega)$ between $\Id$ and $\bm{\phi}$.

\begin{Theorem}[\citet{Arnold66}]
  Assuming the existence of a solution to Arnold's problem, $\mathbf{X}$ is solution to \eqref{eq:arnold-pb} if and only if $\mathbf{v}(t, \mathbf{x}) := \partial_t \mathbf{X}(t, \mathbf{x})$ satisfies Euler's equations \eqref{eq:euler}.
\end{Theorem}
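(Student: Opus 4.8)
The plan is to treat \eqref{eq:arnold-pb} as a classical constrained calculus-of-variations problem and derive its Euler--Lagrange equations, the constraint being that each $\mathbf{X}(t,\cdot)$ stay in $\SDiff(\Omega)$, i.e. $\det\nabla\mathbf{X}(t,\cdot)=1$ pointwise (together with the boundary condition that $\mathbf{X}(t,\cdot)$ maps $\Omega$ to $\Omega$). First I would parametrize admissible variations: given a path $\mathbf{X}$ in $\SDiff(\Omega)$ joining $\Id$ to $\bm\phi$, consider a one-parameter family $\mathbf{X}^\varepsilon(t,\cdot)$ with $\mathbf{X}^0=\mathbf{X}$, fixed endpoints ($\mathbf{X}^\varepsilon(0,\cdot)=\Id$, $\mathbf{X}^\varepsilon(T,\cdot)=\bm\phi$ for all $\varepsilon$), and each $\mathbf{X}^\varepsilon(t,\cdot)\in\SDiff(\Omega)$. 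It is cleanest to write such variations in Lagrangian-to-Eulerian form: set $\mathbf{w}(t,\mathbf{y}) := \partial_\varepsilon\big|_{\varepsilon=0}\mathbf{X}^\varepsilon\!\big(t,(\mathbf{X}(t,\cdot))^{-1}(\mathbf{y})\big)$, the infinitesimal displacement seen in the moving frame. Differentiating the volume constraint $\det\nabla\mathbf{X}^\varepsilon=1$ in $\varepsilon$ and using Jacobi's formula shows exactly that the admissible $\mathbf{w}$ are those with $\nabla\cdot\mathbf{w}=0$ in $\Omega$ and $\mathbf{w}\cdot\mathbf{n}=0$ on $\partial\Omega$; the endpoint conditions force $\mathbf{w}(0,\cdot)=\mathbf{w}(T,\cdot)=0$.

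Next I would compute $\tfrac{d}{d\varepsilon}\big|_0\mathcal{E}(\mathbf{X}^\varepsilon)$. Writing everything back in the moving frame, the material acceleration $D_t\mathbf{v} := \partial_t\mathbf{v}+(\mathbf{v}\cdot\nabla)\mathbf{v}$ (with $\mathbf{v}(t,\mathbf{x})=\partial_t\mathbf{X}$ pushed to Eulerian coordinates) appears naturally after an integration by parts in $t$, in which the boundary terms vanish because $\mathbf{w}(0,\cdot)=\mathbf{w}(T,\cdot)=0$; the change of variables $\mathbf{x}=\mathbf{X}(t,\cdot)$ has unit Jacobian, so no extra factors intrude. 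One arrives at
\begin{equation*}
  \frac{d}{d\varepsilon}\Big|_{\varepsilon=0}\mathcal{E}(\mathbf{X}^\varepsilon)
  = -\int_0^T\!\!\int_\Omega \big(\partial_t\mathbf{v}+(\mathbf{v}\cdot\nabla)\mathbf{v}\big)\cdot\mathbf{w}\,d\mathbf{x}\,dt .
\end{equation*}
Stationarity means this vanishes for every divergence-free $\mathbf{w}$ tangent to $\partial\Omega$ with zero endpoints. By the Helmholtz--Hodge decomposition on $\Omega$, the $L^2$-orthogonal complement of that space of $\mathbf{w}$'s is exactly the set of gradients $\nabla p$; hence stationarity holds iff $\partial_t\mathbf{v}+(\mathbf{v}\cdot\nabla)\mathbf{v} = -\nabla p$ for some scalar field $p$, which together with the already-imposed $\nabla\cdot\mathbf{v}=0$ and $\mathbf{v}\cdot\mathbf{n}=0$ is precisely \eqref{eq:euler}. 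The pressure is thus the Lagrange multiplier of the incompressibility constraint, as remarked before the statement, and $\mathbf{v}_0=\mathbf{v}(0,\cdot)$ is simply whatever initial velocity the minimizing path has. This establishes both implications: a minimizer is stationary hence solves Euler; conversely an Euler solution makes the first variation vanish, and since $\mathcal{E}$ is a convex (quadratic) functional of $\partial_t\mathbf{X}$ a critical path is a minimizer — strictly speaking one should note that the theorem's hypothesis ("assuming the existence of a solution to Arnold's problem") lets us restrict attention to the stated endpoint-constrained family, so the variational characterization is the full content.

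The main obstacle is the rigorous handling of the constraint manifold $\SDiff(\Omega)$ and the legitimacy of the variation formula: one must argue that the formal tangent space to $\SDiff(\Omega)$ at $\mathbf{X}(t,\cdot)$ really is the space of divergence-free, boundary-tangent fields and that arbitrary such infinitesimal $\mathbf{w}$ can be integrated to genuine variations $\mathbf{X}^\varepsilon$ staying in $\SDiff(\Omega)$ with the prescribed fixed endpoints — this is where the infinite-dimensional geometry is delicate and where Arnold's original insight lies. A secondary technical point is invoking the Hodge decomposition on the cube $\Omega=(-1,1)^d$ with the stated boundary condition to conclude that "orthogonal to all admissible $\mathbf{w}$" forces a gradient; given the smoothness assumptions this is standard, but it is the step that actually produces the pressure $p$. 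I would state these regularity caveats explicitly rather than belabor them, consistent with the paper's level of rigor elsewhere.
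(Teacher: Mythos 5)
The paper does not actually prove this theorem: it is stated with a citation to \citet{Arnold66} and no argument is given, so there is no paper proof to compare against. Your proposal is the standard first-variation derivation of Arnold's result and most of it is sound: parametrizing variations in the moving frame, differentiating the unit-Jacobian constraint via Jacobi's formula to characterize the tangent space as divergence-free, boundary-tangent fields, integrating by parts in $t$ to produce the material derivative $D_t\mathbf{v}$, and invoking the Helmholtz--Hodge decomposition to identify $-\nabla p$ as the $L^2$-orthogonal complement of the admissible variations. Up through the statement that stationarity of $\mathcal{E}$ is equivalent to $\partial_t\mathbf{v}+(\mathbf{v}\cdot\nabla)\mathbf{v}=-\nabla p$ together with the built-in constraints, this is exactly Arnold's argument.

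The gap is in your converse, where you argue that an Euler solution is a minimizer because ``$\mathcal{E}$ is a convex (quadratic) functional of $\partial_t\mathbf{X}$.'' Convexity of the integrand in the velocity variable does not transfer to the constrained problem, because the admissible set --- paths $t\mapsto\mathbf{X}(t,\cdot)\in\SDiff(\Omega)$ with fixed endpoints --- is not a convex subset of a linear space; $\SDiff(\Omega)$ is a curved infinite-dimensional manifold, and Arnold himself computed that it has (often negative) sectional curvature. On a curved manifold, critical points of the length/energy functional (geodesics) need not be minimizers, exactly as a great-circle arc of length greater than $\pi$ on the sphere is stationary but not minimizing; \citet{Shnirelman94}, which the paper cites, exhibits related pathologies in $\SDiff$ for $d=2$. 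What your first-variation computation legitimately establishes is the equivalence between \emph{stationarity} of $\mathcal{E}$ and Euler's equations --- which is the content of Arnold's theorem as usually stated --- and the one direction of the paper's formulation that is unambiguous (a minimizer, being stationary, satisfies Euler). To claim the full ``minimizer iff Euler'' you would need an additional hypothesis such as a sufficiently short time horizon $T$, or you should simply read the theorem as being about stationary paths, in which case the convexity remark should be deleted rather than repaired.
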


\subsection{Penalization of Euler's equations in high dimensions}

Numerical schemes developed to efficiently solve the Euler equations \citep{Canuto07, Quarteroni09} (mainly for fluid mechanics problems, i.e. for dimensions up to $3$) scale badly when the dimension increases. In this work, the solution to Euler's equations is interpreted as the geodesic to reach the solution of the OT problem and the dimension can be arbitrary large. Therefore we approach the equation \eqref{eq:euler} through a penalization procedure which can be carried out in any dimension. As explained in the previous section we notice that the second and third equations in \eqref{eq:euler} are satisfied by construction in the network.

Our remaining goal is to constrain the network to be a smooth solution to $\partial_t \mathbf{v} + (\mathbf{v} \cdot \nabla)  \mathbf{v} = - \nabla p$. The left hand side can therefore be written as the gradient of a scalar function and we note that a vector satisfies $\mathbf{w}_{t, \mathbf{x}} \in \mathbb{R}^d$ satisfies $\mathbf{w}_{t, \mathbf{x}} = \nabla p(t, \mathbf{x})$ if and only if its Jacobian is symmetric $\nabla \mathbf{w}_{t, \mathbf{x}} = (\nabla \mathbf{w}_{t, \mathbf{x}})^T$. In order to solve the first equation in \eqref{eq:euler}, we propose to penalize the non-symmetric part of the Jacobian for the total derivative of $\mathbf{v}$. Since a Jacobian-vector product can be efficiently evaluated in high dimensions (unlike the calculation of the full Jacobian which is computationally expensive), we do not calculate directly the Jacobian and use instead the following property of symmetric matrices: $M$ is symmetric if and only if $\mathbf{y}^T M \mathbf{z} - \mathbf{z}^T M \mathbf{y} = 0, \ \forall \mathbf{y}, \mathbf{z} \in \mathbb{R}^d$. The idea is to sample random vectors $\mathbf{y}$, $\mathbf{z}$ during the training and to penalize this term for the total derivative, that is to minimize:
\begin{equation}
  \label{eq:penalization-term}
  R(\mathbf{x}) := \mathbb{E}_{\mathbf{y}, \mathbf{z}} \left[ \int_0^T \left(\mathbf{y}^T (\nabla \mathbf{w}_{t, \mathbf{x}}) \mathbf{z} - \mathbf{z}^T (\nabla \mathbf{w}_{t, \mathbf{x}}) \mathbf{y})\right)^2 dt\right],
\end{equation}
with $\mathbf{y}, \mathbf{z} \sim \mathcal{N}(\mathbf{0}, \Id)$ and $\mathbf{w}_{t, \mathbf{x}} = \partial_t \mathbf{v} + (\mathbf{v} \cdot \nabla)  \mathbf{v}$. In practice, we do not compute the full time integral in \eqref{eq:penalization-term} as it would be computationally too expensive but calculate the penalization only at our time steps discretization.

\textbf{Approximation of the total derivative.} To reduce the computational burden, we do not calculate exactly the total derivative $\mathbf{w}_{t, \mathbf{x}}$ but use an approximation of its Lagrangian formulation instead. More precisely, consider the variable $\mathbf{X}(t, \mathbf{x})$ from \eqref{eq:ode} that is the position of a particle at time $t$ with initial position $\mathbf{x}$. We recall the equality (see Appendix \ref{ap:total-derivative}) $\frac{D}{Dt}\mathbf{v}(t, \mathbf{X}(t, \mathbf{x})) = \partial_t \mathbf{v}(t, \mathbf{X}(t, \mathbf{x})) + (\mathbf{v}(t, \mathbf{X}(t, \mathbf{x})) \cdot \nabla)  \mathbf{v}(t, \mathbf{X}(t, \mathbf{x}))$ and therefore choose to approximate the right-hand side by using a first order Taylor expansion of $D\mathbf{v}/Dt$:
\begin{equation}
  \label{eq:approx-lagrange-deriv}
  \frac{D}{Dt}\mathbf{v}(t, \mathbf{X}(t, \mathbf{x})) \approx \frac{\mathbf{v}(t^{n+1}, \mathbf{X}(t^{n+1}, \mathbf{x})) - \mathbf{v}(t^n, \mathbf{X}(t^n, \mathbf{x}))}{\Delta t},
\end{equation}
$\Delta t := t^{n+1}-t^n$. In practice,  $\Delta t$ is set to $2 \sqrt{\varepsilon}$ where $\varepsilon$ is the machine precision.
This approximation can be easily computed since it requires only the evaluation of the velocity at two positions of a particle.

To summarize our approach requires to penalize a Jacobian-vector product of the form $\mathbf{y}^T (\nabla \mathbf{w}) \mathbf{z} - \mathbf{z}^T (\nabla \mathbf{w}) \mathbf{y}$ where $\mathbf{w}$ is given by \eqref{eq:approx-lagrange-deriv}. Penalizing Jacobian-vector product has already been done in other contexts and prove to efficiently scale with the dimension \cite{Song2020}.

\section{Procedure}
\label{sn:procedure}

In order to solve the optimal transport problem, we can either use the forward NF function $\mathbf{f}$ or $\mathbf{g}:=\mathbf{f}^{-1}$.
Depending on this choice the loss function is then either $E_{\mu(\mathbf{x})}\|\mathbf{x} -  \mathbf{s} \circ \mathbf{f}(\mathbf{x}) \|^2$ or $E_{\nu(\mathbf{x})}\|\mathbf{x} -  \mathbf{g} \circ \mathbf{s}(\mathbf{x}) \|^2$.

In practice the GP flow is parametrized as a standard residual network (ResNet) with a Runge-Kutta 4 time discretization \citep{Atkinson89} (other discretization are possible) and is estimated by minimizing the parameters of the velocity field. When regularizing with Euler's equations, we replace the term $\nabla \mathbf{w}$ in \eqref{eq:penalization-term} by \eqref{eq:approx-lagrange-deriv} and calculate the Jacobian-vector product with the function \textit{torch.autograd} from pytorch. A parameter $\lambda>0$ is also added in front of the penalization term that is if we use the function $\mathbf{f}$:
\begin{equation}
  \label{eq:min-GP-term-f}
  \min_{\bm{\theta}} E_{\mu(\mathbf{x})}\left[\|\mathbf{x} -  \mathbf{s}_{\bm{\theta}} \circ \mathbf{f}(\mathbf{x}) \|^2+ \lambda R_{\bm{\theta}}\circ \berf \circ \frac{\mathbf{f}(\mathbf{x})}{\sqrt{2}}\right],
\end{equation}
or if the function $\mathbf{g}$ is considered instead
\begin{equation}
  \label{eq:min-GP-term-g}
  \min_{\bm{\theta}} E_{\nu(\mathbf{x})}\left[\|\mathbf{x} -  \mathbf{g} \circ \mathbf{s}_{\bm{\theta}}(\mathbf{x}) \|^2+ \lambda R_{\bm{\theta}}\circ \berf \circ \frac{\mathbf{x}}{\sqrt{2}}\right],
\end{equation}
where the vector $\bm{\theta}$ denotes the parameters of the velocity field $\mathbf{v}$, $\mathbf{f}$ is the NF architecture, $\mathbf{s}$ the GP flow and $R$ corresponds to the term penalized with Euler's equations. The subscript $\bm{\theta}$ has been added to highlight the dependence of $\mathbf{s}$ and $R$ to the parameters. We emphasize that \eqref{eq:min-GP-term-f}-\eqref{eq:min-GP-term-g} are two distinct optimization strategies:
\begin{itemize}
  \item If we do not want to invert the NF model (for example if it is computationally expensive) we can simply minimize \eqref{eq:min-GP-term-f} over the training data. The probability distribution $\mu$ in \eqref{eq:min-GP-term-f} is then the unknown distribution from which the data are taken. This assumes however that the training data are correctly mapped to the standard distribution $\nu$ with $\mathbf{f}$.
  \item If the NF model is cheap to invert we can minimize \eqref{eq:min-GP-term-g} instead. To do so we optimize over samples from the standard normal distribution $\nu$. In this case, the probability distribution $\mu$ is the transformation of $\nu$ by the inverse of the NF model $\mu = \mathbf{g}_{\#}\nu$.
\end{itemize}
If the NF model transforms perfectly the training data over the standard normal $\nu$, these two approaches are equivalent. If this is not the case, we notice that the second approach requires a cheap inverse of the NF model, but has the advantage of not using any training data to train the GP flow and may therefore better generalize.

\section{Results}

We apply GP flows on two popular NF models: BNAF, a discrete NF \citep{DeCao19} for two-dimensional test cases and FFJORD, a continuous NF \citep{Grathwohl18} for higher dimensional cases. Both of these models are solid references among NF and do not incorporate any OT knowledge in their architecture or training procedure.  The codes are taken from the official repositories\footnote{github.com/rtqichen/ffjord, github.com/nicola-decao/BNAF, github.com/CW-Huang/CP-Flow}. The FFJORD model has an inverse function directly available in the code, which is not the case for the BNAF model. For this reason we consider only the FFJORD model when interpolating in the latent space of the dSprites and MNIST datasets because interpolations require the NF architecture to have an inverse function available. To compare our results we consider the CP-Flow architecture \citep{Huang20}. The CP-flow network is constrained to be the composition multiple blocks which are gradient of scalar convex function and therefore converges by construction towards the optimal map when considering only $1$ block (provided the optimization reaches a global minimum). This makes CPFlow a good candidate for comparison.

\subsection{Density estimation on toy 2D data}
\label{sn:experiments-2d}
In this section we perform density estimation on several $2$d standard toy distributions \citep{Grathwohl18, Wehenkel19}. In particular we train our GP flow on a pre-trained BNAF model. We provide two dimensional toy examples for the eight Gaussians, two moons and pinwheel test case on Figure \ref{fig:additional-toy}. In the case of the pinwheel dataset we use Euler regularization see below for more details. To compute the exact discrete OT projection we use the POT library \cite{Flamary21}. The distribution is colored to compare the transformation with the exact OT map. We observe on each experiment that adding GP flow makes the transformation closer to the Monge map.

\begin{figure*}
  \includegraphics[scale=0.15]{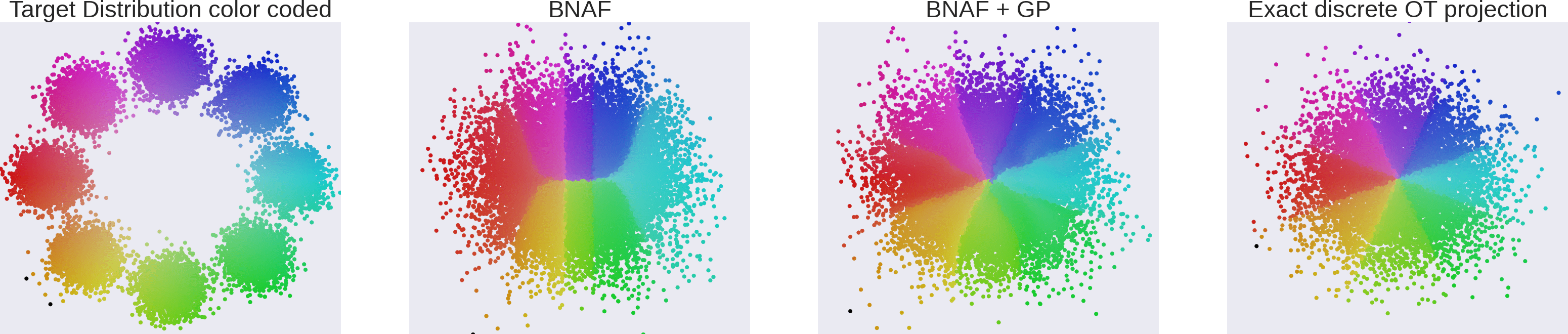}
  \includegraphics[scale=0.15]{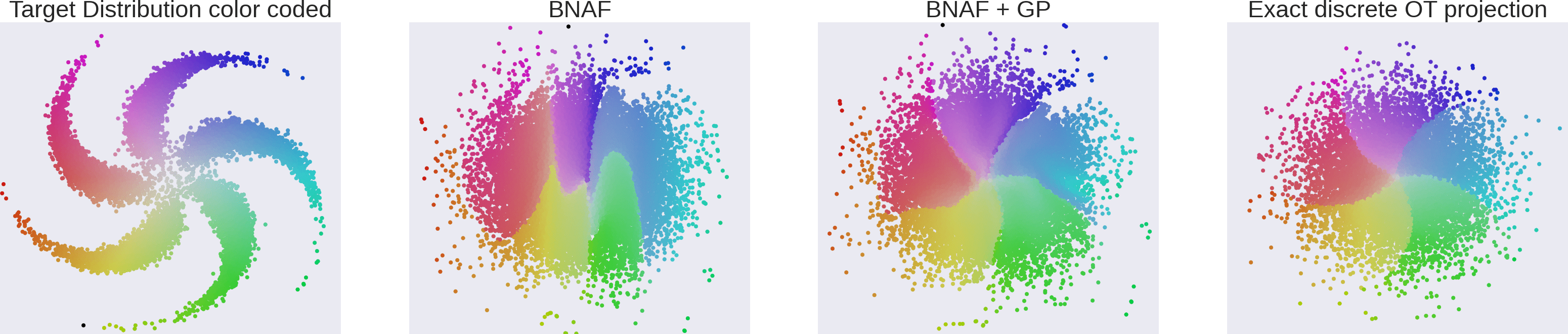}
  \includegraphics[scale=0.15]{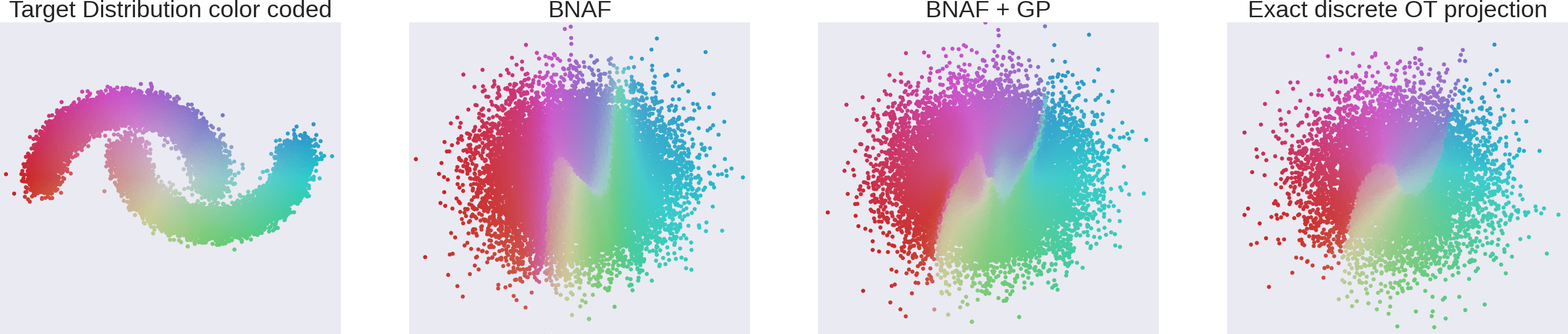}
  \caption{Color map for several 2d toy examples. Adding GP flow makes the transformation closer to the OT map.}
  \label{fig:additional-toy}
\end{figure*}

\begin{figure*}
  \centering
  \includegraphics[scale=0.14]{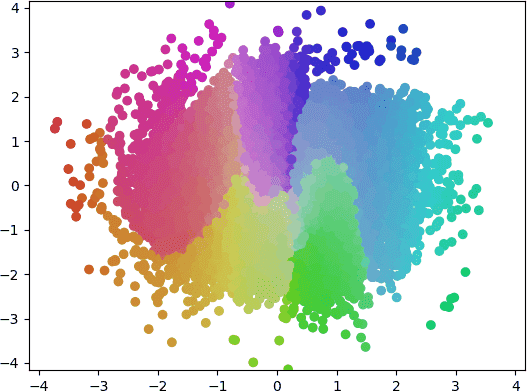}
  \includegraphics[scale=0.14]{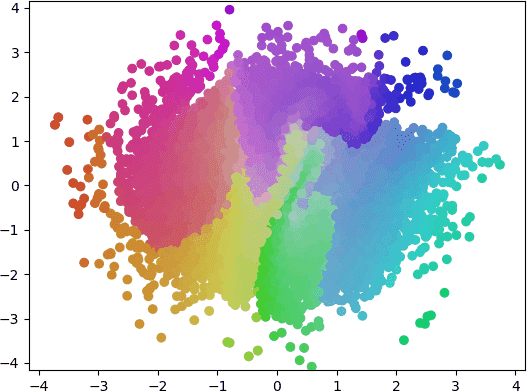}
  \includegraphics[scale=0.14]{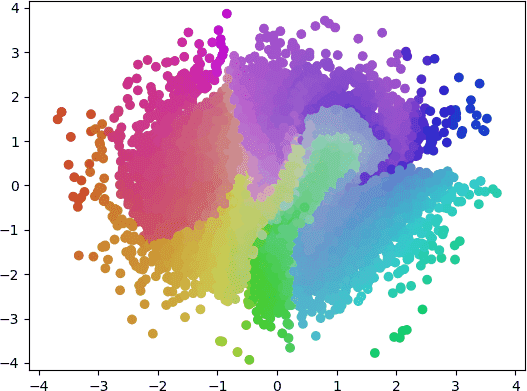}
  \includegraphics[scale=0.14]{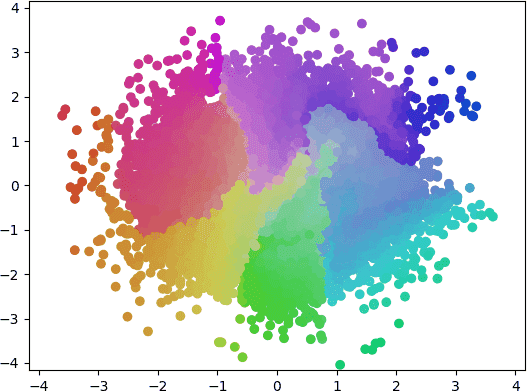}
  \includegraphics[scale=0.14]{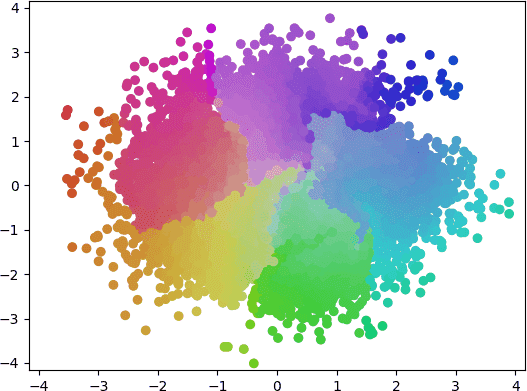}
  \includegraphics[scale=0.14]{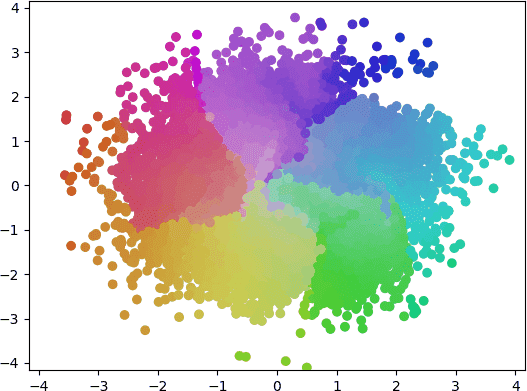}

  \centering
  \includegraphics[scale=0.14]{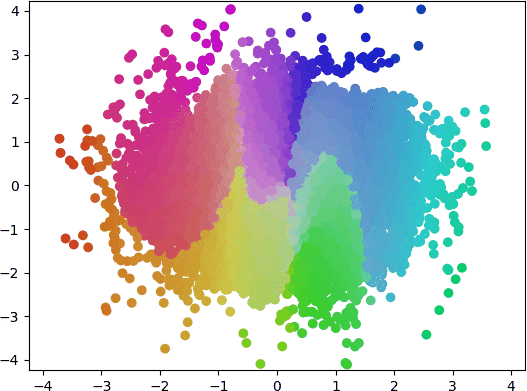}
  \includegraphics[scale=0.14]{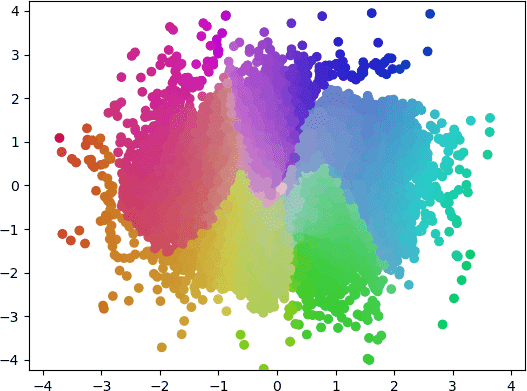}
  \includegraphics[scale=0.14]{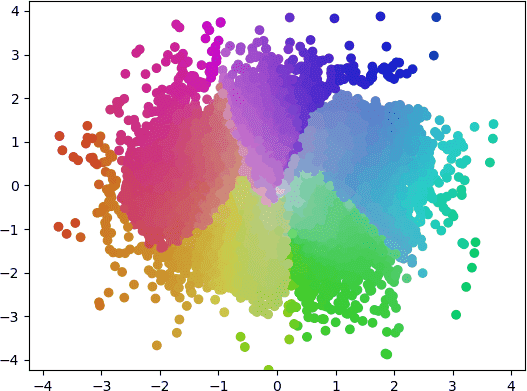}
  \includegraphics[scale=0.14]{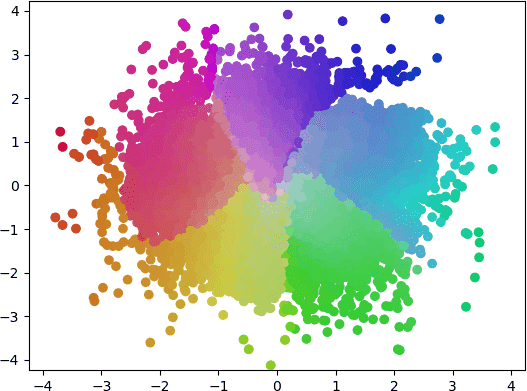}
  \includegraphics[scale=0.14]{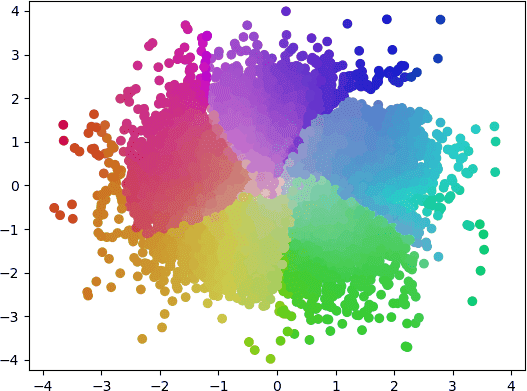}
  \includegraphics[scale=0.14]{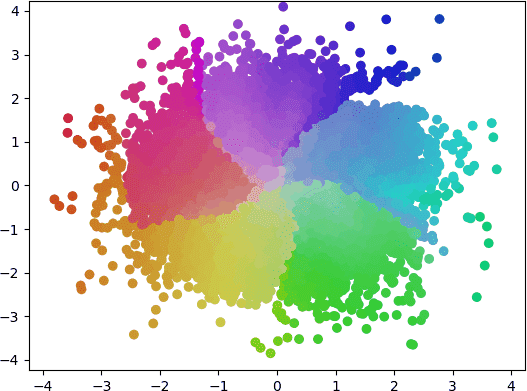}
  \vspace{-0.2cm}
  \caption{Trajectories for the pinwheel test case from $T=0$ (leftmost image) to $T=1$ (rightmost image). Top: Gaussian motion without regularization. Bottom: Gaussian motion with Euler regularization. The latter gives much smoother trajectories.}
  \label{fig:gaussian-motion-pinwheel}
\end{figure*}

\begin{figure*}
  \centering
  \hspace{0.1cm}Target distribution  \hspace{0.75cm} GP flow without Euler \hspace{0.75cm}  GP flow + Euler \hspace{0.75cm} Exact discrete OT map

  \includegraphics[scale=0.225]{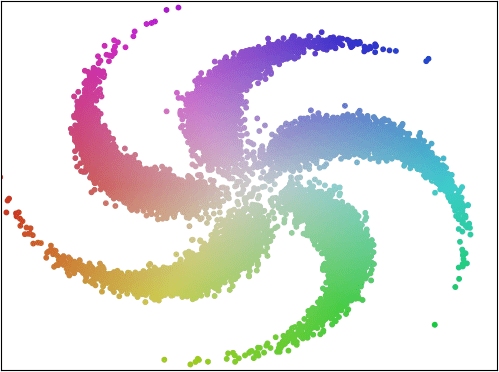}
  \includegraphics[scale=0.225]{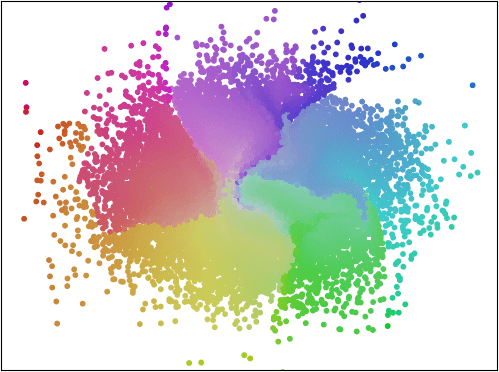}
  \includegraphics[scale=0.225]{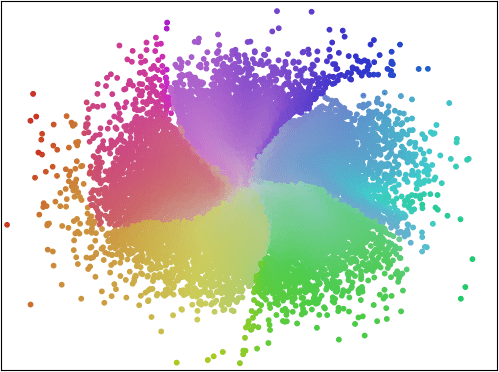}
  \includegraphics[scale=0.225]{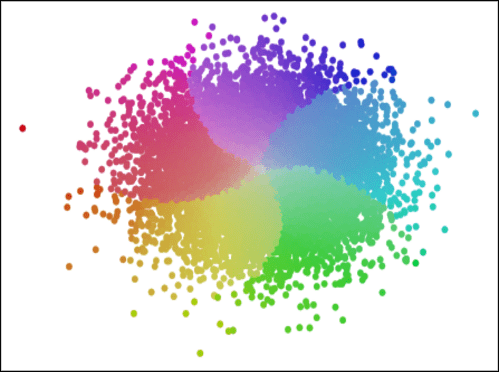}
  \vspace{-0.2cm}
  \caption{Comparison of GP flow with and without Euler for the Pinwheel test case. Euler regularization leads to a better convergence result.}
  \label{fig:pinwheel-comparison}
\end{figure*}

\begin{figure*}
  \centering
  \hspace{0.1cm}No regularization \hspace{2.4cm}  Euler regularization

  \includegraphics[scale=0.31]{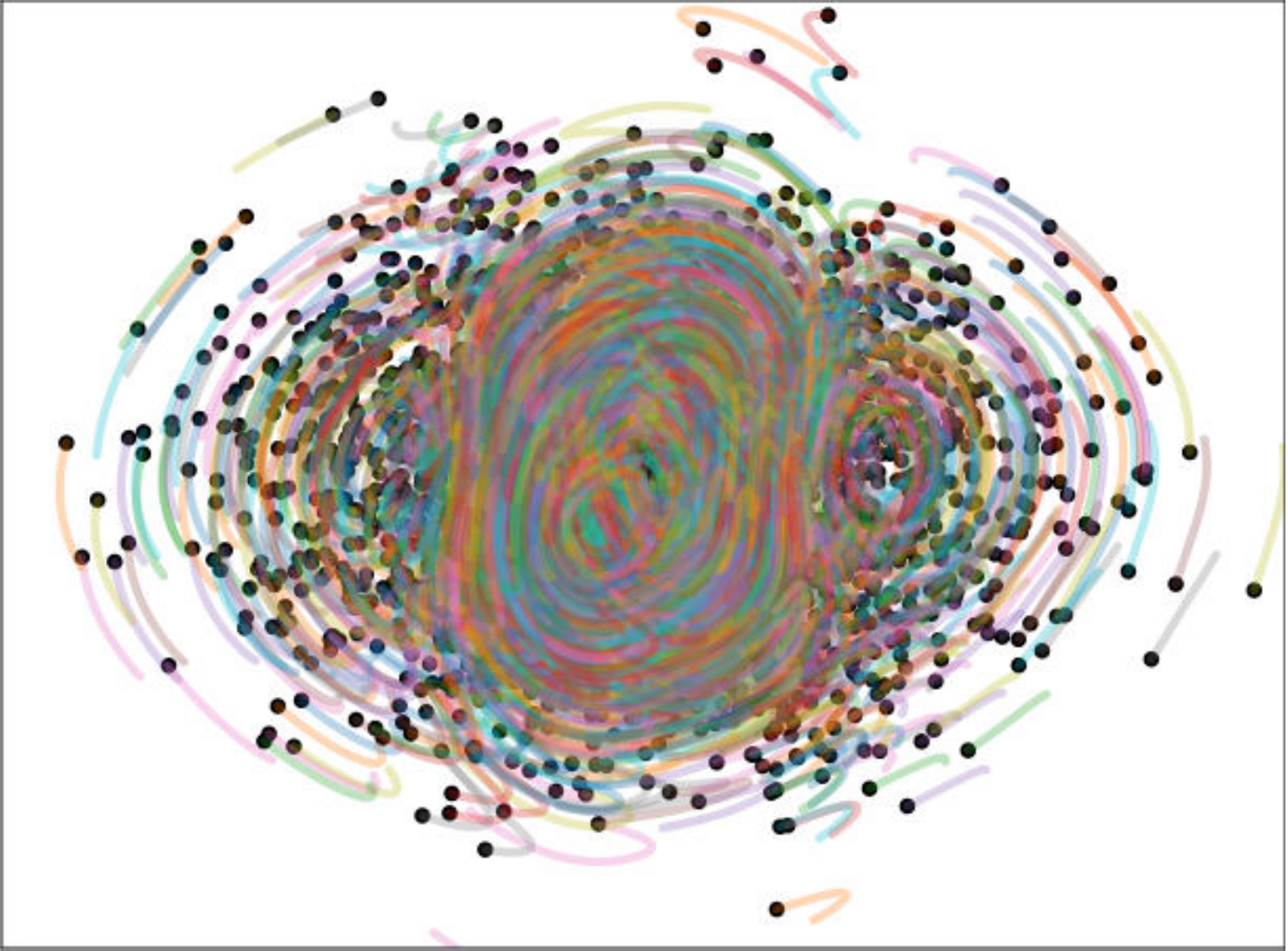} \hspace{0.4cm}
  \includegraphics[scale=0.31]{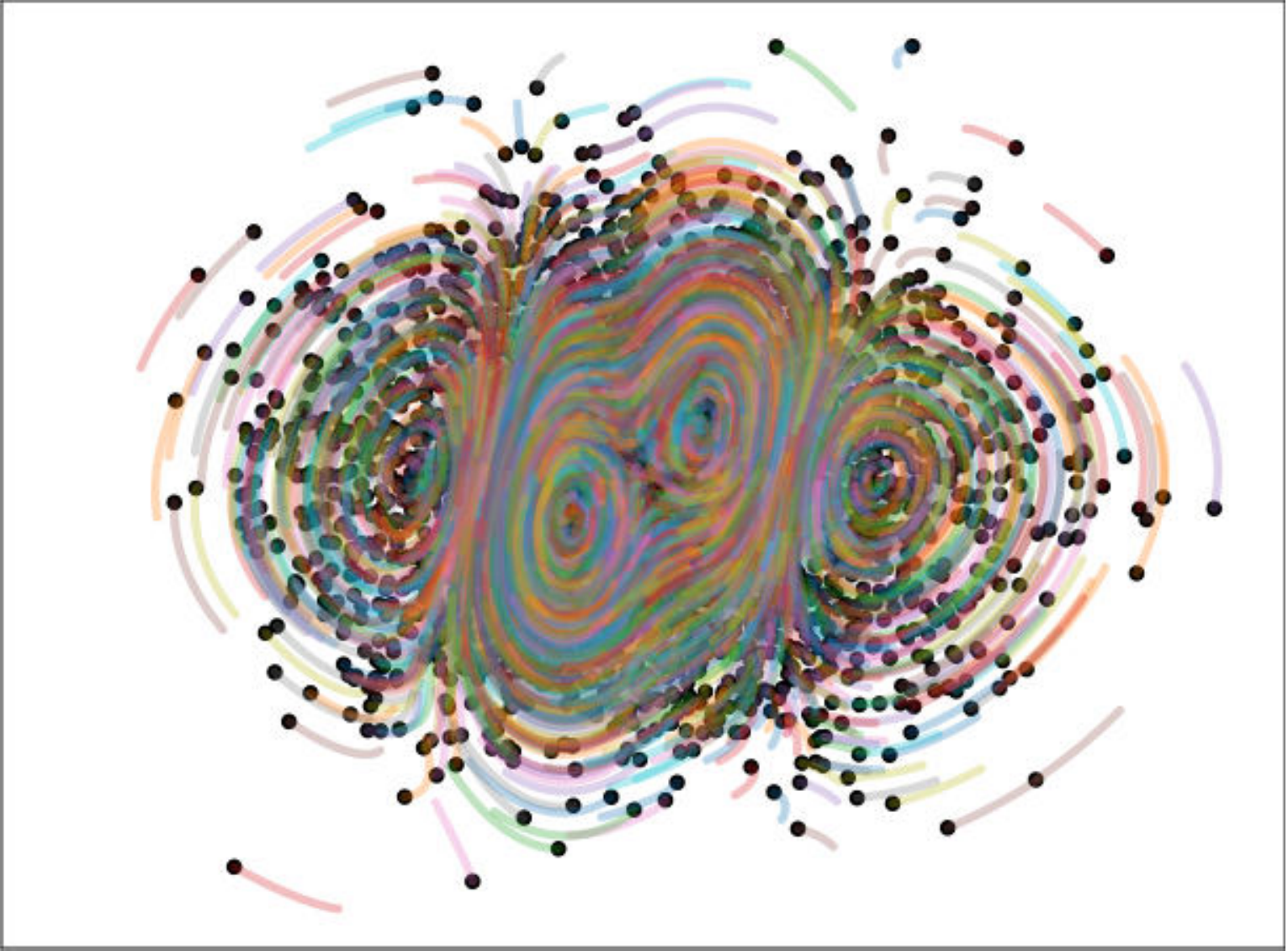}
  \vspace{-0.2cm}
  \caption{Representation of the trajectories of the GP flow for the two moons test case with and without Euler penalization. Trajectories obtained with Euler regularization are more structured.}
  \label{fig:traj-moons-comparison}
\end{figure*}

\textbf{Euler regularization.} Let's briefly recall the goal when considering Euler's regularization. As discussed in Theorem \ref{theo:polar} the measure preserving transformation $\mathbf{s}$ which minimizes the OT cost is unique. In this work we have constructed $\mathbf{s}$ to be the solution of an ODE with divergence free velocity and therefore even if $\mathbf{s}$ (that is the solution of the ODE at the final time) is unique, there are infinitely many trajectories which reach this final configuration. The role of Euler regularization is to obtain smooth trajectories for our ODE, a property which may be helpful during the training process to minimize the OT cost.

To highlight the value of using the Euler's penalization we focus on the pinwheel test case. If the pinwheel experiment from Figure \ref{fig:additional-toy} was done with Euler regularization, this is because we notice a bad convergence of the GP flow without it. Figure \ref{fig:gaussian-motion-pinwheel} shows the reason why the training has trouble to converge: the trajectories are very irregular when no penalization is applied. This is an example of the practical utilization of Euler's regularization and as shown in Figure \ref{fig:pinwheel-comparison} it leads to a better global convergence. We give another illustration of the trajectories for the two moons test case on Figure \ref{fig:traj-moons-comparison}. We observe that the trajectories obtained with Euler regularization are more structured.

Note that Euler regularization is not always needed to reach the OT map and in practice it seems to depend on the case considered. More details on Euler penalization are given in appendix \ref{sn:additional-toy}.

\subsection{An example of data structure preservation: improving disentanglement preservation with optimal transport}
\label{sn:experiments-nd}

The goal here is to show that OT transformations better preserve the data structure than non OT ones. Preserving data structure can be very useful for example when trying to interpolate between data points. Indeed, interpolating directly in the data space often leads to poor results mainly because we may interpolate through low density regions. In this case it can be a good idea to map the data density into a Gaussian, interpolate in the Gaussian space and then apply the reverse transformation. However, for the interpolation to be meaningful we expect the NF architecture to preserve as much of the data structure as possible. In order to study high dimensional data structure we therefore seek a data space which satisfy two properties:

- The initial data space should be very structured.

- The interpolations between data points should be easily comparable and in particular we should be able to classify them easily.

Disentangled representations satisfy these two properties and allow to encode the data in a latent space where change in one direction result in the change over one generative factor in the data. Recently the construction of variational auto-encoders (VAE) \citep{Kingma13} with disentangled latent space has received much attention \citep{Higgins16,Burgess18,Chen18-2,Kim18}.
We therefore propose to experimentally study disentanglement preservation of NF with and without OT. To do this we apply a NF architecture (FFJORD in our case) to the VAE's latent distribution and consider the addition of a GP flow. The goal here is therefore to get as close as possible to the initial interpolations in order to show that even if the initial density is transformed to a Gaussian the data structure is preserved at least to some extent.

For the disentangled interpolation in the NF target (Gaussian) space we consider the same directions which are present in the VAE latent space and are aligned with the axes. This may be a little naive since these directions could be slightly modified depending on the source and target distributions, but it seems a good enough approximation here. Our experiments are run with the $\beta$-TCVAE architecture \citep{Chen18-2} and the latent space dimension is $10$.

\textbf{dSprites dataset.} The dSprites dataset \citep{dsprites17} is made of $64 \times 64$ images of 2D shapes procedurally generated from 5 ground truth independent latent factors. These factors are shape, scale, rotation, x and y positions of a sprite. Since the factors are known, we can compute a quantitative evaluation of disentanglement and we choose here to consider the metric from \citep{Eastwood18} on the continuous factors (i.e. all the factors except the shape) for the three criteria: disentanglement, completeness and informativeness. Table \ref{tb:scores} shows that FFJORD destroys the latent structure and gives the worst disentanglement, completeness and informativeness scores. Adding GP flows allow to recover the same disentanglement score as the initial latent space and get values closer both for completeness and informativeness. On Table \ref{tb:OT-scores} the OT costs are compared, and as expected GP flows allow to reduce the OT cost without changing the loss. Interestingly GP flows with no additional regularization do not converge completely to the Monge map because particles get out of the domain at some point, making it impossible to continue the training process (the training is stopped at $\approx 60$ epochs). We conjecture that this may be due to non-smooth trajectories of our GP flows and a regularization is therefore needed. As shown on Table \ref{tb:OT-scores} adding Euler regularization fixed this issue and allows to further reduce the OT cost.

To illustrate the preservation of disentanglement, some interpolations are also presented in Figures \ref{fig:dsprites-interp1} and \ref{fig:dsprites-interp2} in Appendix \ref{ap:interp}. The dimensions are sorted with respect to their KL divergence in the initial latent space and therefore only the first dimensions carry information: each of the first 5 lines correspond to a generative factor while the last dimensions leave the image unchanged. This structure is lost when mapping the latent space to a Gaussian distribution with the FFJORD architecture. The addition of a GP flow fixed this issue and the interpolation better match the initial latent one.

\begin{table*}
  \centering
  \begin{tabular}{ c c c c}
    \midrule
    \textbf{Model}     & \textbf{Disentenganlement \textuparrow} & \textbf{Completeness \textuparrow} & \textbf{Informativness \textdownarrow} \\
    \midrule
    Init. latent space & \textbf{0.58}                           & \textbf{0.81}                      & \textbf{0.55}                          \\
    FFJORD             & 0.39                                    & 0.26                               & 0.62                                   \\
    FFJORD+GP          & $\textbf{0.58} \pm 0.01$                & $0.66 \pm 0.01$                    & $0.62 \pm 0.$                          \\
    FFJORD+GP+EULER    & $\textbf{0.58} \pm 0.01$                & $0.66 \pm 0.01$                    & $0.58 \pm 0.$                          \\
    \midrule
  \end{tabular}
  \caption{Quantitative evaluation from \citep{Eastwood18} of disentanglement (higher is better), completeness (higher is better) and informativeness (lower is better) on the dSprites dataset. Adding GP flows make the scores closer to the initial ones. For all models considered here (except in the case of the vanilla FFJORD model) the values represent the mean and standard deviation taken over $3$ runs.}
  \label{tb:scores}
\end{table*}
\begin{table*}
  \centering
  \begin{tabular}{ c c c c}
    \midrule
    \textbf{Model}            & dSprites                   & MNIST                    & chairs                   \\
    \midrule
    \textbf{\small{OT cost}}                                                                                     \\
    \small{FFJORD}            & $10.45$                    & $ 6.81$                  & $5.98$                   \\
    \small{FFJORD+GP}         & $5.69 \pm 0.03$            & $\textbf{3.11} \pm 0.01$ & $2.34 \pm 0.01$          \\
    \small{FFJORD+GP+EULER}   & $\textbf{5.30} \pm 0.01$   & $\textbf{3.11} \pm0.01$  & $2.36 \pm 0.01$          \\
    \small{CPFlow (3 blocks)} & $6.27 \pm 0.06$            & $6.01 \pm 2.12$          & $2.46 \pm 0.14$          \\
    \small{CPFlow (1 block)}  & $8.97 \pm 3.44$            & $27.92 \pm 3.00$         & $\textbf{2.21} \pm 0.50$ \\
    \midrule
    \textbf{\small{Loss}}     &                            &                          &                          \\
    \small{FFJORD}            & $-16.62$                   & $\textbf{-0.45}$         & $\textbf{5.48}$          \\
    \small{FFJORD+GP}         & $-16.62 \pm 0.$            & $\textbf{-0.45} \pm 0. $ & $\textbf{5.48} \pm 0.$   \\
    \small{FFJORD+GP+EULER}   & $-16.62 \pm 0.$            & $\textbf{-0.45} \pm 0.$  & $\textbf{5.48} \pm 0.$   \\
    \small{CPFlow (3 blocks)} & $\textbf{-19.07} \pm 0.06$ & $-0.06\pm 0.05$          & $5.94 \pm 0.03$          \\
    \small{CPFlow (1 block)}  & $-16.99 \pm 0.08$          & $0.95 \pm 0.11$          & $6.45 \pm 0.01$          \\
    \midrule
  \end{tabular}
  \caption{Losses and mean OT costs. GP flows reduce the OT cost without changing the loss. Adding Euler regularization allows to further reduced the OT cost on the dSprites dataset. For all models considered here (except in the case of the vanilla FFJORD model) the values represent the mean and standard deviation taken over $3$ runs.}
  \label{tb:OT-scores}
\end{table*}

\textbf{MNIST dataset.} We also consider the MNIST dataset \citep{Lecun98}. As opposed to the dSprites test case, GP flows do not seem to have trouble to converge here without Euler penalization and therefore we obtain comparable OT costs with and without Euler regularization see Table \ref{tb:OT-scores}. Since the generative factors are not known in this case we cannot make a quantitative evaluation of disentanglement as we did for the dSprites dataset. We focus instead on the interpolations presented in Figures \ref{fig:mnist-interp1} and \ref{fig:mnist-interp2} in Appendix \ref{ap:interp}. GP flows better preserve the data structure of the initial latent space compare when applying only the FFJORD model. This can be seen in particular on the last two rows of each block which are not changing in the initial latent space. This structure is lost with the FFJORD model and recovered when training a GP flow.

\textbf{Chairs dataset.} Finally we look at the chairs dataset \citep{Aubry14}. Again, the GP method converges both with and without Euler's penalization and greatly reduced the OT cost see Table \ref{tb:OT-scores}. Interpolations are presented in Figures \ref{fig:chairs-interp1} and \ref{fig:chairs-interp2} in Appendix \ref{ap:interp}. By looking at the last rows we observe once again that GP flows better preserve the data structure of the initial latent space compare when applying only the FFJORD model.

\textbf{Comparison with CPFlow.} To check that GP flows get close to the Monge map we also compare the OT costs with the CPFlow architecture. Note that the number of parameters as well as the training procedure differ from the FFJORD+GP and CPFlow architectures so we only use CPFlow OT costs as a baseline to ensure that FFJORD+GP is close to the Monge map. The losses are only given here as an indication to show that the probability distribution of both FFJORD and CP flow are close from each other and a comparison between their OT costs is therefore relevant. Finally note that we both test the CP flow architecture with $1$ and $3$ blocks. The advantage with the $1$ block architecture is that CP flow should theoretically converges to the Monge map. However it seems harder to make the network converges in this case and we therefore also show some comparisons with $3$ blocks which leads to much lower losses.

GP flows get OT values which are always lower than the CP flow ones for similar losses and we therefore conclude that GP flows are at least as close to the Monge map as CPFlow. The only exception is the chairs dataset where CP flow with one block has a lower OT cost. We note however that in this case the loss for CP flow is much bigger and therefore the comparison of OT costs with GP flow may not be relevant. We also notice that CPFlow may leads to high OT cost when trained on the MNIST dataset. One possible explanation for this is that CPFlow is trained only on the data. Since the OT costs are evaluated with random samples drawn from the standard normal this may show some issue with the generalization during the backward process. On the contrary GP flows are directly trained on random samples from this distribution and may therefore better generalize.

It is therefore possible to better preserve the data structure with GP flows by significantly reducing the OT cost. Note however that since we were not able to make the CP flow architecture converge with only 1 block in our experiments, it could be a good idea to try to assess more precisely how close GP flow is to the Monge Map. To this end there exists many other approaches which could be considered. One could for example use dedicated OT benchmarks \citep{Korotin21} or try more robust approaches \citep{Korotin22,Makkuva20}.

\section{Discussion}
This article describes a method to reduce the OT cost of any pre-trained NF model without changing the estimated target density. The proposed method has been tested up to $d=10$ and does not require to constrain the architecture of the original model. The procedure relies on building Gaussian preserving flows to rearrange the source distribution to satisfy the OT property. The proposed approach is based on incompressible vector fields which allow to use a nice interpretation of Euler's equations as a geodesic in the group of volume-preserving diffeomorphisms between the identity and the transformation minimizing the OT cost. This original contribution allows to add a regularity condition to the estimated map in addition to simply enforcing the OT property.

\textbf{Perspectives.} The numerical experiments presented here pave the way to new research perspectives.
First compared to other OT approaches in the NF literature, GP flows is to the best of our knowledge the first one which does not constrain the NF architecture to obtain the Monge map. This could be a great advantage when the NF architecture is already constrained for other reasons (for example to satisfy some symmetries or data-related properties). We believe that in this case GP flows may stand out as it could be difficult to further constrain the network to satisfy the OT property with standard approaches. The proposed approach could also be a starting point to investigate other type of (potentially non OT) costs, and more specifically non-quadratic OT costs, such as the $L_1$ norm which is much less considered in the literature due to the lack of theoretical foundations (the OT map is not the gradient of a convex function anymore). In a GP-based framework, such extension could be easily implemented since we do not rely explicitly on this property. Finally let's mention the case where we consider a map $h$ between two unkown distribution $D_1$ and $D_2$ but none of them is Gaussian. To recover the Monge map associated with $h$ a possibility might be to map $D_1$ to a Gaussian, makes the rearrangement in the Gaussian space with respect to the quadratic cost of $h$ (that is between $D_{1}$ and $D_{2}$) and then map the points back to $D_1$. In this case we could recover the OT map between $D_1$ and $D_2$ even if none of them is Gaussian.

\textbf{Limitations.} The main limitation of the proposed method is probably related to the number of independent functions require to construct incompressible vector fields in high dimensions. Indeed, as explained in Proposition \ref{prop:div-free-v} one needs to construct at least $d(d-1)/2$ scalar functions to get all the divergence free vector fields in dimension $d$ and in practice our approach requires $d-1$ vector valued functions in $\mathbb{R}^d$. It could therefore be desirable to scale up more efficiently with the dimension. One possible way to overcome this difficulty would be to give up on the exact implementation of divergence free functions. For example one could add a penalization term of the divergence in the loss with an unbiased estimator of the divergence \citep{Song2020}. Also, let us mention that while Euler's regularization adds nice properties to the flows considered, it also increases the computational time required to train the model. Finally the total duration of the training can be impacted by the initial NF chosen since its evaluation is required to compute the OT cost. NF architectures with fast forward (or backward) pass should therefore be preferred.

\subsubsection*{Acknowledgments}

The research leading to these results has received funding from IMT Atlantique, Cominlabs Labex (DYNALEARN project) and ANR (AI4CHILD, ANR-19-CHIA-0015-01, project LEMONADE, ANR-21-CE48-0005 and project OTTOPIA ANR-20-CHIA-0030).

\bibliography{bibtex/references}

\appendix

\section{Practical construction of incompressible vector fields in high dimensions}
\label{ap:incompressible-construction}

The goal here is to have a GPU-friendly construction of the incompressible vector fields given in Proposition \ref{prop:div-free-v}. In the following we consider stationary divergence free functions but the time variable can be added with no additional work simply by considering functions in $\mathbb{R}^{d+1}$ instead of $\mathbb{R}^d$ (the gradients are still taken only on the space variables though). All the proofs are given in Appendix \ref{ap:gp-flows}.

\textbf{Notations.} Regarding the notations we will use the operator $\diag$ for two distinct cases: 1) When $\mathbf{w}$ is a vector $\diag(\mathbf{w})$ denotes the diagonal matrix obtained from the vector $\mathbf{w}$. 2) When $W$ is a matrix $\diag(W)$ denotes the vector obtained from the diagonal of $W$. The operation $\cdot$ denotes the scalar product between two vectors. Finally we have adopted the convention that when a scalar multiplies a vector it multiplies each of its component.

\textbf{Practical construction.} Let $\mathbf{u}^n: \mathbb{R}^{d} \rightarrow \mathbb{R}^{d-n}$. We construct a divergence free function with the functions $\psi_j^i$ defined as
\begin{equation}
  \label{eq:psi-n}
  (\psi_j^i)^n =
  \begin{cases}
    u_{i-n}^n-u_{j-n}^n, \quad & \text{ if } i,j \geq n+1, \\
    0, \quad                   & \text{otherwise}.
  \end{cases}
\end{equation}
To construct this divergence free function we define the matrix $(\nabla \mathbf{u}(\mathbf{x}))^n \in \mathbb{R}^{d \times d}$ and the vector $ \mathbf{1}^n \in \mathbb{R}^d$ as
\begin{equation}
  \label{eq:grad-u-n}
  (\nabla \mathbf{u}(\mathbf{x}))_{ij}^n =
  \begin{cases}
    \partial_{i-n} u_{j-n}^n, \  & \text{ if } i, j \geq n+1 \\
    0, \                         & \text{ otherwise.}
  \end{cases},
\end{equation}
\begin{equation*}
  \mathbf{1}_i^n =
  \begin{cases}
    1, \  & \text{ if } i \geq n+1 \\
    0, \  & \text{ otherwise.}
  \end{cases}.
\end{equation*}
\begin{Lemma}
  \label{lem:div-free-block}
  Let $n \in \mathbb{N}$, $n \leq d-2$ and consider the function $\mathbf{u}^n: \mathbb{R}^d \rightarrow \mathbb{R}^{d-n}$. Then the vector field $\mathbf{v}^n: \mathbb{R}^d \rightarrow \mathbb{R}^{d}$ defined as
  \begin{equation}
    \label{eq:div-free-case3}
    \mathbf{v}^n(\mathbf{x}) = (\nabla \mathbf{u})^n  \ \mathbf{1}^n - [\diag(\nabla \mathbf{u})^n \cdot \mathbf{1}^n]  \mathbf{1}^n,
  \end{equation}
  is divergence free.
\end{Lemma}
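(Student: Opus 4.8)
The plan is to avoid proving anything from scratch and instead recognize $\mathbf{v}^n$ as the divergence free field that Proposition~\ref{prop:div-free-v} attaches to the potentials $(\psi_j^i)^n$ of \eqref{eq:psi-n}. First I would verify that these potentials are admissible for Proposition~\ref{prop:div-free-v}: each $(\psi_j^i)^n$ is smooth whenever $\mathbf{u}^n$ is, and the antisymmetry $(\psi_j^i)^n=-(\psi_i^j)^n$ holds — on the active block $i,j\geq n+1$ it is just $u_{i-n}^n-u_{j-n}^n=-(u_{j-n}^n-u_{i-n}^n)$, and off the block both sides vanish. (The hypothesis $n\leq d-2$ only serves to keep that block non-degenerate; for $d-n\leq 1$ all potentials are $0$ and the statement is trivial.) Proposition~\ref{prop:div-free-v} then already gives a field $w_i:=\sum_{j=1}^d\partial_{x_j}(\psi_j^i)^n$ with $\nabla\cdot\mathbf{w}=0$, so everything reduces to checking $\mathbf{w}=\mathbf{v}^n$.

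The identification is done componentwise, split along the block structure. For $i\leq n$ every $(\psi_j^i)^n$ vanishes, hence $w_i=0$, which matches \eqref{eq:div-free-case3} because the indicator vector $\mathbf{1}^n$ kills the first $n$ entries of both $(\nabla\mathbf{u})^n\mathbf{1}^n$ and $[\diag(\nabla\mathbf{u})^n\cdot\mathbf{1}^n]\,\mathbf{1}^n$. For $i\geq n+1$ one expands $w_i=\sum_{j\geq n+1}\partial_{x_j}u_{i-n}^n-\sum_{j\geq n+1}\partial_{x_j}u_{j-n}^n$; the first sum collects the active columns of the gradient block \eqref{eq:grad-u-n} and is the $i$-th component of $(\nabla\mathbf{u})^n\mathbf{1}^n$, while the second sum is independent of $i$ and equals the trace of that block, $\diag(\nabla\mathbf{u})^n\cdot\mathbf{1}^n$, contributing exactly the term $[\diag(\nabla\mathbf{u})^n\cdot\mathbf{1}^n]\,\mathbf{1}^n_i$. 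Reassembling the two pieces reproduces \eqref{eq:div-free-case3}.

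The only genuinely delicate part is bookkeeping: keeping straight the superscript/subscript roles in $(\psi_j^i)^n$, tracking the uniform shift by $n$ between $\mathbf{v}^n\in\mathbb{R}^d$ and $\mathbf{u}^n\in\mathbb{R}^{d-n}$ (equivalently, relabelling the active coordinates $x_{n+1},\dots,x_d$ as $x_1,\dots,x_{d-n}$), and not conflating the two meanings of $\diag$ — here it is the matrix-to-vector one. As an independent check one can also bypass Proposition~\ref{prop:div-free-v} and differentiate \eqref{eq:div-free-case3} directly: summing $\partial_{x_i}v_i^n$ over $i\geq n+1$ produces two double sums of second derivatives of the components $u_k^n$ which become identical after swapping the two summation indices and invoking equality of mixed partials, so they cancel and $\nabla\cdot\mathbf{v}^n=0$.
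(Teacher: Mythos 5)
Your proof is correct and follows the paper's own route: recognize $\mathbf{v}^n$ as the field that Proposition~\ref{prop:div-free-v} attaches to the antisymmetric potentials $(\psi_j^i)^n$ of \eqref{eq:psi-n} and conclude. You simply fill in the bookkeeping (the antisymmetry check and the componentwise identification of the matrix-vector form \eqref{eq:div-free-case3} with $\sum_j \partial_{x_j}(\psi_j^i)^n$) that the paper compresses into a one-line appeal to the appendix discussion.
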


To construct the functions \eqref{eq:div-free-case3} we need 1) to compute the product between the Jacobian of a vector valued function and a constant vector 2) sum the diagonal elements of the Jacobian matrix. Both of these operations can be done efficiently on GPU. Note that in order to satisfy the boundary conditions $\mathbf{v}^n \cdot \mathbf{n} = 0$ one can modify the equation \eqref{eq:div-free-case3} as in Lemma \ref{lem:div-free-bc-v} (recalling we are taking in practice $h_i = x_i^2-1$) to obtain
\begin{equation}
  \label{eq:practice-incompressible2-block}
  \begin{gathered}
    \mathbf{v}^n(\mathbf{x}) = (\mathbf{x}^2-1) \odot \\ \left[ 2M^n \mathbf{x} + (\nabla \mathbf{u})^n(\mathbf{x}) (\mathbf{x}^2-1)-((\mathbf{x}^2-1) \cdot \diag(\nabla \mathbf{u}(\mathbf{x}))^n)\mathbf{1}^n  \right],
  \end{gathered}
\end{equation}
where $M_{ij}^n = u_i^n - u_j^n$, if $ i, j \geq n+1$ and   $M_{ij}^n =0$ otherwise.

It is possible to recover all the incompressible functions from Proposition \ref{prop:div-free-v} by adding the blocks $\mathbf{v}^0+\mathbf{v}^1+...+\mathbf{v}^{d-2}$.

\begin{Proposition}
  \label{prop:universal-approx-div-free}
  Let $\mathbf{v}$ be a divergence free function in $\mathbb{R}^d$. Then there exists $d-1$ functions $\mathbf{v}^0, ..., \mathbf{v}^{d-2}$ constructed as in Lemma \ref{lem:div-free-block} such that
  \begin{equation*}
    \mathbf{v}(\mathbf{x}) = \sum_{n=0}^{d-2} \mathbf{v}^n(\mathbf{x}).
  \end{equation*}
\end{Proposition}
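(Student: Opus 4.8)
The plan is to show that the blocks $\mathbf{v}^0, \dots, \mathbf{v}^{d-2}$ constructed in Lemma \ref{lem:div-free-block} are rich enough to span, by summation, the full space of divergence free vector fields described in Proposition \ref{prop:div-free-v}. The strategy is to unwind the block construction \eqref{eq:psi-n}--\eqref{eq:div-free-case3} in terms of the antisymmetric potentials $\psi_j^i$, and then invoke Proposition \ref{prop:div-free-v} to conclude that any divergence free $\mathbf{v}$ can be realized. First I would recall that, by Proposition \ref{prop:div-free-v}, an arbitrary divergence free $\mathbf{v}$ can be written as $v_i = \sum_{j} \partial_{x_j}\psi_j^i$ for some smooth antisymmetric family $(\psi_j^i)$ with $\psi_j^i = -\psi_i^j$. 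The question is therefore purely algebraic: can every antisymmetric family $(\psi_j^i)_{1 \le i,j \le d}$ be decomposed as a sum $\sum_{n=0}^{d-2} (\psi_j^i)^n$ where each $(\psi_j^i)^n$ has the special "difference" form $u_{i-n}^n - u_{j-n}^n$ on the block $\{i,j \ge n+1\}$ and vanishes otherwise?

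The key combinatorial step is to peel off one index at a time. Given the target antisymmetric array $(\psi_j^i)$, consider the entries in the first row/column, i.e. $\psi_j^1$ for $j = 2, \dots, d$ (note $\psi_1^1 = 0$ by antisymmetry). I would set $u^0 = (u_1^0, \dots, u_d^0)$ by choosing $u_1^0 = 0$ and $u_j^0 = -\psi_j^1$ for $j \ge 2$; then the block $(\psi_j^i)^0 = u_i^0 - u_j^0$ has the correct values $\psi_j^1$ and $\psi_1^i$ in the first row and column. Subtracting this block from the target leaves an antisymmetric array whose first row and column vanish, so it is effectively an antisymmetric array on indices $\{2,\dots,d\}$; recursing, the $n$-th block $\mathbf{v}^n$ (built from $\mathbf{u}^n : \mathbb{R}^d \to \mathbb{R}^{d-n}$, supported on $\{i,j \ge n+1\}$) kills the $(n+1)$-st row and column of the residual. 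After $d-1$ steps we reach the $1\times 1$ antisymmetric array, which is zero, so the decomposition is complete. Passing from the potentials back to the vector fields via $v_i^n = \sum_j \partial_{x_j}(\psi_j^i)^n$ is linear, and one checks that this reproduces exactly the formula \eqref{eq:div-free-case3} (the second term $[\diag(\nabla\mathbf{u})^n \cdot \mathbf{1}^n]\mathbf{1}^n$ is precisely the contribution of the diagonal $\partial_{x_j}\psi_j^i$ cross-terms), and that summing the blocks yields $\mathbf{v} = \sum_n \mathbf{v}^n$.

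I expect the main obstacle to be the bookkeeping in translating the clean "peel off a row" argument at the level of potentials $\psi_j^i$ into the matrix-vector formula \eqref{eq:div-free-case3}: one has to verify carefully that $v_i^n = \sum_{j=1}^d \partial_{x_j}(\psi_j^i)^n$ with $(\psi_j^i)^n = u_{i-n}^n - u_{j-n}^n$ on the relevant block expands to $(\nabla\mathbf{u})^n \mathbf{1}^n$ minus the diagonal correction term, keeping track of which $\partial_{x_j}$ hits $u_{i-n}^n$ (giving the $j$-independent part, summed against $\mathbf{1}^n$, hence $(\nabla \mathbf{u})^n\mathbf{1}^n$) versus $u_{j-n}^n$ (giving $\partial_{x_j}u_{j-n}^n$, a diagonal entry, again summed against $\mathbf{1}^n$). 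The antisymmetry $\psi_j^i = -\psi_i^j$ holds automatically for the difference form, so no extra constraint is needed there. Once this correspondence is pinned down, the counting — exactly $d-1$ blocks, indexed $n = 0, \dots, d-2$ — follows immediately from the recursion, matching the statement.
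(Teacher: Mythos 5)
Your proposal is correct and follows essentially the same route as the paper: both work at the level of the antisymmetric potentials $\psi_j^i$ from Proposition~\ref{prop:div-free-v} and recursively construct $\mathbf{u}^0,\dots,\mathbf{u}^{d-2}$ so that block $n$ reproduces the $(n+1)$-st row and column of the (residual) antisymmetric array, with a free scalar ($u_1^n$) at each stage. The paper packages this peeling as an explicit closed-form recursion $u_{j-k+1}^{k-1} = u_1^{k-1} - \sum_{n=0}^{k-2}(u_{k-n}^n - u_{j-n}^n) + \psi_j^k$ which it then telescopes, whereas you phrase it as "subtract the block, recurse on a smaller array" — the two are the same argument (up to an immaterial sign convention), and both defer the translation from potentials $(\psi_j^i)^n$ to the vectorized formula \eqref{eq:div-free-case3} to Lemma~\ref{lem:div-free-block}.
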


The attentive reader would have noticed that with the vector functions $\mathbf{u}^n$, $n=0,...,d-2$ we have a total of $(d+2)(d-1)/2$ independent scalar functions while Proposition \ref{prop:div-free-v} only requires the construction of $d(d-1)/2$ scalar functions leaving $d-1$ additional functions which are not strictly needed to obtain the divergence free vectors. This is due to the vectorized constructions \eqref{eq:div-free-case3}-\eqref{eq:practice-incompressible2-block} which allow a fast evaluation of the divergence free functions on GPU. Having $d-1$ additional scalar functions in return is not a big issue since the general order remains $O(d^2)$.

Also note that the practical implementation of the equations \eqref{eq:psi-n}-\eqref{eq:grad-u-n} requires to find a pythonic way to efficiently pad a group of matrices with different dimensions. We have not yet find such way and therefore have simply chosen in our applications  to construct $d-1$ vector valued functions $\mathbf{u}^n \in \mathbb{R}^d$, $n=0,...,d-2$ and fill the appropriate dimensions in \eqref{eq:psi-n}-\eqref{eq:grad-u-n} with $0$. Again even if not optimal this is not a big issue as it multiplies the number of independent scalar functions by a factor $2$, but the general order remains $O(d^2)$.

In practice, we have written the functions $\mathbf{u}^n$ as the output of a big function $\mathbf{u}: \mathbb{R}^{d} \rightarrow \mathbb{R}^{(d-1) \times d}$ allowing to evaluate all the functions $\mathbf{u}^n$ in a single pass. The vector $\mathbf{u}$ is written as the composition of linear functions with some simple non-linearity
\begin{equation}
  \label{eq:u-compose}
  \begin{aligned}
     & \mathbf{u}(\mathbf{x}) = M_n \mathbf{x}_n + \mathbf{b}_n,                              \\
     & \mathbf{x}_i = \sigma(M_{i-1} \mathbf{x}_{i-1} + \mathbf{b}_{i-1}), \quad i=1,...,n-1, \\
     & \mathbf{x}_0 = \mathbf{x},
  \end{aligned}
\end{equation}
where $M_i$ are rectangular matrices, $\mathbf{b}_i$ a vector field and typically we have taken $\sigma=\tanh$. One big advantage of the formulation \eqref{eq:u-compose} is that the Jacobian of $\mathbf{u}$ (and therefore of all the functions $\mathbf{u}^n$) can be computed analytically
\begin{equation}
  \label{eq:analytical-jacobian}
  \begin{aligned}
     & \nabla \mathbf{u} (\mathbf{x}) = M_n \nabla \mathbf{x}_n,                                        \\
     & \nabla\mathbf{x}_i = \diag(\sigma'(M_{i-1} \nabla \mathbf{x}_{i-1} + \mathbf{b}_{i-1})) M_{i-1}, \\
  \end{aligned}
\end{equation}
for $i=1,...,n-1.$ The formulation \eqref{eq:analytical-jacobian} therefore allows a fast evaluation of the term \eqref{eq:div-free-case3} in particular when summing the diagonal elements of the Jacobian. In our experiments we have noticed that the analytical formulation of the Jacobian \eqref{eq:analytical-jacobian} was faster than using \textit{torch.autograd}.

\section{Technical material}
\label{ap:gp-flows}

\subsection{Gaussian preserving flows}

\subsubsection{Proof of Proposition \ref{prop:gp-flow}}

We recall that here $\berf : \mathbb{R}^d \rightarrow \mathbb{R}^d$ is the distribution function of a one dimensional Gaussian applied component wise.

\begin{Proposition*}
  A map $\mathbf{s}$ is a smooth Gaussian preserving function satisfying \eqref{eq:gaussian-preserving} if and only if there exists $\bm{\phi}: (-1,1)^{d} \rightarrow (-1, 1)^d$ such that $|\det \nabla \bm{\phi}| =  1$ and
  \begin{equation}
    \label{eq:s-measure-pres}
    \mathbf{s}(\mathbf{x}) = \sqrt{2} \berf^{-1} \circ \bm{\phi} \circ \berf(\frac{\mathbf{x}}{\sqrt{2}}), \quad \mathbf{x} \in  \mathbb{R}^d.
  \end{equation}
\end{Proposition*}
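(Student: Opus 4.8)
The plan is to conjugate $\mathbf{s}$ by the fixed coordinate change that turns the Gaussian $\nu$ into the uniform law on $\Omega=(-1,1)^d$. Define $T\colon\mathbb{R}^d\to\Omega$ by $T(\mathbf{x})=\berf(\mathbf{x}/\sqrt{2})$. Since $\erf$ is a smooth strictly increasing bijection from $\mathbb{R}$ onto $(-1,1)$, $T$ is a smooth diffeomorphism with inverse $T^{-1}(\mathbf{y})=\sqrt{2}\,\berf^{-1}(\mathbf{y})$. Its Jacobian is diagonal with entries $\partial_{x_j}T_i(\mathbf{x})=\delta_{ij}\,\tfrac{1}{\sqrt{2}}\,\erf'(x_i/\sqrt{2})=\delta_{ij}\sqrt{2/\pi}\,e^{-x_i^2/2}$, so
\[
  \det\nabla T(\mathbf{x})=(2/\pi)^{d/2}\,e^{-\|\mathbf{x}\|^2/2}>0 .
\]
(This is just the statement that $T$ is, up to an affine rescaling, the CDF of $\mathcal{N}(\mathbf{0},\Id)$, hence $T_{\#}\nu$ is uniform on $\Omega$.) I will also rewrite the Gaussian-preserving condition \eqref{eq:gaussian-preserving} in the equivalent change-of-variables form $e^{-\|\mathbf{s}(\mathbf{x})\|^2/2}\,|\det\nabla\mathbf{s}(\mathbf{x})|=e^{-\|\mathbf{x}\|^2/2}$ for all $\mathbf{x}\in\mathbb{R}^d$.

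For the forward implication, given a smooth $\mathbf{s}$ satisfying \eqref{eq:gaussian-preserving}, set $\bm{\phi}:=T\circ\mathbf{s}\circ T^{-1}$. This is a smooth map $\Omega\to\Omega$ (landing in $\Omega$ automatically because $\operatorname{range}T=\Omega$), and solving for $\bm{\phi}$ in this identity is exactly \eqref{eq:s-measure-pres}, equivalently $\mathbf{s}=T^{-1}\circ\bm{\phi}\circ T$. By the chain rule, writing $\mathbf{x}=T^{-1}(\mathbf{y})$,
\[
  \det\nabla\bm{\phi}(\mathbf{y})=\det\nabla T\big(\mathbf{s}(\mathbf{x})\big)\,\det\nabla\mathbf{s}(\mathbf{x})\,\big(\det\nabla T(\mathbf{x})\big)^{-1},
\]
and substituting the formula for $\det\nabla T$ the constants $(2/\pi)^{d/2}$ cancel, giving
\[
  |\det\nabla\bm{\phi}(\mathbf{y})|=e^{(\|\mathbf{x}\|^2-\|\mathbf{s}(\mathbf{x})\|^2)/2}\,|\det\nabla\mathbf{s}(\mathbf{x})|=1
\]
by \eqref{eq:gaussian-preserving}. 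Conversely, given $\bm{\phi}\colon\Omega\to\Omega$ smooth with $|\det\nabla\bm{\phi}|\equiv1$, define $\mathbf{s}$ by \eqref{eq:s-measure-pres}, i.e. $\mathbf{s}=T^{-1}\circ\bm{\phi}\circ T\colon\mathbb{R}^d\to\mathbb{R}^d$, which is smooth; reading the same chain-rule identity backwards gives $|\det\nabla\mathbf{s}(\mathbf{x})|=|\det\nabla\bm{\phi}(T(\mathbf{x}))|\cdot\det\nabla T(\mathbf{x})/\det\nabla T(\mathbf{s}(\mathbf{x}))=e^{(\|\mathbf{s}(\mathbf{x})\|^2-\|\mathbf{x}\|^2)/2}$, which is \eqref{eq:gaussian-preserving}.

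The proof is essentially bookkeeping: the only step that needs care is computing the Jacobian determinant of the coordinate change $T$ and checking that the Gaussian weights $e^{-\|\cdot\|^2/2}$ together with the normalizing constants $(2/\pi)^{d/2}$ cancel correctly, while keeping track of absolute values of determinants (orientation is irrelevant here, which is why the conclusion is $|\det\nabla\bm{\phi}|=1$ rather than $\det\nabla\bm{\phi}=1$; the refinement to a single fixed sign is precisely the content of Lemma \ref{lem:smooth-gp}). Smoothness transfers in both directions because $T$ and $T^{-1}$ are smooth diffeomorphisms, and the correspondence $\mathbf{s}\mapsto T\circ\mathbf{s}\circ T^{-1}$ is a bijection between smooth self-maps of $\mathbb{R}^d$ and smooth self-maps of $\Omega$, so the existence of a suitable $\bm{\phi}$ in the statement is automatic once the Jacobian identity above is verified.
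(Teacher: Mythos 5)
Your proof is correct and takes essentially the same route as the paper's: conjugate $\mathbf{s}$ by the coordinate change $\berf(\cdot/\sqrt{2})$ and verify via the chain rule for Jacobian determinants that the Gaussian weights and normalizing constants cancel, yielding $|\det\nabla\bm{\phi}|\equiv 1$. Naming the change of variables $T$ and computing $\det\nabla T$ once makes the bookkeeping tidier than the paper's term-by-term expansion, but the underlying argument is identical.
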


\begin{proof}
  We recall some basic properties about the distribution function of a one dimensional Gaussian and its inverse. One has for $x \in \mathbb{R}$
  \begin{equation}
    \begin{gathered}
      \label{eq:erf}
      \erf(x) =  \frac{2}{\sqrt{\pi}}\int_0^x e^{t^2}dt, \quad \frac{d}{dx} \erf(x)= \frac{2}{\sqrt{\pi}}e^{-x^2},\\
      \frac{d}{dx} \erf^{-1}(x)= \frac{\sqrt{\pi}}{2} e^{(\erf^{-1}(x))^2}.
    \end{gathered}
  \end{equation}
  Consider the function $\bm{\phi}: (-1,1)^{d} \rightarrow (-1, 1)^d$ defined as
  \begin{equation}
    \label{eq:phi-from-erf}
    \bm{\phi}(\mathbf{x}) = \berf \circ \frac{\mathbf{s}}{\sqrt{2}} \circ \sqrt{2}\berf^{-1}(\mathbf{x}).
  \end{equation}
  The goal here is to show that $|\det \nabla \bm{\phi}| = 1$ then equation \eqref{eq:s-measure-pres} will follows from \eqref{eq:phi-from-erf}. By definition
  \begin{equation*}
    |\det \nabla \bm{\phi}(\mathbf{x})| :=  |\det \nabla \left( \berf \circ \frac{\mathbf{s}}{\sqrt{2}} \circ \sqrt{2}\berf^{-1}(\mathbf{x}) \right)|.
  \end{equation*}
  Applying the equalities \eqref{eq:erf} component wise and denoting $\mathbf{x} = (x_1,...,x_d)$ one gets
  \begin{gather*}
    |\det \nabla \bm{\phi}(\mathbf{x})| =
    |\prod_i  \frac{2}{\sqrt{\Pi}} e^{-(\frac{\mathbf{s}}{\sqrt{2}} \circ \sqrt{2} \berf^{-1}(\mathbf{x}))_i^2}
    \\
    \times \frac{\det \nabla \mathbf{s}(\sqrt{2}\berf^{-1}(\mathbf{x}))}{\sqrt{2}} \\
    \times  \prod_i \sqrt{2} \frac{\sqrt{\Pi}}{2}e^{(\berf^{-1}(\mathbf{x}))_i^2}|,
  \end{gather*}
  since the determinant of the composition is the product of the determinants. That is
  \begin{gather*}
    |\det \nabla \bm{\phi}(\mathbf{x})| =
    |\prod_i e^{-(\frac{\mathbf{s}}{\sqrt{2}} \circ \sqrt{2}\berf^{-1}(\mathbf{x}))_i^2}
    \times \det \nabla \mathbf{s}(\sqrt{2}\berf^{-1}(\mathbf{x})) \\
    \times  \prod_i e^{(\berf^{-1}(\mathbf{x}))_i^2}|,
  \end{gather*}
  which can be written
  \begin{gather*}
    |\det \nabla \bm{\phi}(\mathbf{x})| = e^{(-\|\mathbf{s}(\sqrt{2}\berf^{-1}(\mathbf{x}))\|^2 + \|\sqrt{2}\berf^{-1}(\mathbf{x})\|^2)/2 } \\
    \times |\det \nabla \mathbf{s}(\sqrt{2}\berf^{-1}(\mathbf{x}))|.
  \end{gather*}
  Finally using $|\det \nabla \mathbf{s}(\mathbf{x})|=e^{(\|\mathbf{s}(\mathbf{x})\|^2-\|\mathbf{x}\|^2)/2}$ one obtains $|\det \nabla \bm{\phi}(\mathbf{x})| = 1.$

  Following the same kind of arguments one shows the other way around that is if $ \mathbf{s}(\mathbf{x}) = \sqrt{2} \berf^{-1} \circ \bm{\phi} \circ \berf(\frac{\mathbf{x}}{\sqrt{2}})$ with $|\det \nabla \bm{\phi}|=1$ then $|\det \nabla \mathbf{s}(\mathbf{x})|=e^{(\|\mathbf{s}(\mathbf{x})\|^2-\|\mathbf{x}\|^2)/2}$ that is $\mathbf{s}$ is Gaussian preserving.
\end{proof}

\subsubsection{Orientation reversing functions}
\label{sn:orientation-reversing}

In the following Lemma we prove that very orientation reversing function satisfying $\det \nabla \bm{\psi} = - 1$ can be written as the composition of a volume and orientation preserving function and the function $\bm{h}(x_1,...,x_d) = (-x_1,x_2,x_3,...,x_d).$
\begin{Lemma*}
  Assume $\bm{\psi}: \mathbb{R}^d \rightarrow \mathbb{R}^d$ is a function satisfying $\det \nabla \bm{\psi} = - 1$ and let $\bm{h}(x_1,...,x_d) = (-x_1,x_2,x_3,...,x_d)$. Then there exists a volume and orientation preserving function $\bm{\phi}$ such that $ \bm{\psi} = \bm{\phi} \circ \bm{h}$.
\end{Lemma*}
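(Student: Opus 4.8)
The plan is immediate: define $\bm{\phi} := \bm{\psi} \circ \bm{h}$ and check the two desired properties. The one structural fact that makes everything work is that $\bm{h}$ is an involution — from $\bm{h}(x_1,\dots,x_d) = (-x_1, x_2, \dots, x_d)$ one reads off $\bm{h} \circ \bm{h} = \Id$, so $\bm{h}$ is a diffeomorphism of $\mathbb{R}^d$ with $\bm{h}^{-1} = \bm{h}$, with constant Jacobian $\nabla \bm{h} = \diag(-1,1,\dots,1)$.

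First I would verify the factorization. With $\bm{\phi} := \bm{\psi} \circ \bm{h}$ we get $\bm{\phi} \circ \bm{h} = \bm{\psi} \circ \bm{h} \circ \bm{h} = \bm{\psi} \circ \Id = \bm{\psi}$, which is exactly the asserted identity $\bm{\psi} = \bm{\phi} \circ \bm{h}$. Moreover $\bm{\phi}$ is a diffeomorphism as a composition of the diffeomorphisms $\bm{\psi}$ and $\bm{h}$.

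Then I would check that $\bm{\phi}$ is volume and orientation preserving. By the chain rule $\nabla \bm{\phi}(\mathbf{x}) = \nabla \bm{\psi}(\bm{h}(\mathbf{x}))\, \nabla \bm{h}(\mathbf{x})$, and since $\det \nabla \bm{h} = -1$, multiplicativity of the determinant together with the hypothesis $\det \nabla \bm{\psi} = -1$ gives $\det \nabla \bm{\phi}(\mathbf{x}) = \det \nabla \bm{\psi}(\bm{h}(\mathbf{x})) \cdot \det \nabla \bm{h}(\mathbf{x}) = (-1)(-1) = 1$ for every $\mathbf{x}$, as required. (If one works instead on $\Omega = (-1,1)^d$ as elsewhere in the paper, note that $\bm{h}$ also maps $\Omega$ bijectively onto itself, so the argument applies verbatim.)

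There is no genuine obstacle here; the only point worth flagging is the opening observation that $\bm{h}$ is its own inverse, which is precisely what reduces both the factorization identity and the Jacobian computation to a single line each.
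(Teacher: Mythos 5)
Your proof is correct and takes exactly the same approach as the paper: define $\bm{\phi} := \bm{\psi} \circ \bm{h}$, use that $\bm{h}$ is an involution to recover the factorization, and apply multiplicativity of the determinant to get $\det \nabla \bm{\phi} = 1$.
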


\begin{proof}
  We define $\bm{\phi}$ as $\bm{\phi} = \bm{\psi} \circ \bm{h}$. This function is indeed volume and orientation preserving since it satisfies $\det \nabla \bm{\phi} = \det \nabla \bm{\psi} \det \nabla \bm{h} = 1$. By noticing $\bm{h} \circ \bm{h} = \Id$ one gets $\bm{\psi} = \bm{\phi} \circ \bm{h}$.
\end{proof}

\subsubsection{Proof of Lemma \ref{lem:smooth-gp}}

\begin{Lemma*}
  Assume the Monge map $\mathbf{m}$ and the NF architecture $\mathbf{g}$ are $C^1$ diffeomorphisms. Then the corresponding GP flow $\mathbf{s}$ is $C^1$, the associated function $\bm{\phi}$ is also $C^1$ and either satisfies $\det \nabla \bm{\phi}(\mathbf{x}) = 1$ everywhere or $\det \nabla \bm{\phi}(\mathbf{x}) = -1$ everywhere.
\end{Lemma*}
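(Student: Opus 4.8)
The plan is to unwind the definitions: the GP flow is $\mathbf{s} = \mathbf{g}^{-1}\circ \mathbf{m}$ where $\mathbf{m}$ is the Monge map (since $\mathbf{m}= \mathbf{g}\circ \mathbf{s}$ by Theorem~\ref{theo:polar}), so if $\mathbf{m}$ and $\mathbf{g}$ are $C^1$ diffeomorphisms then $\mathbf{g}^{-1}$ is $C^1$ by the inverse function theorem, and $\mathbf{s}$, being a composition of $C^1$ maps, is $C^1$. Then, since by Proposition~\ref{prop:gp-flow} we have $\bm{\phi} = \berf \circ (\mathbf{s}/\sqrt{2})\circ \sqrt{2}\,\berf^{-1}$, and $\berf$ and $\berf^{-1}$ are smooth (real-analytic, with everywhere-nonzero derivative), $\bm{\phi}$ is again a composition of $C^1$ maps and hence $C^1$.

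Next I would address the sign of $\det\nabla\bm{\phi}$. From Proposition~\ref{prop:gp-flow} we already know $|\det\nabla\bm{\phi}(\mathbf{x})| = 1$ for every $\mathbf{x}\in\Omega=(-1,1)^d$, so the continuous function $\mathbf{x}\mapsto \det\nabla\bm{\phi}(\mathbf{x})$ takes values in $\{+1,-1\}$. Here continuity is exactly where the $C^1$ regularity established in the first step is used: $\nabla\bm{\phi}$ is continuous, the determinant is a polynomial in the matrix entries, hence $\det\nabla\bm{\phi}$ is continuous. Since $\Omega$ is connected and a continuous function on a connected set with values in the disconnected set $\{+1,-1\}$ must be constant, $\det\nabla\bm{\phi}$ is identically $+1$ or identically $-1$ on $\Omega$. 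This is the whole argument; there is no real obstacle, the only thing to be a little careful about is making sure the hypotheses of Proposition~\ref{prop:gp-flow} apply (which require $\mathbf{s}$ smooth — here I would note that $C^1$ suffices for the chain-rule/determinant computations in that proof, or more precisely restate the proposition's conclusion $|\det\nabla\bm\phi|=1$ as holding under the weaker $C^1$ hypothesis since the proof only manipulates first derivatives).

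I would write this up in roughly three short steps: (1) $\mathbf{s}=\mathbf{g}^{-1}\circ\mathbf{m}$ is $C^1$ by the inverse function theorem and composition; (2) $\bm{\phi}=\berf\circ(\mathbf{s}/\sqrt 2)\circ\sqrt 2\,\berf^{-1}$ is $C^1$ since $\berf,\berf^{-1}$ are smooth diffeomorphisms of the relevant domains; (3) combine $|\det\nabla\bm\phi|\equiv 1$ from Proposition~\ref{prop:gp-flow} with continuity of $\det\nabla\bm\phi$ and connectedness of $(-1,1)^d$ to conclude the sign is globally constant. The only mild subtlety worth a sentence is the passage between "smooth" in Proposition~\ref{prop:gp-flow} and "$C^1$" here, and the fact that $\berf'>0$ everywhere so $\berf^{-1}$ is genuinely $C^1$ on $(-1,1)$.
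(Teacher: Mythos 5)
Your proof is correct and follows essentially the same route as the paper: express $\mathbf{s}$ as a composition of $\mathbf{g}^{-1}$ and the Monge map to get $C^1$ regularity, conjugate by $\berf$ to transfer $C^1$ regularity to $\bm{\phi}$, and use continuity plus connectedness to pin down the sign of $\det\nabla\bm{\phi}$. The only cosmetic difference is where the connectedness argument is applied: the paper first establishes that $\det\nabla\mathbf{g}$, $\det\nabla\mathbf{g}^{-1}$ and $\det\nabla\mathbf{m}$ each have constant sign (being Jacobians of diffeomorphisms), deduces that $\det\nabla\mathbf{s}$ and hence $\det\nabla\bm{\phi}$ have constant sign, and only then invokes $|\det\nabla\bm{\phi}|=1$; you instead go straight to $|\det\nabla\bm{\phi}|\equiv 1$ via Proposition~\ref{prop:gp-flow} and conclude constancy directly on $\Omega$, which is a slightly tidier ordering of the same facts. (Incidentally, you write $\mathbf{s}=\mathbf{g}^{-1}\circ\mathbf{m}$, consistent with $\mathbf{g}\circ\mathbf{s}=\nabla\psi$ in Theorem~\ref{theo:polar}; the paper's proof writes $\mathbf{s}:=\mathbf{m}\circ\mathbf{g}^{-1}$, which appears to be a notational slip — your ordering is the one that makes $\mathbf{s}$ a self-map preserving $\nu$, and either way the $C^1$ and sign arguments are unaffected.)
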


\begin{proof}
  The definition of $\mathbf{s} := \mathbf{m} \circ \mathbf{g}^{-1}$ ensures that $\mathbf{s}$ is indeed $C^1$. Moreover since $\mathbf{g}$ is invertible either $\det \nabla \mathbf{g} > 0$ everywhere or $\det \nabla \mathbf{g} < 0$ everywhere (if $\det \nabla \mathbf{g}(x) = 0$ this would mean that g is not invertible at this point) and the same argument applies to $\mathbf{g}^{-1}$ and to the Monge map $\mathbf{m}$. Therefore the equality $\mathbf{s} = \mathbf{m} \circ \mathbf{g}^{-1}$ implies that either $\det \nabla \mathbf{s} > 0$ everywhere or $\det \nabla \mathbf{s} <0$ everywhere. The equality $\bm{\phi} = \berf \circ \frac{\mathbf{s}}{\sqrt{2}} \circ \sqrt{2}\berf^{-1}$ shows that $\bm{\phi}$ is also $C^1$ and that the sign of $\det \nabla \bm{\phi}$ does not change.
\end{proof}

\subsection{Divergence free functions}

\subsubsection{Proof of Proposition \ref{prop:div-free-v}}

\begin{Proposition*}
  Consider an arbitrary vector field $\mathbf{v}: \mathbb{R}^d \rightarrow \mathbb{R}^d$. Then $\nabla \cdot \mathbf{v} = 0$ if and only if there exists smooth scalar functions $\psi_j^i: \mathbb{R}^d \rightarrow \mathbb{R}$, with $\psi_j^i = - \psi_i^j$ such that
  \begin{equation}
    \label{ap:div-free}
    v_i(\mathbf{x}) = \sum_{j=1}^d \partial_{x_j}\psi_j^i(\mathbf{x}), \quad i =1,...d,
  \end{equation}
  where $\mathbf{v} = (v_1,..., v_d).$
\end{Proposition*}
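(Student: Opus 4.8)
The plan is to prove the two implications separately. The forward direction is a one-line computation: if $v_i = \sum_{j=1}^d \partial_{x_j}\psi_j^i$ with $\psi_j^i = -\psi_i^j$, then $\nabla\cdot\mathbf{v} = \sum_{i,j}\partial_{x_i}\partial_{x_j}\psi_j^i$, and relabelling the two summation indices together with Schwarz's theorem rewrites this as $\sum_{i,j}\partial_{x_i}\partial_{x_j}\psi_i^j = -\sum_{i,j}\partial_{x_i}\partial_{x_j}\psi_j^i$, so the sum equals its own negative and hence vanishes. This part uses only that each $\psi_j^i$ is $C^2$.

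For the converse, the statement is a coordinate form of the Poincaré lemma. I would associate to $\mathbf{v}$ the $(d-1)$-form $\omega_{\mathbf v} := \iota_{\mathbf v}\,(dx_1\wedge\cdots\wedge dx_d) = \sum_{i=1}^d (-1)^{i-1} v_i\, dx_1\wedge\cdots\wedge\widehat{dx_i}\wedge\cdots\wedge dx_d$, for which a short computation gives $d\omega_{\mathbf v} = (\nabla\cdot\mathbf v)\, dx_1\wedge\cdots\wedge dx_d$; thus $\nabla\cdot\mathbf v=0$ is equivalent to $\omega_{\mathbf v}$ being closed. Since $\mathbb{R}^d$ is contractible, the Poincaré lemma produces a smooth $(d-2)$-form $\eta$ with $d\eta=\omega_{\mathbf v}$; writing $\eta$ in the standard basis, its coefficients form a skew-symmetric family of smooth functions $\psi_j^i=-\psi_i^j$, and expanding $d\eta$ and matching it with $\omega_{\mathbf v}$ component by component reproduces exactly $v_i=\sum_j\partial_{x_j}\psi_j^i$ (after keeping track of the alternating signs). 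Smoothness of the $\psi_j^i$ is then automatic.

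If a self-contained elementary argument is preferred, I would instead build $\psi$ explicitly by reduction on the last coordinate. Set $\psi_j^i\equiv 0$ for $i,j\le d-1$, and for $i\le d-1$ set $\psi_d^i(\mathbf x)=\int_0^{x_d} v_i(x_1,\dots,x_{d-1},t)\,dt$ and $\psi_i^d=-\psi_d^i$. Then $\sum_j\partial_{x_j}\psi_j^i=v_i$ for $i\le d-1$, while for $i=d$ the fundamental theorem of calculus together with the identity $\sum_{j\le d-1}\partial_{x_j}v_j=-\partial_{x_d}v_d$ gives $\sum_j\partial_{x_j}\psi_j^d=v_d(\mathbf x)-v_d(x_1,\dots,x_{d-1},0)$. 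The residual $w(x'):=v_d(x',0)$ depends only on $x'=(x_1,\dots,x_{d-1})$, so it is removed by replacing $\psi_d^i$ by $\psi_d^i+\alpha^i(x')$ and $\psi_i^d$ by $\psi_i^d-\alpha^i(x')$ for $i<d$, where $\bm{\alpha}$ is any field on $\mathbb{R}^{d-1}$ with $\nabla_{x'}\cdot\bm{\alpha}=-w$ (for instance $\alpha^1(x')=-\int_0^{x_1}w(t,x_2,\dots,x_{d-1})\,dt$ and the other $\alpha^i\equiv 0$); this correction leaves the equations for $i<d$ untouched because $\partial_{x_d}\alpha^i=0$, and it is skew-symmetric by construction.

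The main obstacle is not conceptual — the content is precisely the Poincaré lemma on $\mathbb{R}^d$ — but organizational: in the explicit route, maintaining the skew-symmetry constraint while correcting the last component; and in the differential-forms route, getting the alternating signs right when translating between the $(d-2)$-form $\eta$ and the tensor $(\psi_j^i)$. It is worth noting in passing that both constructions yield genuinely smooth potentials, so the word "smooth" in the statement is a consequence rather than an extra hypothesis.
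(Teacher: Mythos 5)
Your forward direction coincides with the paper's (split/relabel the double sum using $\psi_j^i=-\psi_i^j$ and Schwarz). Your converse, however, departs genuinely from the paper's argument. The paper follows a construction credited to Stephen Montgomery-Smith: an induction on $k$, where the inductive hypothesis is that any field with $\sum_{i\le k}\partial_i v_i=0$ admits a skew-symmetric potential supported on the first $k$ indices, and the inductive step introduces two auxiliary integrals $f_1,f_2$ to pass from $k-1$ to $k$. Your "self-contained elementary" variant is structurally simpler: it is a single step, absorbing the whole field into the last row and column of $\psi$ (so that $\psi_j^i\equiv0$ for $i,j\le d-1$), fixing the residual $v_d(x',0)$ by a skew-symmetric correction $\alpha$ with $\nabla_{x'}\cdot\alpha=-v_d(x',0)$. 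I checked your identities and they close: $\partial_d\psi_d^i=v_i$ for $i<d$, the sum for $i=d$ gives $v_d(\mathbf x)-v_d(x',0)$ by the fundamental theorem of calculus and the identity $\sum_{j<d}\partial_jv_j=-\partial_dv_d$, and the $\alpha$-correction is invisible to the $i<d$ equations since $\partial_d\alpha^i=0$. What you lose relative to the paper's induction is the explicit "reverse" direction the paper actually needs elsewhere (Proposition~\ref{prop:universal-approx-div-free} reconstructs a concrete decomposition $\mathbf v=\sum_n\mathbf v^n$ from the $(\psi_j^i)$), but for the statement at hand your construction is equivalent and shorter. Your differential-forms route is a further abstraction the paper does not mention: identifying $\nabla\cdot\mathbf v=0$ with closedness of $\iota_{\mathbf v}(dx_1\wedge\cdots\wedge dx_d)$ and invoking the Poincar\'e lemma on the contractible domain $\mathbb{R}^d$ makes the result a special case of a classical theorem and gives smoothness for free; the only cost is the sign bookkeeping in translating the $(d-2)$-form coefficients into the skew tensor $(\psi_j^i)$, which you correctly flag. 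Both of your approaches, like the paper's, tacitly use the topology of $\mathbb{R}^d$ — the result is false on general domains — so the scope of the proposition is the same under any of the three proofs.
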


\begin{proof}
  1) First we prove that $\nabla \cdot \mathbf{v} = 0$. The divergence of $\mathbf{v}$ can be written
  \begin{equation*}
    \nabla \cdot \mathbf{v} = \sum_i \partial_{x_i} \sum_j \partial_{x_j} \psi_j^i.
  \end{equation*}
  Using $\psi_j^i = -\psi_i^j$ one has
  \begin{equation}
    \label{dem:div-free}
    \nabla \cdot \mathbf{v} = \sum_i \left(\sum_{\substack{j \\ j<i}} \partial_{x_i} \partial_{x_j}\psi_j^i - \sum_{\substack{j \\ \ j>i}} \partial_{x_i} \partial_{x_j} \psi_i^j \right),
  \end{equation}
  For the term $\sum_i \sum_{\substack{j \\ j<i}}\partial_{x_i}  \partial_{x_j}\psi_j^i$ on the left hand side one can sum over the index $j$ first instead of the index $i$ that is
  \begin{equation*}
    \sum_i \sum_{\substack{j \\ j<i}}\partial_{x_i}  \partial_{x_j}\psi_j^i = \sum_j \sum_{\substack{i \\ i>j}}\partial_{x_i}  \partial_{x_j}\psi_j^i.
  \end{equation*}
  Injecting this equality in \eqref{dem:div-free} one gets
  \begin{equation*}
    \nabla \cdot \mathbf{v} =   \sum_j \sum_{\substack{i \\ i>j}}\partial_{x_i}  \partial_{x_j}\psi_j^i - \sum_i \sum_{\substack{j \\ \ j>i}} \partial_{x_i} \partial_{x_j} \psi_i^j = 0.
  \end{equation*}

  2) Now we prove that every vector field satisfying $\nabla \cdot \mathbf{v} = 0$ can be written under the form \eqref{ap:div-free}. The proof from Stephen Montgomery-Smith is available online\footnote{https://math.stackexchange.com/questions/578898} for completeness we rewrite it here. The proof is made by induction with the following assumption.
  \begin{Assumption}
    \label{as:div-free}
    Let $k \in \mathbb{N}$, $k \leq d$. Given a smooth vector field $\mathbf{v}$ such that $\div_k \mathbf{v} := \sum_{i=1}^k v_i = 0$, there exists scalar functions $\psi_j^i:\mathbb{R}^d \rightarrow \mathbb{R}$, $1 \leq i,j \leq k$ with $\psi_j^i = - \psi_i^j$ such that $v_i = \sum_j \partial_j \psi_j^i$.
  \end{Assumption}

  The Assumption \ref{as:div-free} is trivial for $k=0$. Suppose it is true for $k-1$ we prove it for $k$: assume $\div_k \mathbf{v} = 0 $ and let
  \begin{equation}
    \label{ap:eq:f1}
    f_1(x_1,...,x_n) = \int_0^{x_1} \partial_k v_k(\xi, x_2, ..., x_n) d\xi.
  \end{equation}
  Since $\partial_1 f_1 = \partial_x v_k$ one has
  \begin{equation*}
    \partial_1 (v_1 + f_1) + \partial_2 v_2 + ... + \partial_{k-1}v_{k-1} = 0.
  \end{equation*}
  Thanks to Assumption \ref{as:div-free} there exists functions $\psi_j^i$ with $\psi_j^i= -\psi_i^j$ such that
  \begin{equation}
    \label{ap:eq:psi:as}
    v_1 + f_1 = \sum_{j=1}^{k-1} \partial_j \psi_j^1, \quad v_i = \sum_{j=1}^{k-1} \partial_j \psi_j^i, \text{ for } 2 \leq i \leq k-1.
  \end{equation}
  Now we define
  \begin{equation}
    \label{ap:eq:f2}
    \begin{aligned}
      f_2(x_1,...,x_d) = \int_0^{x_1} v_k(\xi, x_2,...,x_{k-1}, 0, x_{k+1} ..., x_d)d\xi \\ - \int_0^{x_k} f_1(x_1,...,x_{k-1}, \xi, ..., x_d) d\xi,
    \end{aligned}
  \end{equation}
  then
  \begin{equation}
    \label{ap:eq:dk-f2}
    \partial_k f_2 = -f_1,
  \end{equation}
  and using \eqref{ap:eq:f1} in \eqref{ap:eq:f2} one gets
  \begin{equation}
    \label{ap:eq:d1-f2}
    \begin{aligned}
      \partial_1 f_2 = v_k(x_1,...,x_{k-1}, 0, ..., x_d) \\ - \int_0^{x_k} \partial_k v_k(x_1,...,x_{k-1}, \xi, ..., x_d) d\xi = -v_k.
    \end{aligned}
  \end{equation}
  Now we extend the functions $\psi_j^i$, $1 \leq i,j \leq k-1$ by defining $\psi_k^1=-\psi_1^k=f_2$ and $\psi_k^i=-\psi_i^k=0$ for $2 \leq i \leq k$. Then extending the equations \eqref{ap:eq:psi:as} with $k$ one has
  \begin{equation*}
    \sum_{j=1}^k \partial_j \psi_j^1 = v_1 + f_1 + \partial_k f_2 = v_1, \quad \sum_{j=1}^k \partial_j \psi_j^i = v_i,
  \end{equation*}
  for $2 \leq i \leq k-1$ and where we used the equality \eqref{ap:eq:dk-f2} in the first equation. Moreover with the definition of the function $\psi_j^k$ one has
  \begin{equation*}
    \sum_{j=1}^k \partial_j \psi_j^k = - \partial_1 f_2=v_k,
  \end{equation*}
  thanks to \eqref{ap:eq:d1-f2}. This proves Assumption \ref{as:div-free} for $k$.
\end{proof}

\subsubsection{Proof of Lemma \ref{lem:div-free-bc-v}}

\begin{Lemma*}
  Let $\Omega = [-1, 1]^d$ and consider the coefficients
  \begin{equation*}
    \psi_j^i(\mathbf{x}) = h_i(x_i)h_j(x_j)\widetilde{\psi_j^i}(\mathbf{x})
  \end{equation*}
  where $h_i(x_i) = h_i^1(x_i-1)h_i^2(x_i+1)$, $h_i^1$, $h_i^2$ are functions satisfying $h_i^1(0)=h_i^2(0)=0$ and $\widetilde{\psi}_j^i(\mathbf{x}):\mathbb{R}^d \rightarrow \mathbb{R}$ are bounded functions. Then the function $\mathbf{v}$ defined in Proposition \ref{prop:div-free-v} satisfies $\nabla \cdot \mathbf{v} = 0$ and $\mathbf{v} \cdot \mathbf{n} = 0$ on $\partial \Omega.$
\end{Lemma*}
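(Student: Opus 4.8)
The plan is to split the statement into its two assertions: that $\mathbf{v}$ is divergence free, which will follow immediately from Proposition~\ref{prop:div-free-v}, and that $\mathbf{v}\cdot\mathbf{n}=0$ on $\partial\Omega$, which is a short boundary computation exploiting the particular product structure of~\eqref{eq:psi-boundary}.

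First I would record that the coefficients~\eqref{eq:psi-boundary} are antisymmetric: since $h_i(x_i)h_j(x_j)=h_j(x_j)h_i(x_i)$ is symmetric in the pair $(i,j)$ and the $\widetilde\psi_j^i$ are antisymmetric ($\widetilde\psi_j^i=-\widetilde\psi_i^j$), we get $\psi_j^i=h_i(x_i)h_j(x_j)\widetilde\psi_j^i=-h_j(x_j)h_i(x_i)\widetilde\psi_i^j=-\psi_i^j$. The $\widetilde\psi_j^i$ being smooth and bounded, the $\psi_j^i$ are smooth scalar functions, so $\mathbf{v}$ defined by $v_i=\sum_j\partial_{x_j}\psi_j^i$ is exactly the field produced by Proposition~\ref{prop:div-free-v}, whence $\nabla\cdot\mathbf{v}=0$ (and boundedness of the $\widetilde\psi_j^i$ guarantees $\mathbf{v}$ is well defined on $\overline\Omega$). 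Then for the boundary condition: a point $\mathbf{x}\in\partial\Omega$ has some coordinate $x_k\in\{-1,1\}$, and on the relative interior of the corresponding face the outward normal is $\mathbf{n}=\pm\mathbf{e}_k$ with $\mathbf{e}_k$ the $k$-th standard basis vector, so $\mathbf{v}\cdot\mathbf{n}=\pm v_k(\mathbf{x})$; it suffices to show $v_k(\mathbf{x})=0$ whenever $x_k=\pm1$. Write $v_k(\mathbf{x})=\sum_{j=1}^d\partial_{x_j}\psi_j^k(\mathbf{x})$. The diagonal term vanishes identically because antisymmetry forces $\psi_k^k\equiv0$. For $j\neq k$ the factor $h_k(x_k)$ does not depend on $x_j$, hence $\partial_{x_j}\psi_j^k=h_k(x_k)\,\partial_{x_j}\!\big(h_j(x_j)\widetilde\psi_j^k(\mathbf{x})\big)$, and $h_k(x_k)=h_k^1(x_k-1)h_k^2(x_k+1)$ vanishes at $x_k=1$ (since $h_k^1(0)=0$) and at $x_k=-1$ (since $h_k^2(0)=0$). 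Thus every $j\neq k$ term vanishes at such $\mathbf{x}$, so $v_k(\mathbf{x})=0$ and $\mathbf{v}\cdot\mathbf{n}=0$ on $\partial\Omega$.

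I do not expect a genuine obstacle: the construction is tailored precisely so that each $\psi_j^k$ carries the vanishing factor $h_k(x_k)$ in a coordinate not differentiated when forming $v_k$. The only points meriting care are (i) the existence of the derivatives used in the product rule, which is why the $\widetilde\psi_j^i$ are taken smooth, and (ii) the behaviour at edges and corners of the cube, where the outward normal is not single-valued; there one simply notes that $v_k(\mathbf{x})=0$ for \emph{every} index $k$ with $x_k=\pm1$, so $\mathbf{v}$ is tangent to $\partial\Omega$ irrespective of which incident face is approached, which is the meaning of~\eqref{eq:v-prop-boundary} at such points.
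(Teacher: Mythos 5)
Your argument is correct and follows the same route as the paper's proof: antisymmetry kills the diagonal term $\psi_k^k$, so $h_k(x_k)$ is never differentiated and can be factored out of $v_k$, which therefore vanishes on $x_k=\pm1$. You spell out the antisymmetry check and the edge/corner case more explicitly, but the key idea is identical.
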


\begin{proof}
  Indeed since $\psi_i^i = 0$  there is no $\partial_{x_i}$ term which appear in the sum of \eqref{eq:div-free} for the component $v_i$. The term $h_{i}(x_{i})$ can therefore be factored that is $v_i = 0$ if $x_i=\pm 1$. Hence $\mathbf{v} \cdot \mathbf{n} = 0$ on $\partial \Omega$.
\end{proof}

\subsubsection{Proof of Lemma \ref{lem:div-free-block}}

\begin{Lemma*}
  Let $n \in \mathbb{N}$, $n \leq d-2$ and consider the function $\mathbf{u}^n: \mathbb{R}^d \rightarrow \mathbb{R}^{d-n}$. Then the vector field $\mathbf{v}^n: \mathbb{R}^d \rightarrow \mathbb{R}^{d}$ defined as
  \begin{equation}
    \label{eq:div-free-case3-2}
    \mathbf{v}^n(\mathbf{x}) = (\nabla \mathbf{u})^n  \ \mathbf{1}^n - [\diag(\nabla \mathbf{u})^n \cdot \mathbf{1}^n]  \mathbf{1}^n,
  \end{equation}
  is divergence free.
\end{Lemma*}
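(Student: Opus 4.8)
The plan is to recognise the vectorised formula \eqref{eq:div-free-case3} as nothing more than a compact rewriting of the antisymmetric-potential construction of Proposition~\ref{prop:div-free-v}, so that the divergence-free property is inherited from that proposition rather than re-derived. First I would check that the scalar functions $(\psi^i_j)^n$ defined in \eqref{eq:psi-n} are smooth and antisymmetric in $(i,j)$: when $i,j\ge n+1$ one has $(\psi^i_j)^n = u^n_{i-n}-u^n_{j-n} = -\big(u^n_{j-n}-u^n_{i-n}\big) = -(\psi^j_i)^n$, and when $\min(i,j)\le n$ both sides vanish; in particular the diagonal entries $(\psi^i_i)^n$ are zero. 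Proposition~\ref{prop:div-free-v} therefore applies to the field $\widehat{\mathbf v}$ with components $\widehat v_i := \sum_{j=1}^d \partial_{x_j}(\psi^i_j)^n$, and guarantees $\nabla\cdot\widehat{\mathbf v}=0$.

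The remaining task is to verify that $\widehat{\mathbf v}=\mathbf v^n$, i.e. that the right-hand side of \eqref{eq:div-free-case3} reproduces $\sum_{j}\partial_{x_j}(\psi^i_j)^n$ coordinate by coordinate. For $i\le n$ every $(\psi^i_j)^n$ vanishes, so $\widehat v_i=0$; on the other side both terms of \eqref{eq:div-free-case3} carry the factor $\mathbf 1^n$ whose $i$-th entry is zero, so the $i$-th coordinate of $\mathbf v^n$ is also zero. For $i\ge n+1$, \eqref{eq:psi-n} gives
\begin{equation*}
\widehat v_i = \sum_{j\ge n+1}\partial_{x_j}\big(u^n_{i-n}-u^n_{j-n}\big)
= \sum_{j\ge n+1}\partial_{x_j}u^n_{i-n}\;-\;\sum_{j\ge n+1}\partial_{x_j}u^n_{j-n}.
\end{equation*}
Reading off the definition \eqref{eq:grad-u-n} of $(\nabla\mathbf u)^n$, the first sum is exactly the $i$-th entry of $(\nabla\mathbf u)^n\,\mathbf 1^n$, while the second sum equals the scalar $\diag(\nabla\mathbf u)^n\cdot\mathbf 1^n$, which is independent of $i$; multiplying this scalar by $\mathbf 1^n$ puts it in coordinate $i$ (since $i\ge n+1$) and zero elsewhere. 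Hence $\widehat v_i=v^n_i$ for every $i$, so $\widehat{\mathbf v}=\mathbf v^n$ and $\nabla\cdot\mathbf v^n=0$.

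The only genuinely delicate part is the bookkeeping in the second paragraph: one has to keep the index shift by $n$ consistent and make sure the subtracted ``trace-like'' term $[\diag(\nabla\mathbf u)^n\cdot\mathbf 1^n]\,\mathbf 1^n$ collects precisely the diagonal contributions $\partial_{x_j}u^n_{j-n}$ produced by the potentials and nothing else. Everything else follows immediately from Proposition~\ref{prop:div-free-v}. As a fallback, one could skip that proposition and compute $\nabla\cdot\mathbf v^n=\sum_i\partial_{x_i}v^n_i$ by hand, the mixed second derivatives cancelling in pairs by $\partial_{x_i}\partial_{x_j}=\partial_{x_j}\partial_{x_i}$, but reusing Proposition~\ref{prop:div-free-v} keeps the argument short.
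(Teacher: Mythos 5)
Your proposal is correct and takes essentially the same approach as the paper's proof, which simply observes that \eqref{eq:div-free-case3-2} is the construction \eqref{eq:div-free} with the antisymmetric potentials \eqref{eq:psi-n} and then invokes Proposition~\ref{prop:div-free-v}; your write-up supplies the coordinate bookkeeping that the paper leaves implicit. One small remark: your identification of $(\nabla\mathbf u)^n\,\mathbf 1^n$ with $\sum_{j\ge n+1}\partial_{x_j}u^n_{i-n}$ implicitly uses the standard Jacobian convention $(\nabla\mathbf u)^n_{ij}=\partial_{x_j}u^n_{i-n}$, which is the transpose of what \eqref{eq:grad-u-n} literally states ($\partial_{i-n}u^n_{j-n}$); your reading is the one under which \eqref{eq:div-free-case3-2} actually reproduces \eqref{eq:psi-n} and is divergence free, so this is an index typo in \eqref{eq:grad-u-n} rather than a gap in your argument.
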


\begin{proof}
  As explain in Appendix \ref{ap:incompressible-construction} the formulation \eqref{eq:div-free-case3-2} is equivalent to consider $\mathbf{v}^n$ under the form \eqref{eq:div-free} with $\psi_j^i$ defined as in \eqref{eq:psi-n}. From Proposition \ref{prop:div-free-v} the function $\mathbf{v}^n$ is divergence free.
\end{proof}

\subsubsection{Proof of Proposition \ref{prop:universal-approx-div-free}}

\begin{Proposition*}
  Let $\mathbf{v}$ be a divergence free function in $\mathbb{R}^d$. Then there exists $d-1$ functions $\mathbf{v}^0, ..., \mathbf{v}^{d-2}$ constructed as in Lemma \ref{lem:div-free-block} such that
  \begin{equation}
    \label{eq:universal}
    \mathbf{v}(\mathbf{x}) = \sum_{n=0}^{d-2} \mathbf{v}^n(\mathbf{x}).
  \end{equation}
\end{Proposition*}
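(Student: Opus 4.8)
The plan is to reduce the statement to a purely algebraic decomposition of the antisymmetric potentials associated to $\mathbf{v}$, and then to prove that decomposition by an elimination argument that clears one index at a time.

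First I would apply Proposition~\ref{prop:div-free-v}: since $\nabla\cdot\mathbf{v}=0$, there are smooth scalar functions $\psi_j^i:\mathbb{R}^d\to\mathbb{R}$ with $\psi_j^i=-\psi_i^j$ and $v_i=\sum_{j=1}^d\partial_{x_j}\psi_j^i$. As observed in the proof of Lemma~\ref{lem:div-free-block}, the field $\mathbf{v}^n$ of~\eqref{eq:div-free-case3} is exactly the one with $v_i^n=\sum_{j=1}^d\partial_{x_j}(\psi_j^i)^n$, where $(\psi_j^i)^n$ is the block potential~\eqref{eq:psi-n} built from $\mathbf{u}^n$. Since the map sending a potential family to its associated field is linear, it therefore suffices to exhibit functions $\mathbf{u}^0,\dots,\mathbf{u}^{d-2}$ with $\mathbf{u}^n:\mathbb{R}^d\to\mathbb{R}^{d-n}$ such that
\begin{equation*}
  \psi_j^i=\sum_{n=0}^{d-2}(\psi_j^i)^n\qquad\text{for all }1\le i,j\le d;
\end{equation*}
applying $\sum_{j}\partial_{x_j}$ to both sides then yields~\eqref{eq:universal}.

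I would build the $\mathbf{u}^n$ recursively while maintaining the invariant that the residual potential
\[
  [\psi^{(n)}]_j^i:=\psi_j^i-\sum_{k=0}^{n-1}(\psi_j^i)^k
\]
vanishes whenever $\min(i,j)\le n$ (true vacuously for $n=0$). Given $\psi^{(n)}$, which by the invariant is supported on the indices $\{n+1,\dots,d\}$, define $\mathbf{u}^n\in\mathbb{R}^{d-n}$ by $u^n_1:=0$ and $u^n_{\,j-n}:=-[\psi^{(n)}]_j^{\,n+1}$ for $n+2\le j\le d$, and form the block potential $(\psi_j^i)^n$ as in~\eqref{eq:psi-n}. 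Then $(\psi_j^{\,n+1})^n=u^n_1-u^n_{\,j-n}=[\psi^{(n)}]_j^{\,n+1}$ for all $j$, so after subtracting this block the residual $\psi^{(n+1)}$ vanishes on row $n+1$, and by antisymmetry on column $n+1$ as well; since $(\psi_j^i)^n=0$ whenever $\min(i,j)\le n$, subtracting it does not disturb rows $1,\dots,n$. Hence the invariant passes from $n$ to $n+1$, and every function produced is smooth, being a finite combination of the smooth $\psi_j^i$.

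Carrying the recursion to $n=d-2$, the invariant forces $\psi^{(d-1)}$ to be supported on the single pair $(d,d)$, where antisymmetry gives $[\psi^{(d-1)}]_d^d=0$; hence $\psi^{(d-1)}\equiv 0$, i.e. $\psi_j^i=\sum_{n=0}^{d-2}(\psi_j^i)^n$, realized by exactly the $d-1$ functions $\mathbf{u}^0,\dots,\mathbf{u}^{d-2}$, each of which feeds~\eqref{eq:div-free-case3} through Lemma~\ref{lem:div-free-block}. The only point requiring care is the index bookkeeping in~\eqref{eq:psi-n}: one must check that subtracting block $n$ exactly removes the $(n+1)$-st row and column of the residual without reintroducing anything in the earlier rows, which is guaranteed by the ``otherwise $0$'' clause of~\eqref{eq:psi-n} together with antisymmetry. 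I expect this bookkeeping to be the main — though essentially routine — obstacle; the rest is linearity and a one-line termination argument.
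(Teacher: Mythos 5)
Your proof is correct and follows the same strategy as the paper: both invoke Proposition~\ref{prop:div-free-v} to reduce the problem to decomposing the antisymmetric potentials $\psi_j^i$ into the block potentials $(\psi_j^i)^n$ of~\eqref{eq:psi-n}, and both build $\mathbf{u}^0,\dots,\mathbf{u}^{d-2}$ recursively by clearing one row/column of the antisymmetric array at each step. Your residual-invariant bookkeeping is a cleaner rephrasing of the paper's explicit recursion~\eqref{eq:universal-eq1}, and it actually keeps the signs straight where the paper's displayed derivation of $\psi_j^k=(\psi_j^k)^{k-1}+\sum_{n=0}^{k-2}(\psi_j^k)^n$ from~\eqref{eq:universal-eq1} contains a sign slip (one has $(\psi_j^k)^{k-1}=u_1^{k-1}-u_{j-k+1}^{k-1}$, so the recursion as printed yields $-(\psi_j^k)^{k-1}$ rather than $+(\psi_j^k)^{k-1}$).
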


\begin{proof}
  Let $\mathbf{v}(\mathbf{x})$ be a divergence free function and denote $(\psi_j^i)$ its coefficients from \eqref{eq:div-free}. We construct the associate vectors $\mathbf{u}^n$ of the functions $\mathbf{v}^n$ procedurally: for $k=1,...,d-1$, we iteratively chose $u_1^{k-1}$ arbitrarily and define the other components of the vector $\mathbf{u}^{k-1}$ as
  \begin{equation}
    \label{eq:universal-eq1}
    u_{j-k+1}^{k-1} = u_1^{k-1} - \sum_{n=0}^{k-2}(u_{k-n}^n - u_{j-n}^n) + \psi_j^k, \quad j \geq k+1,
  \end{equation}
  note that the sum is well-defined because the components $u^{k}$ are constructed iteratively from $k=1$ to $k=d-1$ and we have adopted the convention $\sum_{n=0}^{-1} = 0$. We claim that with this construction we recover the equality \eqref{eq:universal}. Indeed from \eqref{eq:universal-eq1} one has
  \begin{equation*}
    \psi_j^k  = u_{j-k+1}^{k-1} -u_1^{k-1} + \sum_{n=0}^{k-2}(u_{k-n}^n - u_{j-n}^n), \quad j \geq k+1.
  \end{equation*}
  Using \eqref{eq:psi-n} one gets
  \begin{equation*}
    \psi_j^k  = (\psi_j^k)^{k-1} + \sum_{n=0}^{k-2}(\psi_j^k)^n, \quad j \geq k+1.
  \end{equation*}
  Again using \eqref{eq:psi-n} one has $(\psi_j^k)^n = 0$ for $n \geq k$ and therefore
  \begin{equation*}
    \psi_j^k  = \sum_{n=0}^{d-2}(\psi_j^k)^n, \quad j \geq k+1.
  \end{equation*}
  Since the coefficients $\psi_j^k$ and $(\psi_j^k)^n$ are all antisymmetric this equality is also satisfied for $j < k+1$. We conclude with the decomposition \eqref{eq:div-free} of the divergence free functions.
\end{proof}

\subsection{Additional material}

\subsubsection{Total derivative expansion}
\label{ap:total-derivative}

In this subsection we recall the expansion of the Lagrangian derivative $\frac{D}{Dt} \mathbf{v}(t, \mathbf{X}(t, \mathbf{x}))= \frac{d}{dt} \mathbf{v}(t, \mathbf{X}(t, \mathbf{x})) + (\mathbf{v}(t, \mathbf{x}) \cdot \nabla) \mathbf{v}(t, \mathbf{X}(t, \mathbf{x})).$ Assume $\mathbf{X}(t, \mathbf{x})$ follows the ODE \eqref{eq:ode}
\begin{equation*}
  \begin{cases}
    \frac{d}{dt}\mathbf{X}(t, \mathbf{x}) =  \mathbf{v}(t, \mathbf{X}(t, \mathbf{x})), \quad \mathbf{x} \in  \Omega , \quad 0 \leq t \leq T, \\
    \mathbf{X}(0, \mathbf{x}) = \mathbf{x},
  \end{cases}
\end{equation*}
We recall that the notation $d/dt$ must be understood as deriving the first variable of $\mathbf{v}(t, \mathbf{X}(t, \mathbf{x}))$ while $D/Dt$ represents the derivative in time of the function $f(t) := \mathbf{v}(t, \mathbf{X}(t, \mathbf{x})).$ By deriving the first and second variable with respect to $t$ one has
\begin{equation*}
  \frac{D}{Dt} \mathbf{v}(t, \mathbf{X}(t, \mathbf{x})) = \frac{d}{dt} \mathbf{v}(t, \mathbf{X}(t, \mathbf{x})) + (\frac{d}{dt} \mathbf{X}(t, \mathbf{x}) \cdot \nabla) \mathbf{v}(t, \mathbf{X}(t, \mathbf{x})).
\end{equation*}
Using $\frac{d}{dt}\mathbf{X}(t, \mathbf{x}) =  \mathbf{v}(t, \mathbf{X}(t, \mathbf{x}))$ one finally obtains
\begin{equation*}
  \frac{D}{Dt} \mathbf{v}(t, \mathbf{X}(t, \mathbf{x})) = \frac{d}{dt} \mathbf{v}(t, \mathbf{X}(t, \mathbf{x})) + (\mathbf{v}(t, \mathbf{x} \cdot \nabla) \mathbf{v}(t, \mathbf{X}(t, \mathbf{x})).
\end{equation*}

\section{Experiment details}
\label{ap:implementation}

We run the experiments on two separate GPUs: a NVIDIA Quadro RTX 8000 and a NVIDIA TITAN X. Our loss is given by the negative log-likelihood \eqref{eq:change-variable}. The FFJORD model has an inverse function directly available in the code, which is not the case for the BNAF model. Therefore, as explained in the Section \ref{sn:procedure}, the GP flow is trained using $\mathbf{f}^{-1}$ applied on a standard multi-dimensional normal distribution for FFJORD, whereas only the training data are used for BNAF.

\textbf{Toy datasets.} We give the parameters used for the $2$D toy experiments in Table \ref{tb:params-toy}\footnote{The $2$D test cases can be found for example here https://github.com/rtqichen/ffjord/blob/master/lib/toy\_data.py}.
We consider a training set of $80$K samples and a testing set of $20$K samples, $15$ time steps with a Runge-Kutta 4 discretization and a GP flow with two intermediate layers of size $15$. For the Euler penalization we take the exact same parameters with $\lambda=5 \times 10^{-4}$ which is divided periodically by a factor $2$ the number of division is given in table \ref{tb:params-toy}.

\begin{table*}
  \centering
  \begin{tabular}{ c c c c c c c c}
    \midrule
    Model                   & \makecell{nb params                                                  \\ (pre-trained model) + GP} & epochs & nb layers & batch size & lr                  & nb decay euler \\
    \midrule
    \textbf{eight Gaussian} &                     &        &    &        &                         \\
    BNAF+GP                 & ($15.4$K) + $332$   & $2000$ & 20 & $1000$ & $ 10^{-2}$          & 5 \\
    \midrule
    \textbf{two moons}      &                     &        &    &        &                         \\
    BNAF+GP                 & ($15.4$K) + $332$   & $1000$ & 15 & $1000$ & $ 2 \times 10^{-3}$ & 5 \\
    \midrule
    \textbf{pinwheel}       &                     &        &    &        &                         \\
    BNAF+GP                 & ($15.4$K) + $332$   & $800$  & 15 & $1000$ & $ 2 \times 10^{-3}$ & 4 \\
    \midrule
  \end{tabular}
  \caption{Parameters used for the training of GP flows on the $2$D toy examples.}
  \label{tb:params-toy}
\end{table*}

\begin{table*}
  \centering
  \begin{tabular}{ c c c c c c}
    \midrule

    Model           & \# params (pre-trained model) + GP & epochs          & batch size & lr                 \\
    \midrule
    \textbf{dSprites dataset}                                                                                \\
    FFJORD+GP       & ($17.8$K) + $10.3$K                & \ $\approx 60$* & $1024$     & $5 \times 10^{-4}$ \\
    FFJORD+GP+EULER & ($17.8$K) + $10.3$K                & $1250$ \        & $1024$     & $5 \times 10^{-4}$ \\
    \midrule
    \textbf{MNIST dataset}                                                                                   \\
    FFJORD+GP       & ($36.9$K) + $40.6$K                & \ $1250$        & $1024$     & $5 \times 10^{-4}$ \\
    FFJORD+GP+EULER & ($36.9$K) + $40.6$K                & $1250$ \        & $1024$     & $5 \times 10^{-4}$ \\
    \midrule
    \textbf{Chairs dataset}                                                                                  \\
    FFJORD+GP       & ($36.9$K) + $40.6$K                & \ $1250$        & $1024$     & $5 \times 10^{-4}$ \\
    FFJORD+GP+EULER & ($36.9$K) + $40.6$K                & $1250$ \        & $1024$     & $5 \times 10^{-4}$ \\
    \midrule
  \end{tabular}
  \caption{Parameters used for the training of GP flows. Each epoch is made of $2.4 \times 10^{5}$ samples randomly generated from a normal distribution. \\
    * The training is stopped early due to out-of-domain particles.}
  \label{tb:params-tab}
\end{table*}

\begin{table*}
  \centering
  \begin{tabular}{c c c c c}
    \midrule
    Model             & nb neurons & nb layers & nb blocks & Total nb params \\
    \midrule
    CPFlow (1 block)  & 128        & 10        & 1         & 89.7K           \\
    CPFlow (3 blocks) & 64         & 5         & 3         & 36.6K           \\
    \midrule
  \end{tabular}
  \caption{ Architectures used for CPFlow for the dSprites, MNIST and chairs datasets. }
  \label{tb:params-CPFlow}
\end{table*}

\textbf{dSprites, MNIST and chairs datasets.} The VAE architecture used in the experiments is taken from the github repository of Yann Dubois \citep{Dubois19}. The parameters used for the GP flows are given in Table \ref{tb:params-tab} and the architectures of CPFlow are given in Table \ref{tb:params-CPFlow}. For all test cases we consider a GP flow with $15$ time steps, three intermediate layers of $50$ parameters for the dSprites dataset and four intermediate layers of $100$ parameters for the MNIST and chairs dataset.  For the Euler penalization we take an initial parameter $\lambda=5 \times 10^{-5}$ which is then divided $10$ times periodically by a factor $2$ during the training.

\section{Additional details on 2d examples}
\label{sn:additional-toy}

We provide additional details on some toy examples presented in Section \ref{sn:experiments-2d}. First we consider the eight Gaussian test case, study the transformation of a uniform mesh by the NF model and compare with the CPFlow architecture. As shown in Figure \ref{8-gaussian}, the mesh transformation when adding the GP flow is getting closer to the CP-Flow one, and the OT cost is roughly the same. Another illustration of the transformation of the source distribution is given at the bottom of Figure \ref{8-gaussian}. We clearly see the added value of GP flow, as the points' configuration gets much closer to an isotropic distribution indicating that we get closer to the Monge map. Note that adding the GP flow does not affect the estimated density nor the test loss.
\begin{figure*}
  \centering
  \includegraphics[scale=0.225]{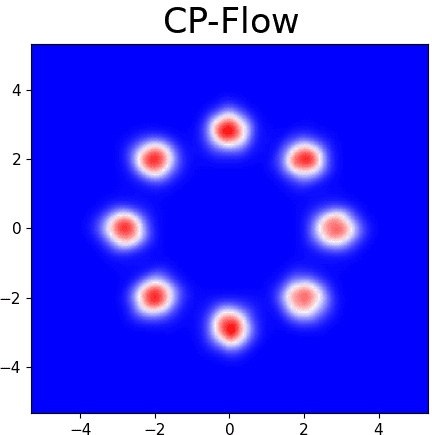} \hspace{0.4cm}
  \includegraphics[scale=0.225]{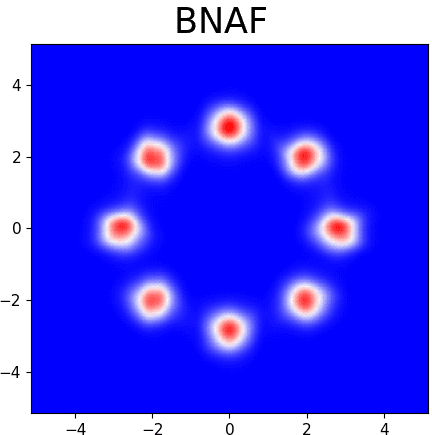} \hspace{0.4cm}
  \includegraphics[scale=0.225]{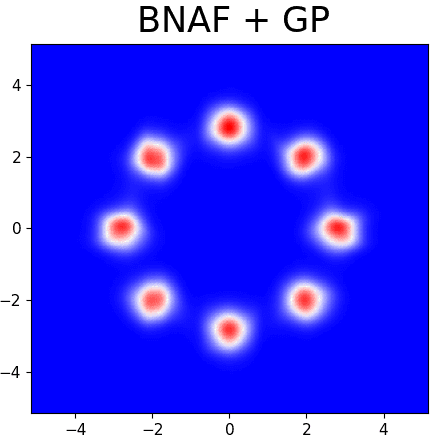}\hspace{0.3cm}

  \vspace{0.1cm}

  \includegraphics[scale=0.19]{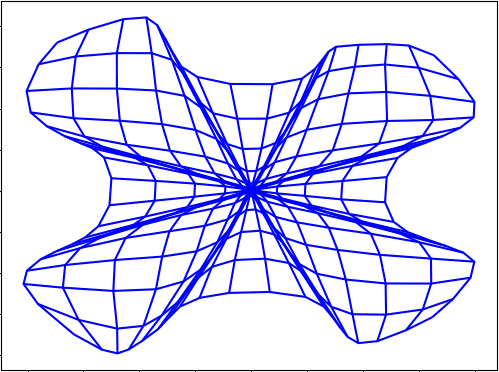} \hspace{0.3cm}
  \includegraphics[scale=0.19]{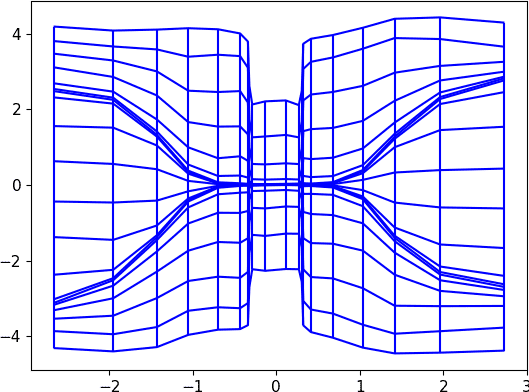} \hspace{0.3cm}
  \includegraphics[scale=0.19]{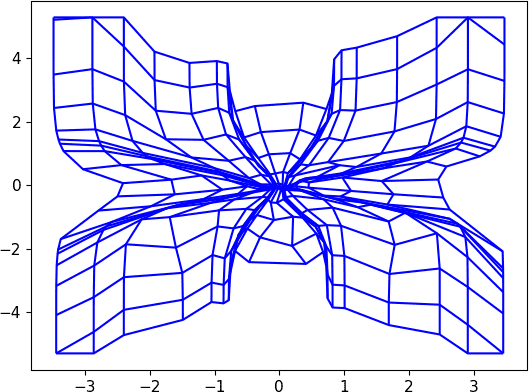}

  \vspace{0.1cm}

  \includegraphics[scale=0.19]{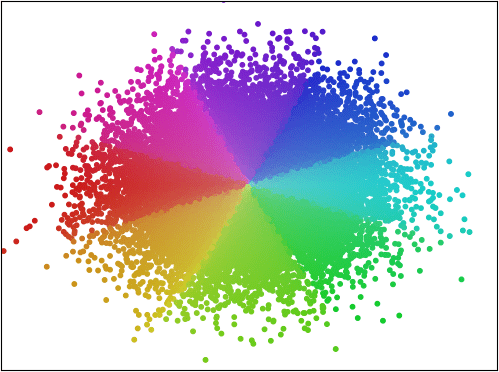} \hspace{0.5cm}
  \includegraphics[scale=0.19]{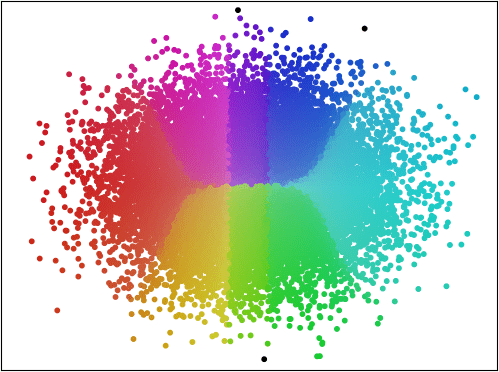} \hspace{0.5cm}
  \includegraphics[scale=0.19]{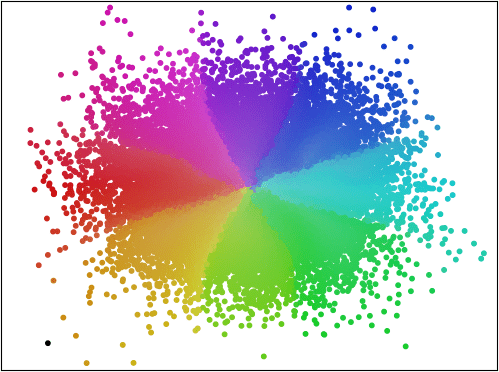}
  \caption{Eight Gaussian test case. Top: density estimation. Middle: deformation of a uniform mesh by the NF model. Bottom: Colored source distribution. From left to right: CP-Flow (OT=$2.62$, loss=$2.86$), BNAF (OT=$2.88$, loss=$2.85$), BNAF+GP (OT=$2.60$, loss=$2.85$).}
  \label{8-gaussian}
\end{figure*}
\begin{figure*}
  \centering
  \includegraphics[scale=0.285]{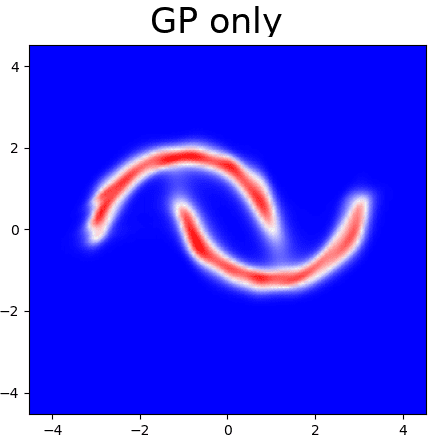} \hspace{0.8cm}
  \includegraphics[scale=0.285]{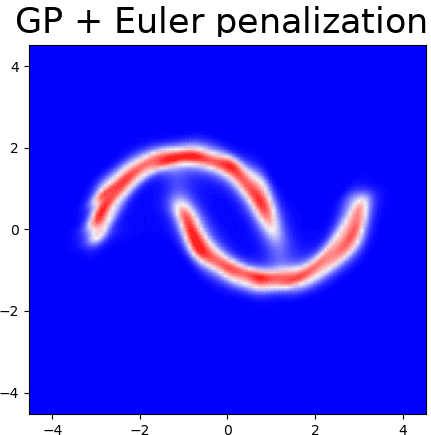} \hspace{0.8cm}
  \includegraphics[scale=0.285]{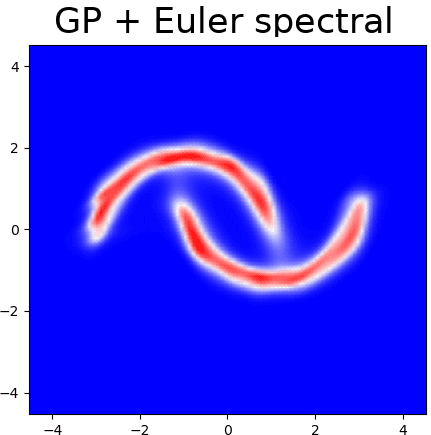}
  \hspace{0.2cm}

  \includegraphics[scale=0.255]{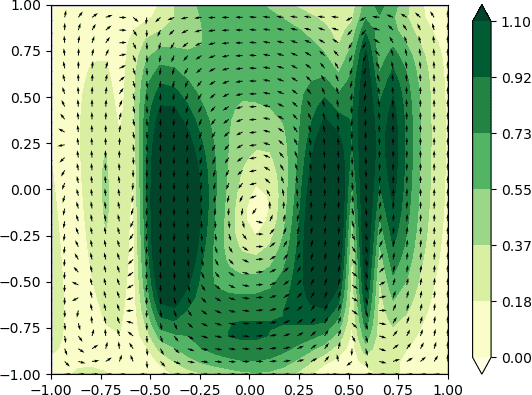} \hspace{0.4cm}
  \includegraphics[scale=0.255]{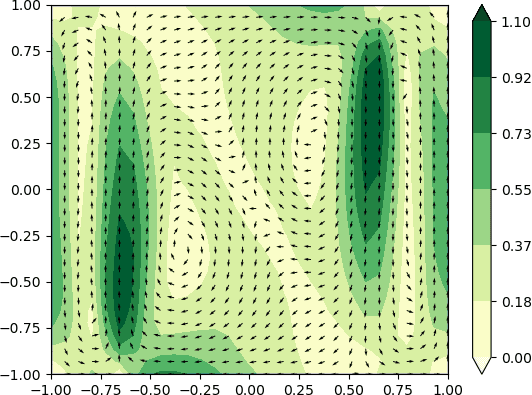} \hspace{0.4cm}
  \includegraphics[scale=0.255]{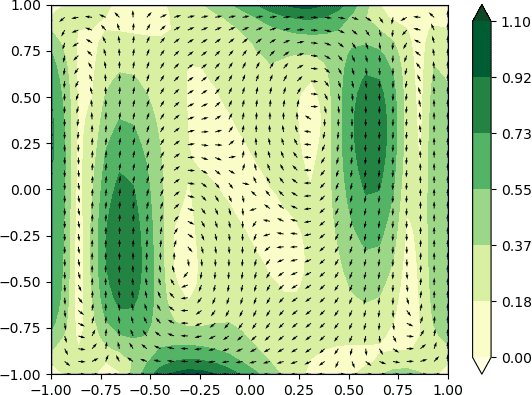}

  \includegraphics[scale=0.29]{moons-traj-no-euler.pdf}\hspace{0.6cm}
  \includegraphics[scale=0.29]{moons-traj-euler.pdf}\hspace{0.6cm}
  \includegraphics[scale=0.29]{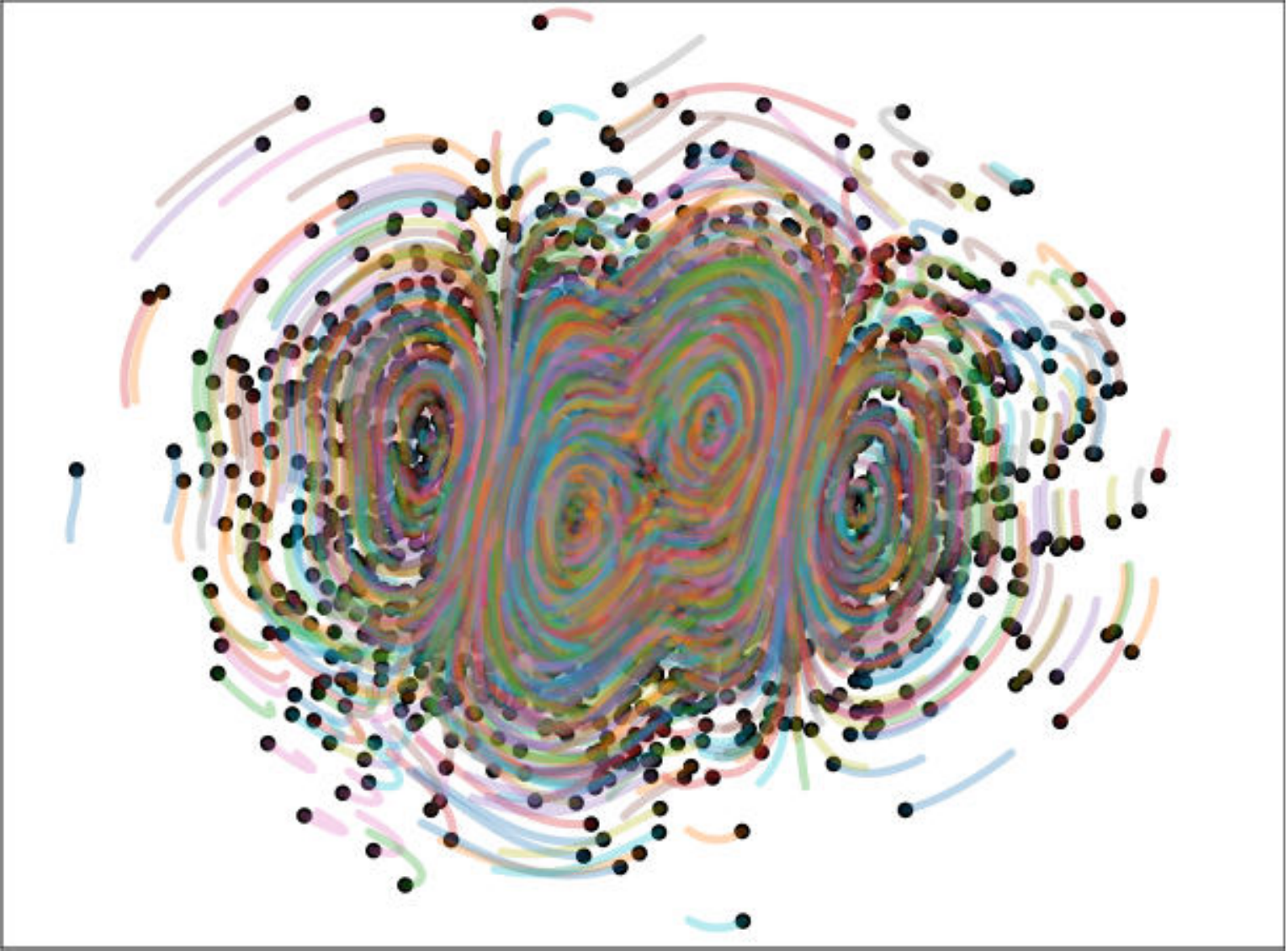}

  \caption{Two moons test case with the BNAF model (initial OT=$1.35$). Top:
    target density after the application of the GP flow. Middle: representation of the initial incompressible velocity field $\mathbf{v}_0$ of the GP flow in $(-1,1)^d$ (where the color map represents the norm of the vector field)}. Bottom:  trajectories of the Gaussian particles under the action of the GP flow. From left to right: GP only ($\bar{\mathcal{E}}=0.28$, OT=$1.05$), GP with Euler through the penalization procedure \eqref{eq:penalization-term} ($\bar{\mathcal{E}}=0.09$, OT=$1.04$), GP with a spectral method solving directly the Euler equations ($\bar{\mathcal{E}}=0.08$, OT=$1.06$). For the last two mentioned the initial velocity fields and the trajectories are very similar and the energy $\bar{\mathcal{E}}$ is lowered showing that the penalization procedure efficiently solve Euler's equations.
  \label{fig:init-v-euler}
\end{figure*}

\textbf{Euler's penalization.} We now turn our attention to the two moons dataset and evaluate how close our penalization procedure is from the true solution of Euler equations. To this end we compare our solution with a more standard method where the initial condition is optimized through a numerical scheme (implemented in Pytorch to be differentiable) which directly solves the Euler equations, in the line of "differentiable physics" approaches~\citep{de2018end}. The baseline numerical scheme is a spectral method, a very efficient and popular numerical method \citep{Canuto07}. We do not go into details on how to implement such methods since they are restricted in practice to low dimensions and we refer to \citet{Canuto07, Quarteroni09} for a complete presentation. To have a more precise comparison we use $20$ layers and a learning rate of $5 \times 10^{-3}$. On Figure \ref{fig:init-v-euler}, we compare three different cases: GP alone, GP with Euler penalization and GP with Euler constrain through a spectral method.  As indicated by the OT costs the final positions of particles are very similar for each model. We observe that the trajectories, the initial incompressible velocities and the energy $\bar{\mathcal{E}} = \frac{1}{N} \sum_{i=1}^N \int_0^T \frac{1}{2}\|\mathbf{v}(t, \mathbf{x}_i)\|^2 dt$ are very similar both for the penalization-based procedure and the spectral method which show that we can correctly solve the Euler equations in 2D with our penalization approach (the additional benefit being the generalization to high dimensions). Without the addition of the Euler constraint however, the initial velocity field and the trajectories look very different resulting in a higher energy, and thus a transformation that does not correspond to a geodesic in $\SDiff(\Omega)$.

\section{Interpolation examples}
\label{ap:interp}

\begin{figure*}[h]
  \centering
  \includegraphics[scale=0.29]{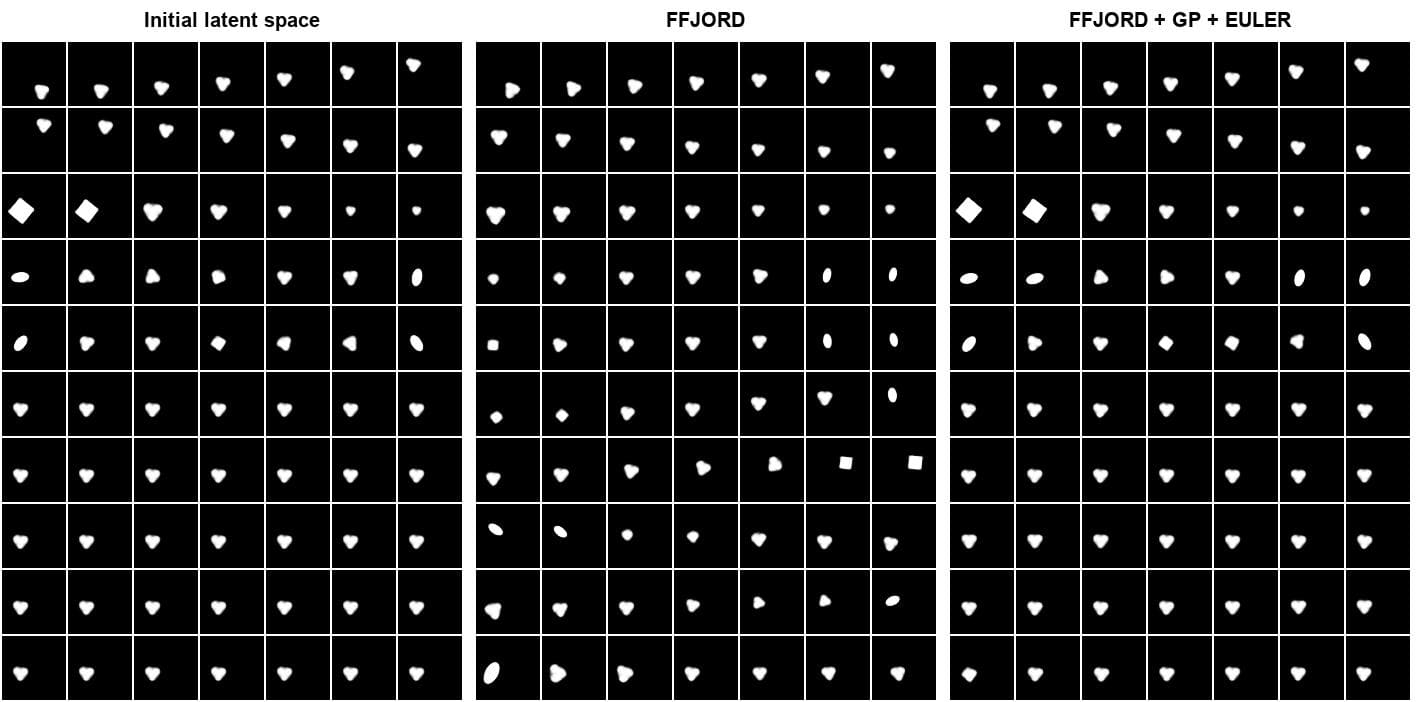}
  \includegraphics[scale=0.29]{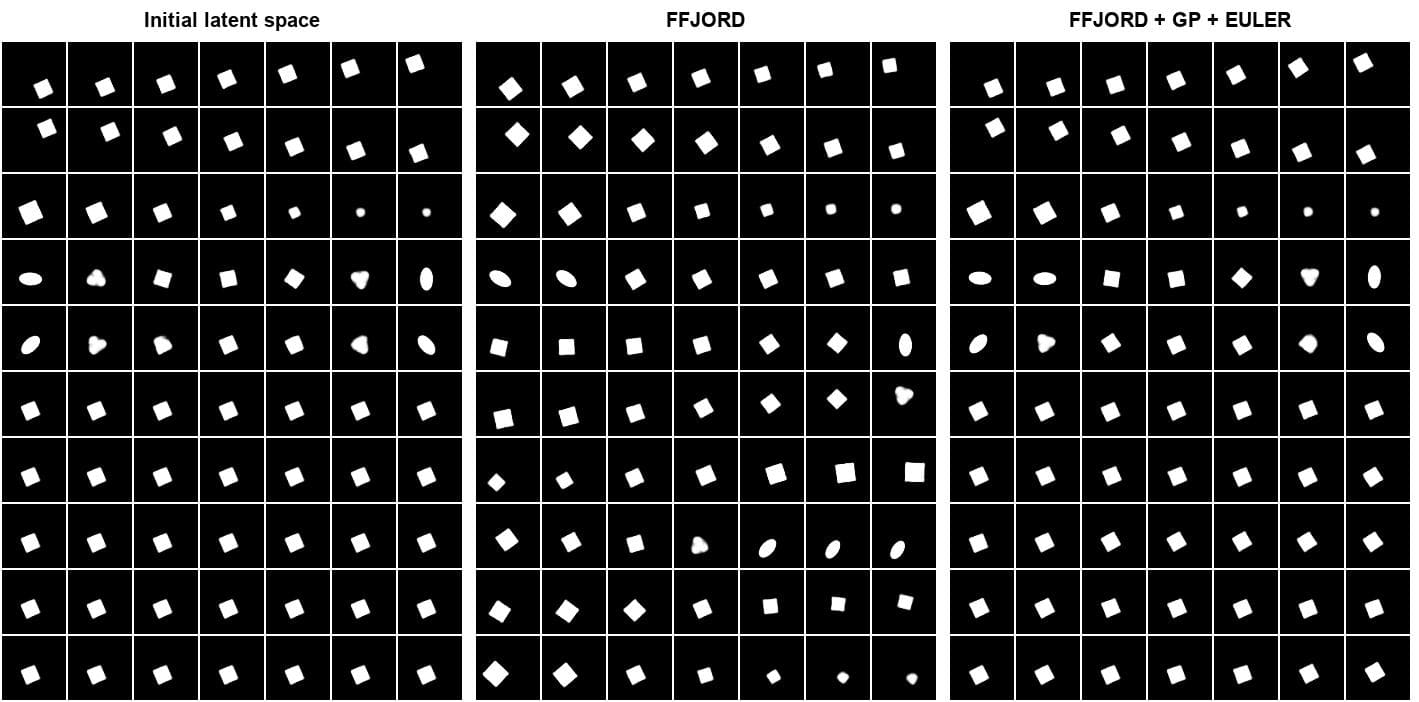}
  \includegraphics[scale=0.29]{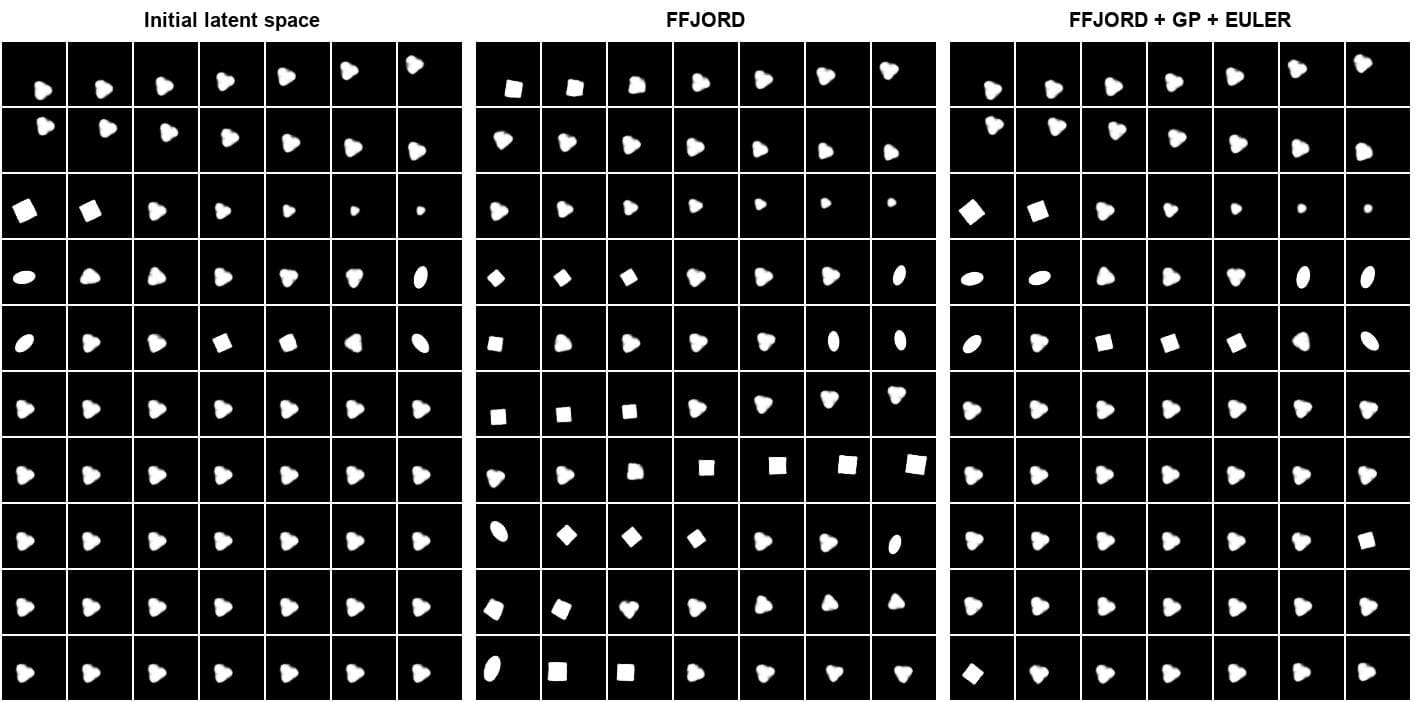}
  \vspace{-0.2cm}
  \caption{Examples of interpolation for the dSprites dataset where each block correspond to the interpolation of a different data point along the $10$ dimensions axis represented by the rows. The dimensions are sorted with respect to their KL divergence in the VAE latent space, so the higher rows carry more information while the last rows should leave the image unchanged.}
  \label{fig:dsprites-interp1}
\end{figure*}

\begin{figure*}[h]
  \centering
  \includegraphics[scale=0.29]{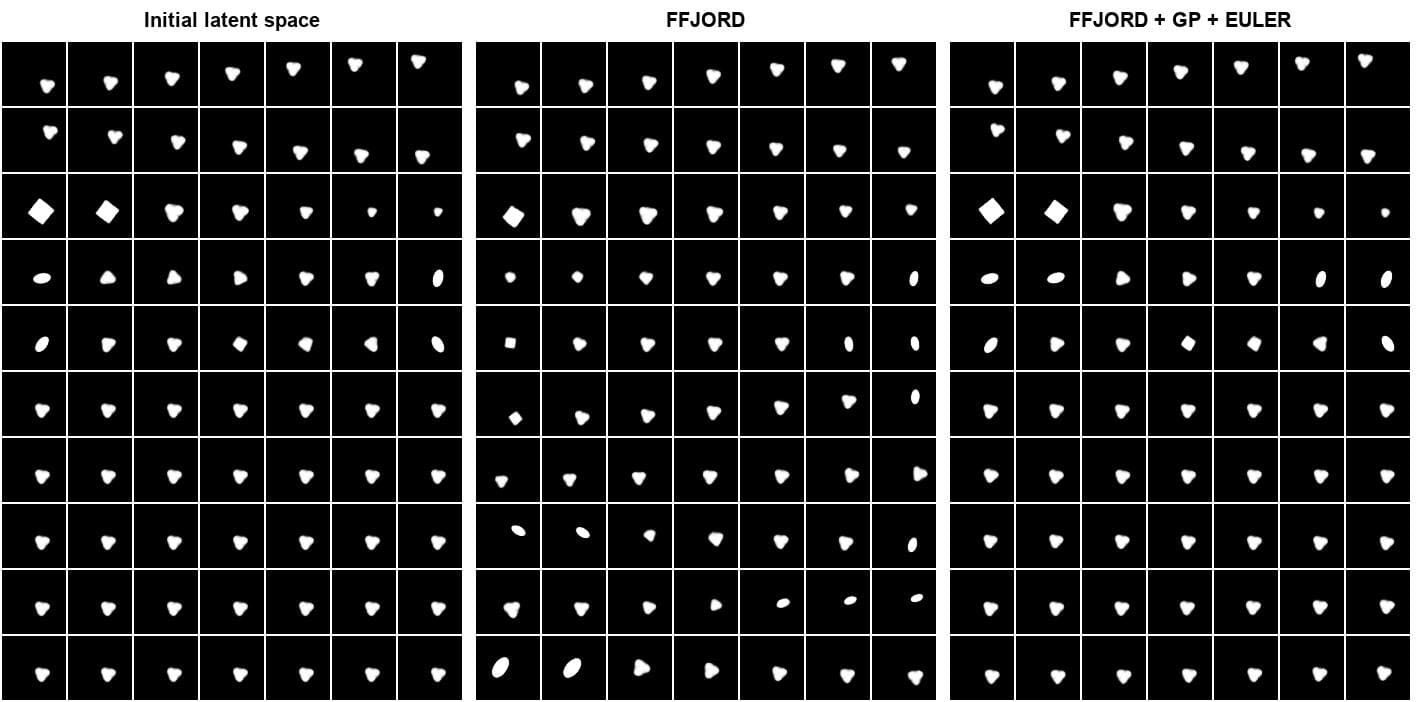}
  \includegraphics[scale=0.29]{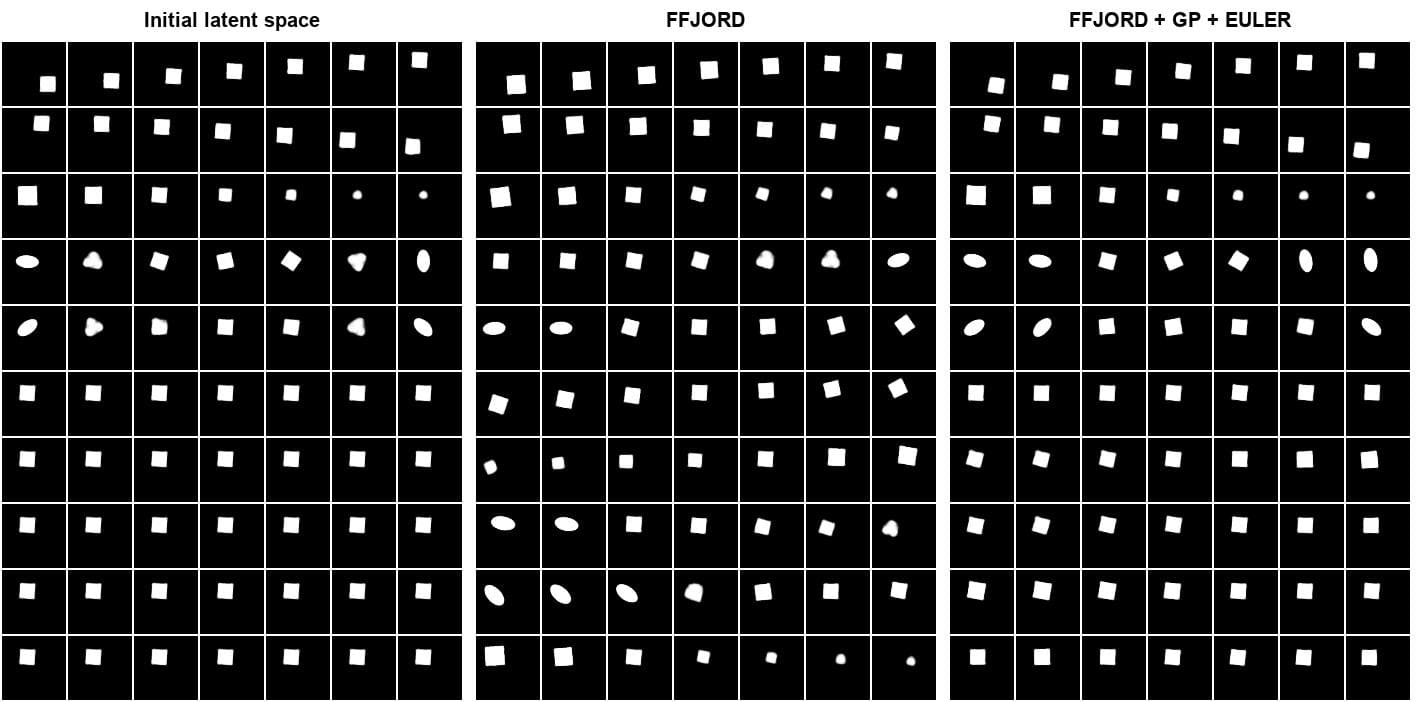}
  \includegraphics[scale=0.29]{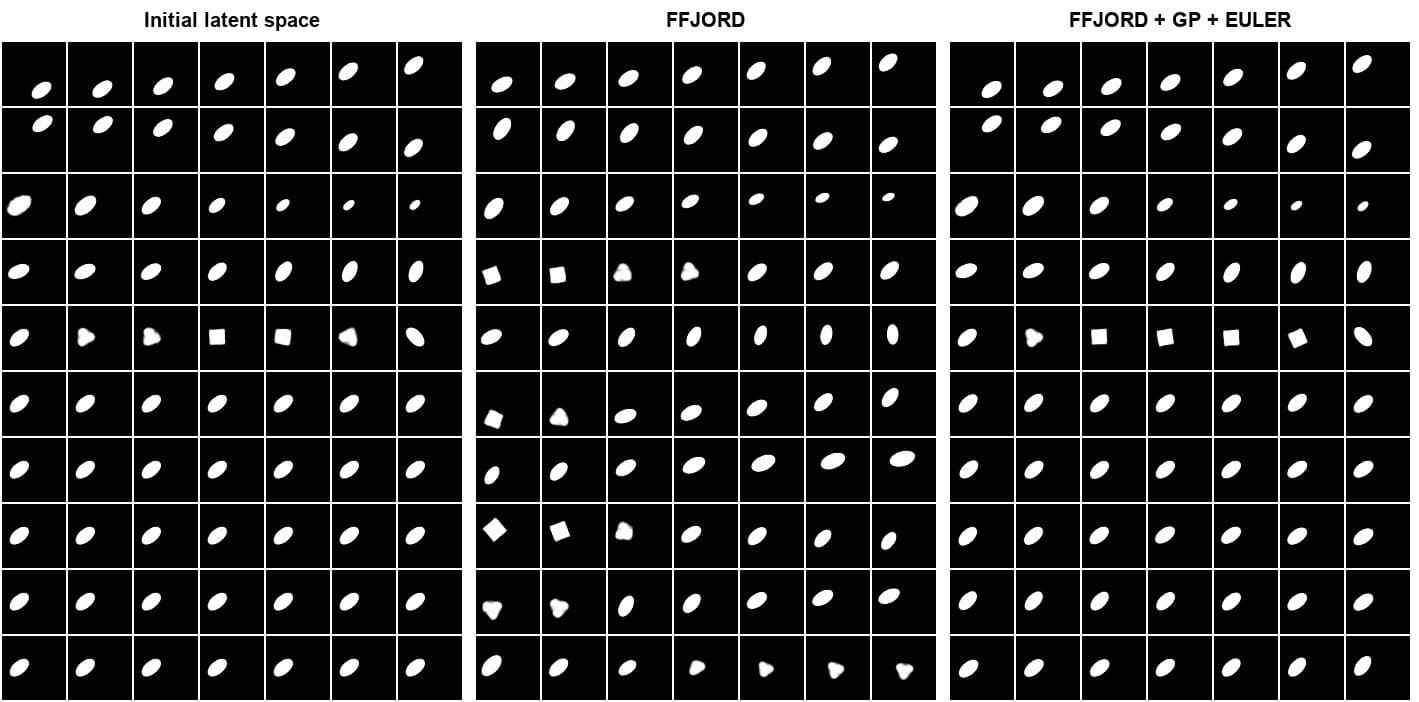}
  \vspace{-0.2cm}
  \caption{Examples of interpolation for the dSprites dataset where each block correspond to the interpolation of a different data point along the $10$ dimensions axis represented by the rows. The dimensions are sorted with respect to their KL divergence in the VAE latent space, so the higher rows carry more information while the last rows should leave the image unchanged.}
  \label{fig:dsprites-interp2}
\end{figure*}

\begin{figure*}[h]
  \centering
  \includegraphics[scale=0.29]{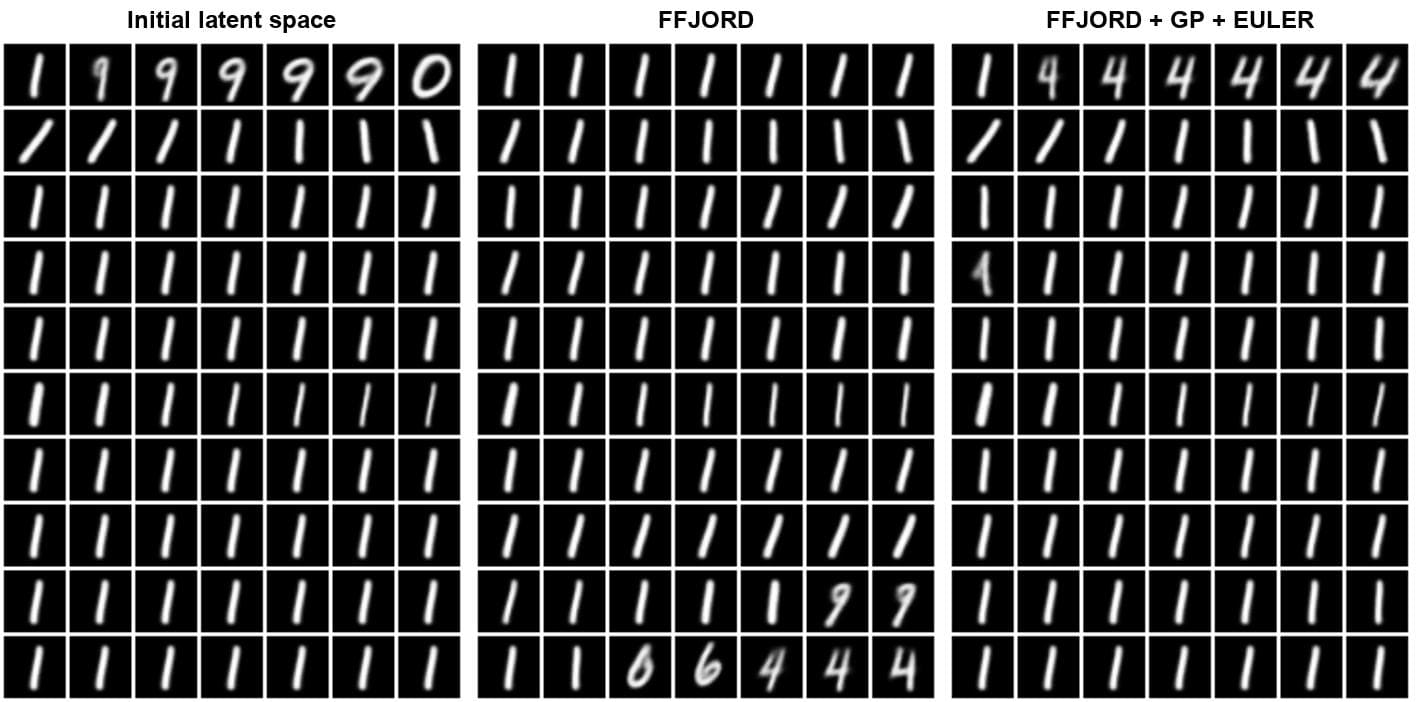}
  \includegraphics[scale=0.29]{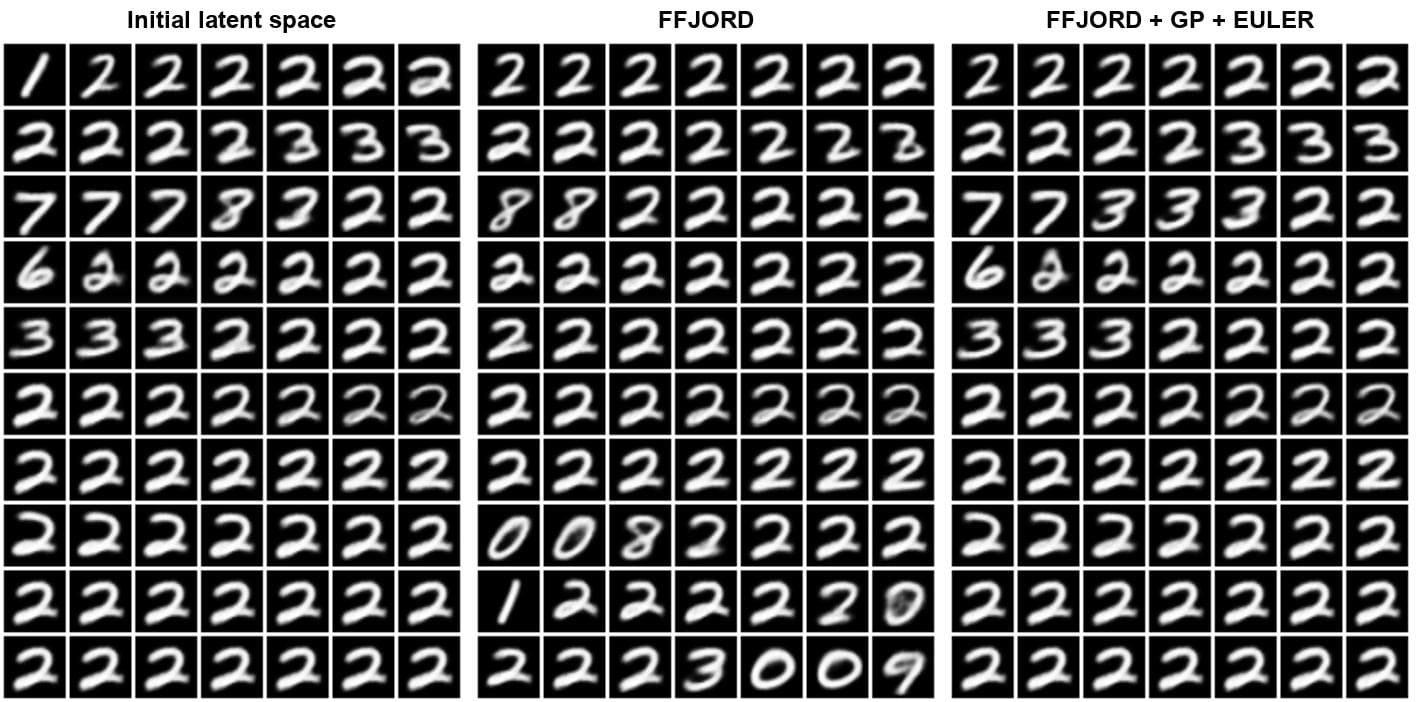}
  \includegraphics[scale=0.29]{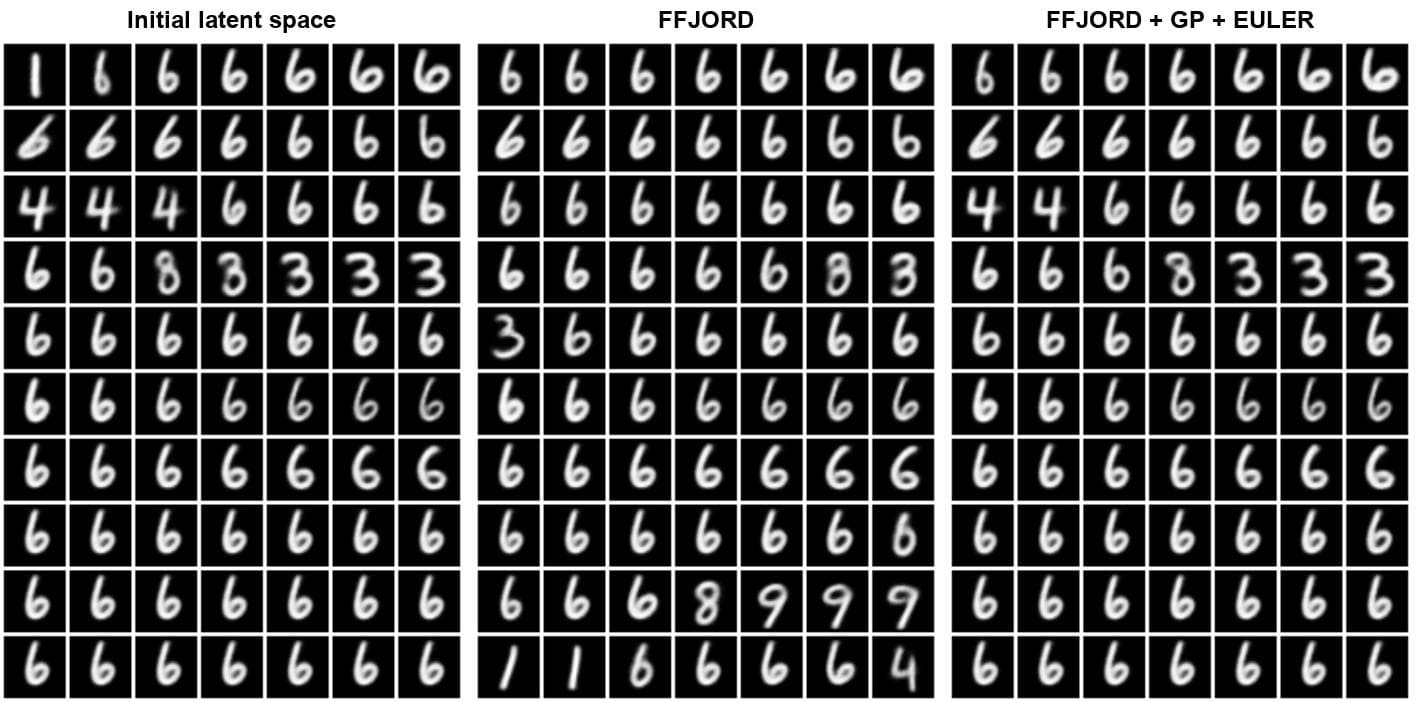}
  \vspace{-0.2cm}
  \caption{Examples of interpolation for the MNIST dataset where each block correspond to the interpolation of a different data point along the $10$ dimensions axis represented by the rows. The dimensions are sorted with respect to their KL divergence in the VAE latent space, so the higher rows carry more information while the last rows should leave the image unchanged.}
  \label{fig:mnist-interp1}
\end{figure*}

\begin{figure*}[h]
  \centering
  \includegraphics[scale=0.29]{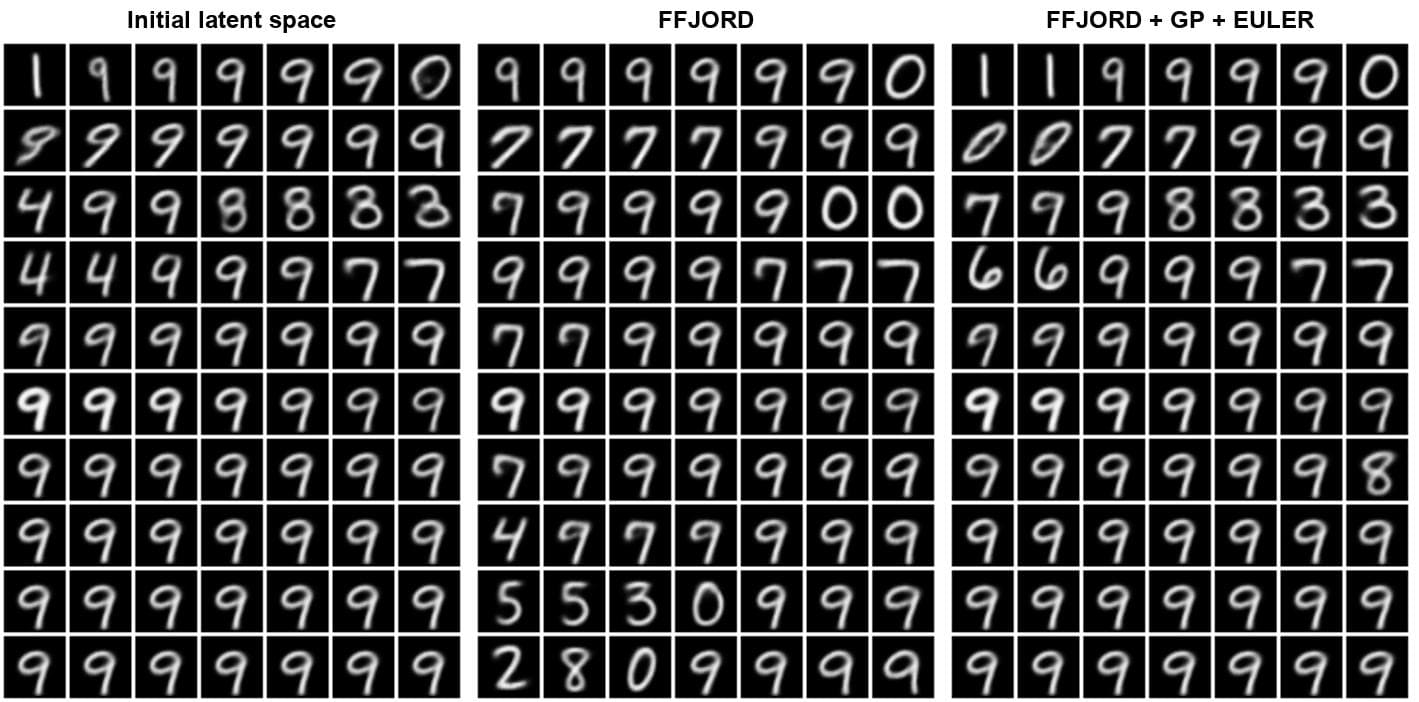}
  \includegraphics[scale=0.29]{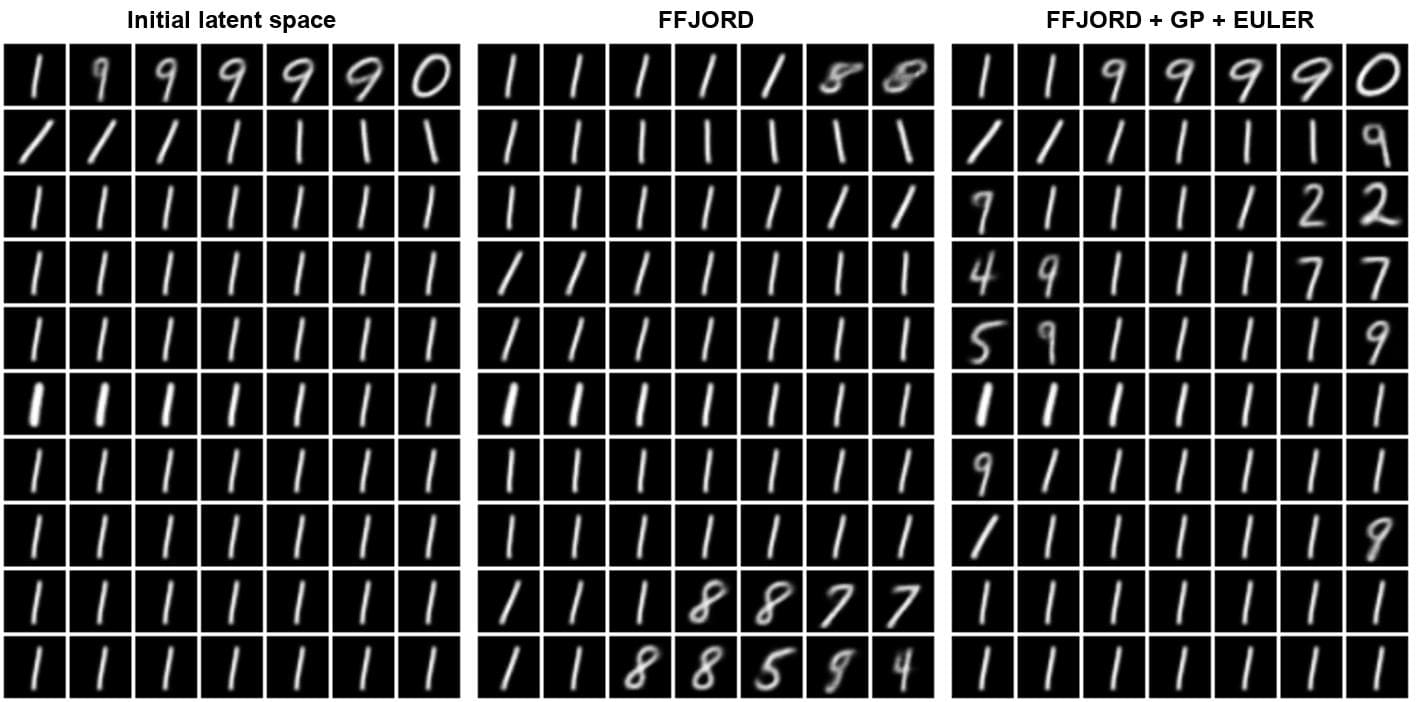}
  \includegraphics[scale=0.29]{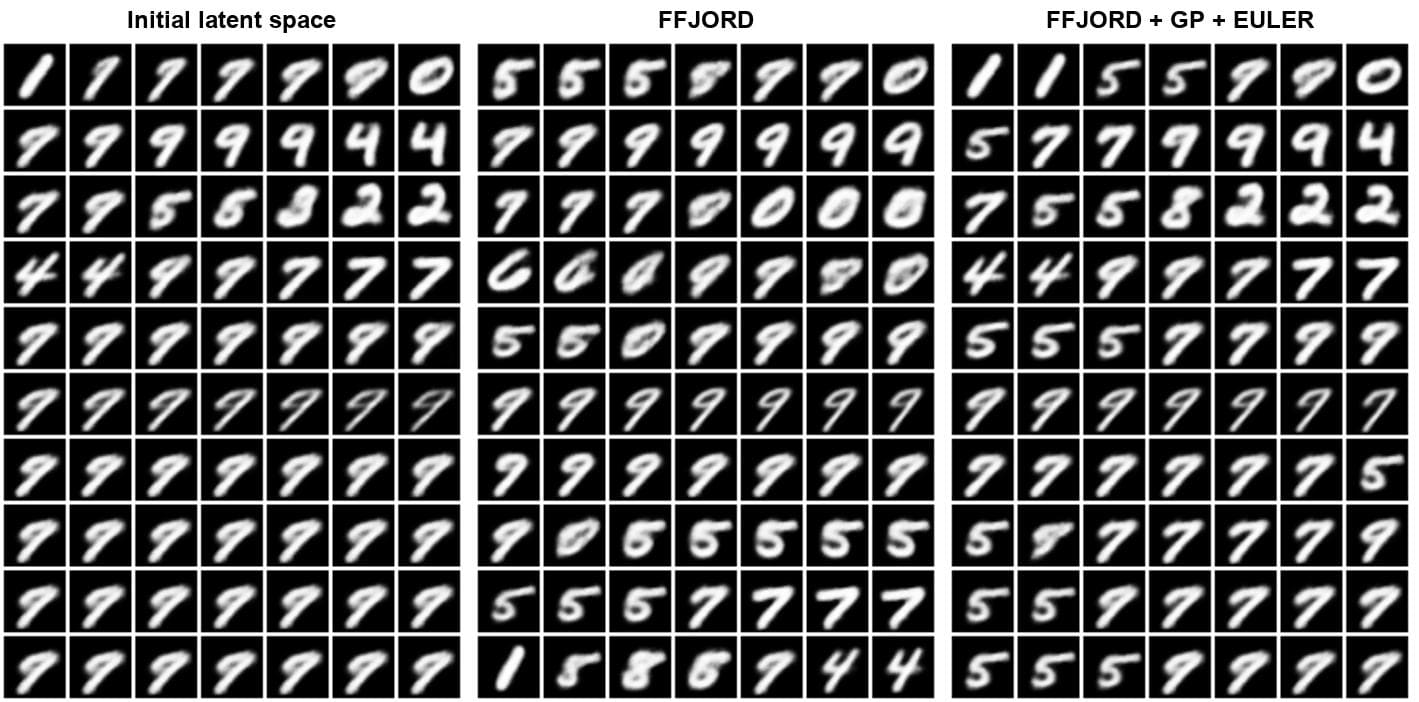}
  \vspace{-0.2cm}
  \caption{Examples of interpolation for the MNIST dataset where each block correspond to the interpolation of a different data point along the $10$ dimensions axis represented by the rows. The dimensions are sorted with respect to their KL divergence in the VAE latent space, so the higher rows carry more information while the last rows should leave the image unchanged.}
  \label{fig:mnist-interp2}
\end{figure*}

\begin{figure*}[h]
  \centering
  \includegraphics[scale=0.29]{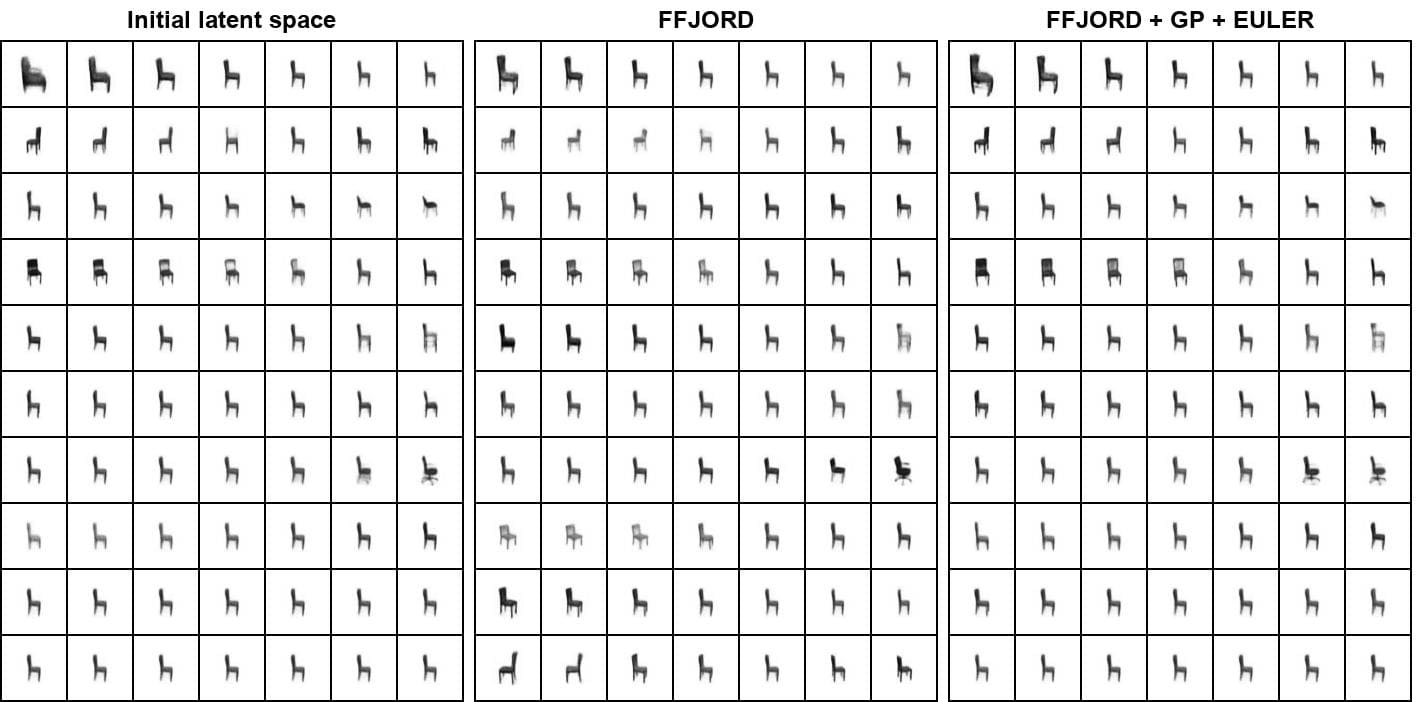}
  \includegraphics[scale=0.29]{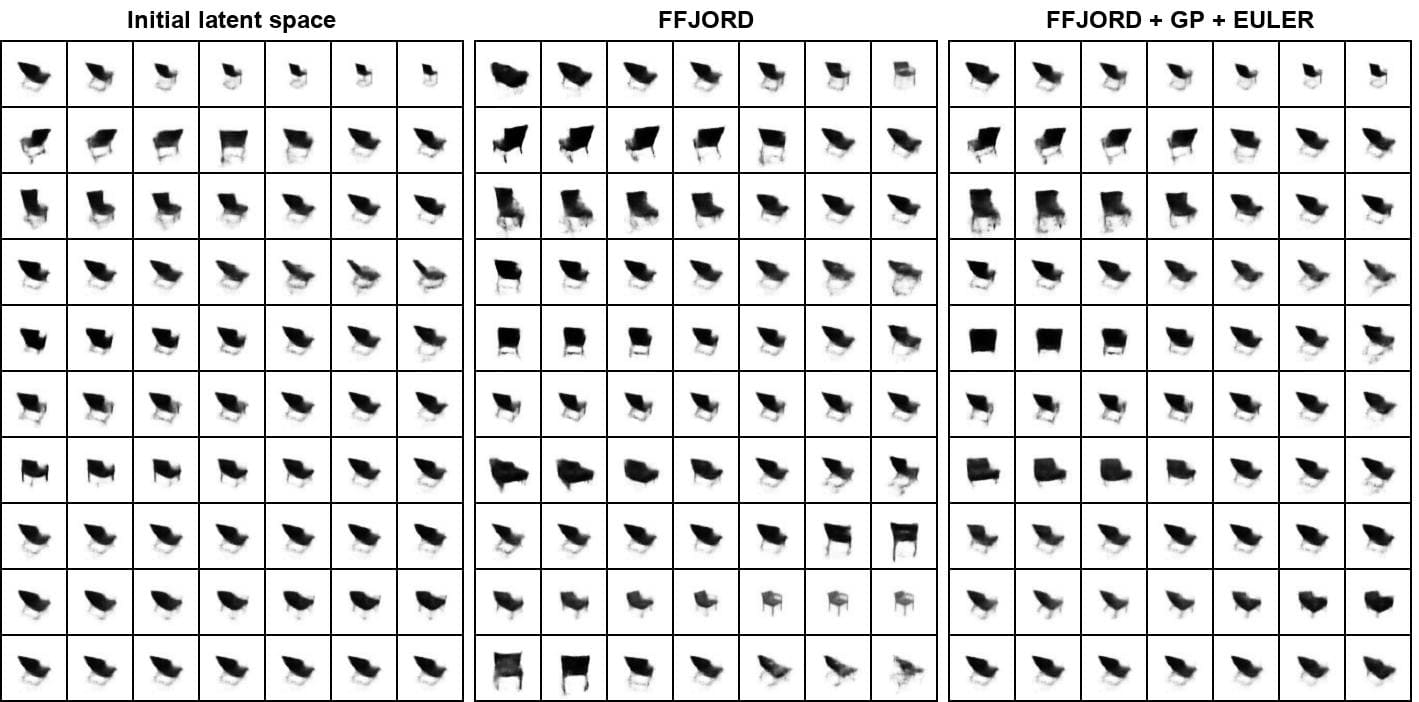}
  \includegraphics[scale=0.29]{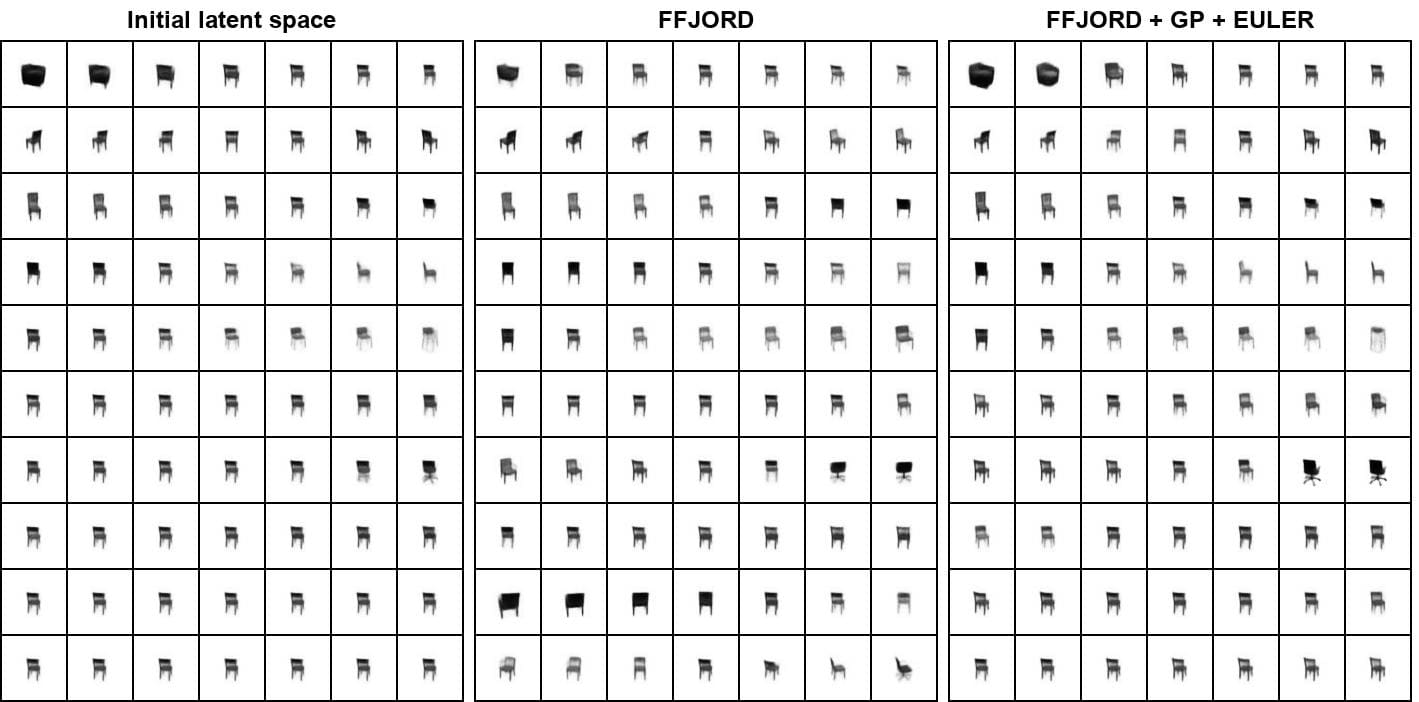}
  \vspace{-0.2cm}
  \caption{Examples of interpolation for the chairs dataset where each block correspond to the interpolation of a different data point along the $10$ dimensions axis represented by the rows. The dimensions are sorted with respect to their KL divergence in the VAE latent space, so the higher rows carry more information while the last rows should leave the image unchanged.}
  \label{fig:chairs-interp1}
\end{figure*}

\begin{figure*}[h]
  \centering
  \includegraphics[scale=0.29]{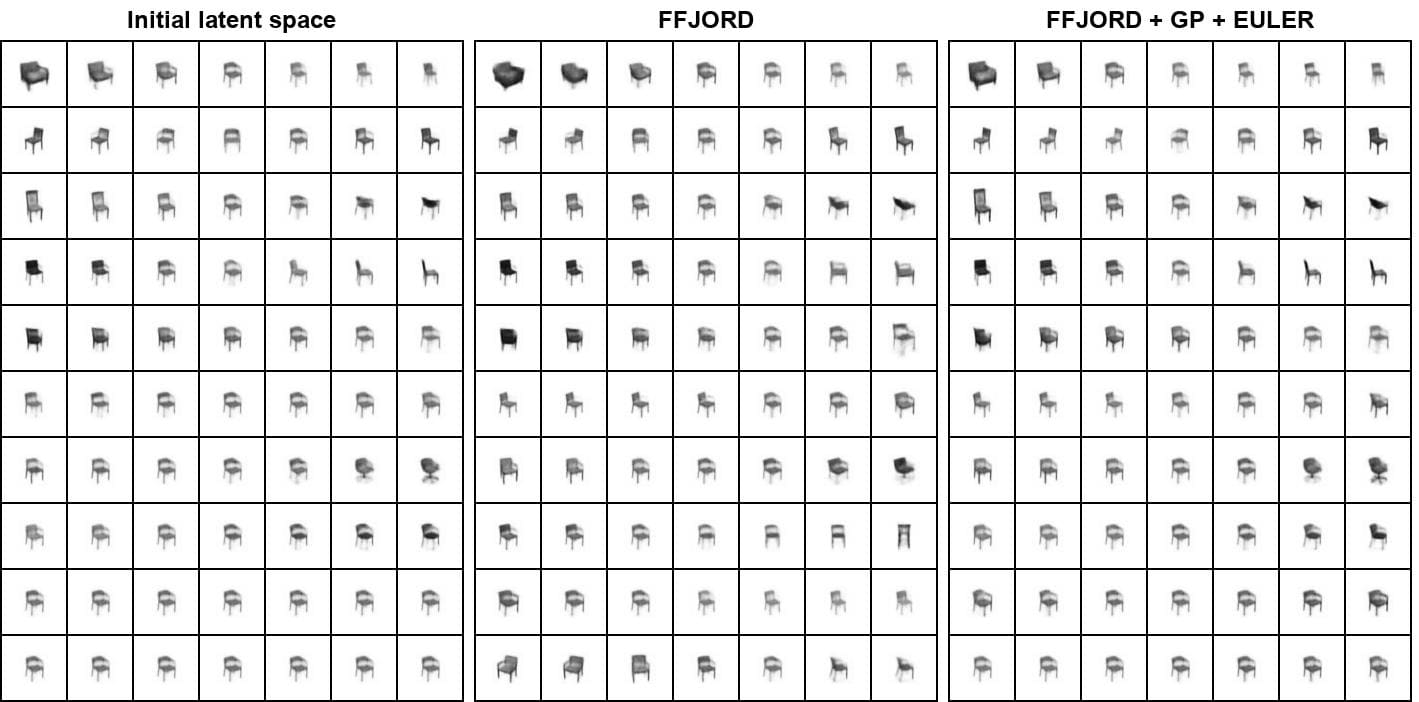}
  \includegraphics[scale=0.29]{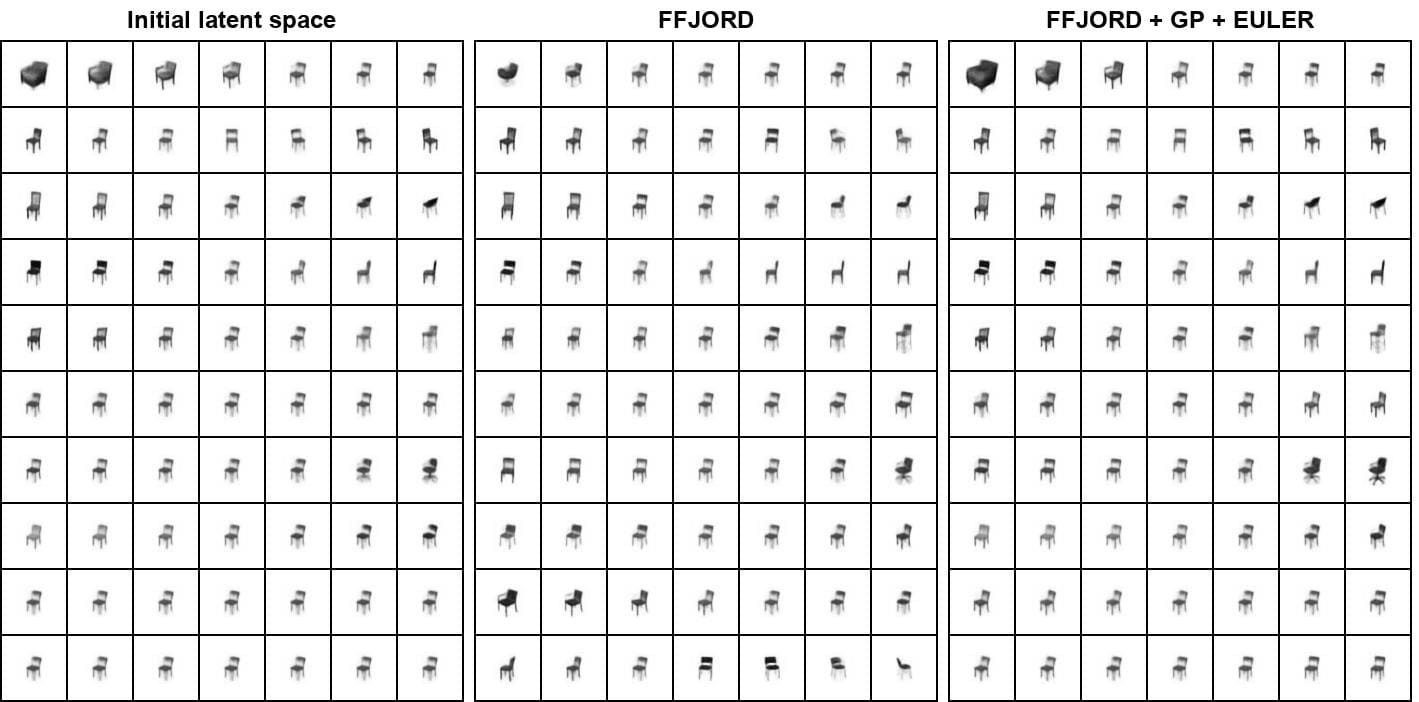}
  \includegraphics[scale=0.29]{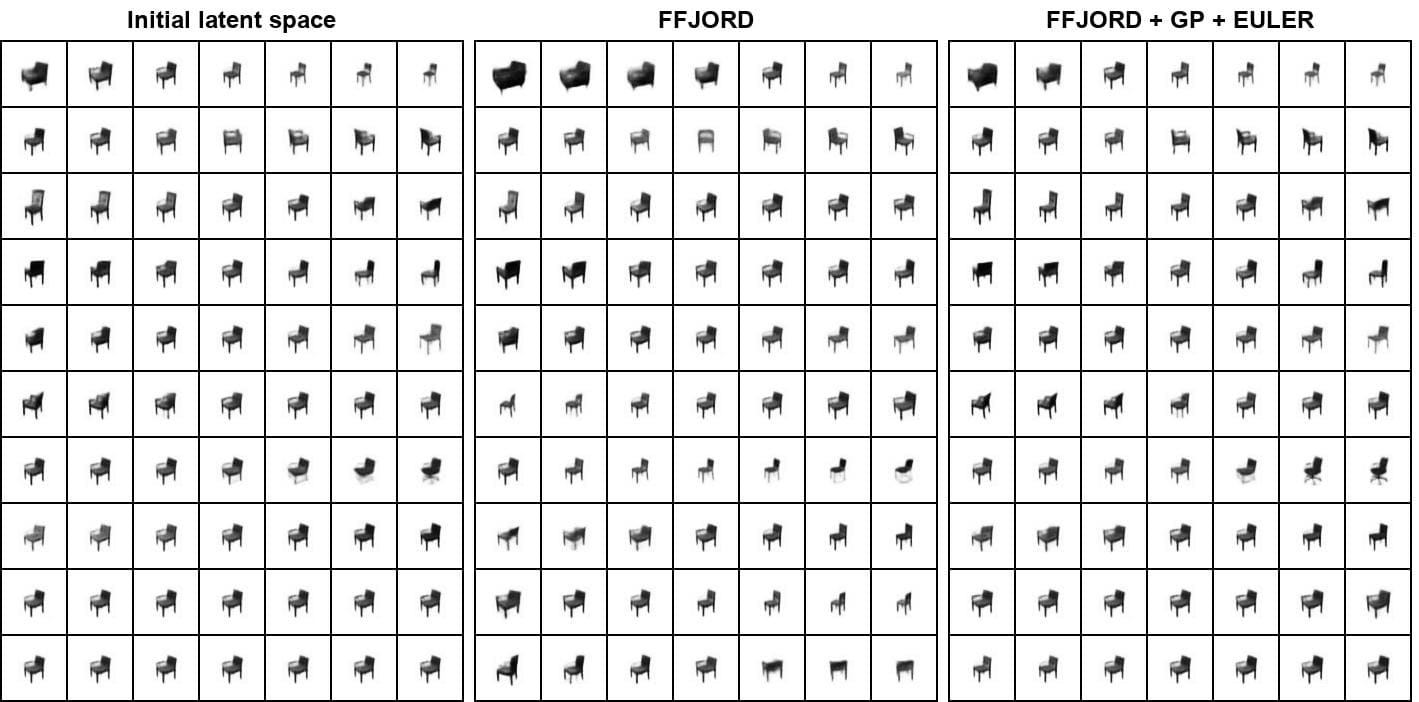}
  \vspace{-0.2cm}
  \caption{Examples of interpolation for the chairs dataset where each block correspond to the interpolation of a different data point along the $10$ dimensions axis represented by the rows. The dimensions are sorted with respect to their KL divergence in the VAE latent space, so the higher rows carry more information while the last rows should leave the image unchanged.}
  \label{fig:chairs-interp2}
\end{figure*}

\end{document}